\newcommand{\conv}{\mathrm{conv}}
\newcommand{\RR}{\mathbb{R}}
\newcommand{\NN}{\mathbb{N}}
\newcommand{\partialc}{\partial^c}
\newtheorem{theorem}{Theorem}
\newtheorem{lemma}{Lemma}
\newtheorem{proposition}{Proposition}
\newtheorem{corollary}{Corollary}
\newtheorem{definition}{Definition}
\newtheorem{remark}{Remark}
\newenvironment{proof}[1][]{\noindent {\bf Proof #1:\;}}{\hfill $\Box$}
\newcommand{\jac}{\mathrm{Jac}\,}
\newcommand{\R}{\mathbb{R}}
\newcommand{\customlabel}[2]{%
   \protected@write \@auxout {}{\string \newlabel {#1}{{#2}{\thepage}{#2}{#1}{}} }%
   \hypertarget{#1}{#2}
}
\title{Path differentiability of ODE flows}
\begin{document}
\author{
Swann Marx$^{1}$ and Edouard Pauwels$^{2}$		 
}

\footnotetext[1]{CNRS UMR 6004, LS2N, \'Ecole Centrale de Nantes, F-44000 Nantes, France.}
\footnotetext[2]{IRIT, Universit\'e de Toulouse, CNRS, 118 route de Narbonne, F-31400 Toulouse, France.}

\date{}

\maketitle

\begin{abstract}
    We consider flows of ordinary differential equations (ODEs) driven by path differentiable vector fields. Path differentiable functions constitute a proper subclass of Lipschitz functions which admit conservative gradients, a notion of generalized derivative compatible with basic calculus rules. Our main result states that such flows inherit the path differentiability property of the driving vector field. We show indeed that forward propagation of derivatives given by the sensitivity differential inclusions provide a conservative Jacobian for the flow. This allows to propose a nonsmooth version of the adjoint method, which can be applied to integral costs under an ODE constraint. This result constitutes a theoretical ground to the application of small step first order methods to solve a broad class of nonsmooth optimization problems with parametrized ODE constraints. This is illustrated with the convergence of small step first order methods based on the proposed nonsmooth adjoint.
\end{abstract}

\section{Introduction}

\subsection{General context}
We consider the ordinary differential equation (ODE for short), for some $T > 0$
\begin{align}
				\dot{X}(t) &= F(X(t)), \qquad \forall t \in [0,T]
				\label{eq:mainODE}\\
				X(0) &= x, \nonumber
\end{align}
where $F \colon \RR^p \rightarrow \RR^p$ is a Lipschitz function and $x \in \RR^p$. We denote by $\phi\colon \RR^p \times [0,T] \to \RR^p$ the corresponding flow which associates to $(x,t) \in \RR^p \times [0,T]$ the value $X(t)$ where $X\colon \RR \to \RR^p$ is the solution to \eqref{eq:mainODE} with $X(0) = x$. The flow $\phi$ typically inherits the regularity of $F$. For example if $F$ is $C^1$, then $\phi$ is also $C^1$ (see \textit{e.g.} \cite[Section 17.6]{hirsch2012differential}). 

In our setting, the flow $\phi$ is Lipschitz (see \textit{e.g.}, \cite[Section 17.4]{hirsch2012differential}). A notion of generalized derivative adapted to Lipschitz function is due to Clarke. The Clarke Jacobian takes values in subsets of $\RR^{p \times p}$. We denote by $J_F \colon \RR^p \rightrightarrows \RR^{p \times p}$ the Clarke Jacobian of $F$ \cite[Section 2.6]{clarke1983optimization}. For $x\in \RR^p$, it is defined as follows
\begin{align*}
     J^c_F(x) = \mathrm{conv} \left\{ v \in \RR^{p \times p},\, \{x_k\}_{k \in \NN} \subset R,\, x_k \to x,\, \mathrm{Jac}_F(x_k) \to v, k \to \infty \right\},
\end{align*}
where $R$ is any full measure set where $F$ is differentiable and $\mathrm{Jac}_F$ is the usual Jacobian of $F$. Our main question of interest is to obtain generalized derivatives of $\phi$ from the knowledge of Clarke Jacobian of $F$. This question requires to take a more detailed look at the regularity of $F$.

\subsection{Path differentiability of the flow}

The class of Lipschitz functions $F$ is too large for our purpose. Indeed, for generic Lipschitz $F$ the Clarke Jacobian, $\partialc F$ carries no information about the function itself  \cite{wang1995pathological,borwein2001generalized,borwein2017genralisations}. In particular, it is proved in \cite{borwein2001generalized} that generic $1$-Lipschitz functions have the same constant subgradient. 

Therefore, to obtain meaningful calculus rules, we need to restrict $F$ to be in a well behaved subclass. We choose the class of path differentiable functions, which was identified by several authors to be well behaved from a nonsmooth analysis perspective \cite{valadier1989entrainement,borwein1998chain,bolte2020conservative}. 
Let us emphasize that, although this is a negligible subclass of Lischitz functions, it is ubiquitous in potential applications as all semi-algebraic functions (more generally definable functions) are path differentiable \cite{bolte2020conservative}. This encompasses virtually any function Lipschitz $F$ which can be written using an elementary logical formula involving elementary real operations including powers, exponential, logarithms, quotients, including large classes of numerical programs \cite{bolte2020mathematical}.

Following \cite{bolte2020conservative}, $F$ is called path differentiable, if it satisfies a chain rule along absolutely continuous curves: for any absolutely continuous $\gamma \colon [0,1] \to \RR^p$, we have for almost all $t \in [0,1]$,
\begin{equation*}
\frac{d}{dt} F(\gamma(t)) = D \dot{\gamma}(t),\qquad \forall D\in J^c_F(\gamma(t)) \subset \RR^{p\times p}.
\end{equation*}
In a first step, we will be interested in the following question regarding regularity of $\phi$
\begin{center}
    Does path differentiability  of $F$ imply path differentiability of $\phi$?
\end{center}
We provide a positive answer to this question. The result is stated in Corollary \ref{cor:flowPathDifferentiable}. The proof is based on a differential inclusion which generalizes the variational equation for smooth ODEs (see for example \cite[Section 17.6]{hirsch2012differential}) to the Lipschitz vector field $F$.
This variational inclusion is described in \cite[Section 7.4]{clarke1983optimization}. For any $x \in \RR^p$, the latter is defined by the differential inclusion
\begin{align}
				\dot{V}(t) &\in J^c_F(\phi(x,t)) V(t) \nonumber, \text{ for almost all } t \in [0,T]\\
				V(0) &= I \in \RR^{p \times p}
				\label{eq:sensitivityDI}.
\end{align}
where $V$ is to be found among absolutely continuous functions from $[0,T]$ to $\RR^{p \times p}$. Equation \eqref{eq:sensitivityDI} can be seen as a formal differentiation of equation \eqref{eq:mainODE}. As proved in \cite[Theorem 7.4.1]{clarke1983optimization}, the Clarke Jacobian of the flow $J^c_\phi$ is to be found among the solutions of \eqref{eq:sensitivityDI}. More precisely, denoting by $\psi$ the function $x \mapsto \phi(x,T)$, we have 
$$J^c_\psi(x) \subset U(x) := \{V(T),\, V \text{ solution of \eqref{eq:sensitivityDI} }\}$$ 
for all $x \in \RR^p$. However, as shown in \cite[Example 3.8]{barton2018computationally}, this inclusion can be strict even for a relatively simple $F$ in $\RR^2$ (see Section \ref{sec:counterEx}). Following \cite{bolte2020conservative}, path differentiability of the flow is characterized by existence of a conservative Jacobian for $\psi$. More precisely, we show that the set valued map $U$, despite not being necessarily equal to the Clarke Jacobian of $\psi$, still satisfies the chain rule: for any absolutely continuous $\gamma \colon [0,1] \to \RR^p$, we have for almost all $t \in [0,1]$,
\begin{equation*}
\frac{d}{dt} \psi(\gamma(t)) = D \dot{\gamma}(t),\qquad \forall D\in   U(\gamma(t)) \subset \RR^{p\times p}.
\end{equation*}
This is the result given in Theorem \ref{th:flowPathDifferentiable}. Thanks to this chain rule property, $U$ is a conservative Jacobian of $\psi$, characterized as the set of solutions to a sensitivity differential inclusion. The existence of such a conservative Jacobian implies that $\psi$ inherits the path-differentiable regularity of $F$.

The mapping $U$ being a conservative Jacobian of $\psi$ has several consequences for the flow. For example, as given in \cite[Corollary 5]{bolte2020conservative}, we have for Lebesgue almost all $x \in \RR^p$
\begin{align*}
    U(x) = \{ \jac \psi (x)\},
\end{align*}
which means that the differential inclusion \eqref{eq:sensitivityDI} provides a unique matrix that is the (classical) Jacobian of $\psi$. This allows to draw a connection with more classical notions of generalized derivatives. For example, using \cite[Theorem 6.5]{evans2015measure}, the mapping $U$ (restricted to the set where it is a singleton) can be interpreted as a weak derivative of the flow $\psi$ in the sense of Sobolev spaces.

\subsection{Optimizing integral costs under ODE constraints}

First motivations to address path differentiability of the flow relates to optimization problems of the form
\begin{align}
    \label{eq:integralintro}
	\min_{\theta \in \RR^m}\quad L(\theta):= \quad  &\int_{t=0}^{t=T} \ell(Z(t))dt + \ell_T(Z(T)),\\
	\mathrm{where} \quad & Z(0) = \bar{z} \nonumber\\
	& \dot{Z}(t) = H(Z(t), \theta),\quad \forall t \in [0,T]\nonumber
\end{align}
where $\ell:\mathbb{R}^p\rightarrow \mathbb{R}$, $\ell_T:\mathbb{R}^p\rightarrow \mathbb{R}$ and $H \colon \RR^{p} \times \RR^m \to \RR^p$ are Lipschitz, path differentiable functions and $\bar{z} \in \RR^p$ is fixed. Note moreover that the flow depends on a given parameter $\theta\in\mathbb{R}^m$. The decision variable in problem \eqref{eq:integralintro} is a parameter vector $\theta$. Such an optimization problem appears in many applications such as machine learning \cite{chen2018neural}, data assimilation \cite{lewis2006dynamic} or geophysics \cite{plessix2006review}.  

We will consider first order methods of gradient type to tackle problem \eqref{eq:integralintro} algorithmically. These methods generate sequences by recursively following negative gradient directions. The function $L$ is Lipschitz and possibly nonsmooth so we need to use a generalized notion of gradient. The integral part of the loss $L$ consists in a composition of the flow and a Lipschitz integral cost. However, Clarke Jacobian of the flow may be strictly contained in solutions of the variational inclusion \eqref{eq:sensitivityDI}, see  \cite[Example 3.8]{barton2018computationally}. Fortunately, conservative gradients can be used in place of usual gradients in a nonsmooth optimization context, provided that the objective function is path differentiable \cite{bolte2020conservative,bolte2020long}. Therefore, the main questions we need to adress are the following:
\begin{center}
    Is the loss $L$ path differentiable? How to obtain a conservative gradient for $L$?
\end{center}

We leverage our main result on path differentiability of the flow, and the compatibility of conservative Jacobian with calculus rules to show that $L$ is indeed path differentiable. More precisely, we show that a formal differentiation of $L$ (application of integral differentiation rules which hold in the smooth case) using solutions of the variational inclusion \eqref{eq:sensitivityDI} provides a conservative gradient for $L$, this is described in Corollary \ref{cor:conservativeJacIntParam}. 

Numerical computation of a solution of the variational inclusion \eqref{eq:sensitivityDI}, for example using Euler discretization \cite{dontchev1992difference}, requires to solve a differential inclusion of size $p \times p$. In the context of smooth ODEs, it is known that the size of the system to be solved can be reduced to $p$ by using the adjoint method (see e.g., \cite{cao2003adjoint}) at the cost of solving an ODE backward in time. We derive a nonsmooth counterpart of the adjoint system using the conservative Jacobian framework and show that solutions to the adjoint system are elements of the conservative gradient for the loss $L$ given in Corollary \ref{cor:conservativeJacIntParam}. This is described in Corollary \ref{cor:ajointParam}. 

Application of known results in nonsmooth optimization \cite{bolte2020long} show that using the prosposed conservative gradient in place of a gradient in a small step first order method context induces a minimizing behavior and generates sequences attracted by sets defined by an optimality condition. In other word, the output given by the proposed adjoint methods may be used as a first order optimization oracle to implement gradient type methods for the problem \eqref{eq:integralintro}. This result is formally described in Corollary \ref{cor:convergenceSmallStep}.

\subsection{Related work}
Combination of adjoint differentiation and small step methods of gradient type is at the heart of numerical methods for training neural ordinary differential equations models \cite{chen2018neural,dupont2019augmented}. Our results provide a theoretical ground for these approaches for which dedicated numerical librairies exist and are broadly used, such as \texttt{torchdiffeq} in \texttt{python}. These constitute one of the motivations for our investigation. 

The use of Clarke's generalized derivatives in a dynamical systems context has been at the heart of nonsmooth analysis developments \cite{clarke1983optimization}, in variational analysis \cite{clarke1975euler} and stability analysis \cite{clarke1998asymptotic,acary2008numerical}. More recent contributions include existence and Lipschitz regularity of nonsmooth differential algebraic equations \cite{stechlinski2017dependence} and generalizations in Wasserstein space \cite{bonnet2021differential}. 

The variational inclusion dates back to the work of Clarke \cite{clarke1983optimization}. Providing meaning to this equation has been an active topic of research. Let us mention the work of \cite{pang2009solution} which prove semismoothness of the flow induced by semismooth gradient fields. In this case the variational inclusion becomes an equation and allows to obtain directional derivatives. This result was extended by
\cite{khan2014generalized} to handle possibly discontinuous time dependency and lexicographic derivatives \cite{nesterov2005lexicographic}. Deducing lexicographic derivatives from variational equation was extended to differential algebraic equations in \cite{stechlinski2016generalized}. All these works are centered around notions of directional derivatives and forward derivative propagation. We are not aware of further interpretations of the variational inclusion \eqref{eq:sensitivityDI} beyond directional derivatives and forward propagation. In an optimization context, directional derivatives are not sufficient as one needs to find candidate descent directions. This constitutes another motivation for the proposed developments.

\subsection{Organization}

The paper is organized as follows. Section \ref{sec_presentation} provides notations, definitions and details about the example of Clarke Jacobian forward propagation failure in \cite[Example 3.8]{barton2018computationally}. Section \ref{sec_preliminary} contains preliminary results with their proofs. Section \ref{sec_flow} is devoted to the first main result, the flow of \eqref{eq:mainODE} inherits path differentiability of $F$. Section \ref{sec_adjoint} shows that integral costs in optimization with ODE constraints are path differentiable as soon as the the loss function is path differentiable. It is also proved that the adjoint method can be applied in this context to estimate elements of the corresponding conservative gradient. Section \ref{sec_neural} is devoted to an extension of these results, from initial conditions dependency to the more general parametric case described in \eqref{eq:integralintro}. The latter includes also convergence guaranties for the small step gradient like method. Some concluding remarks are collected in Section \ref{sec_conclusion} together with further research lines. Finally, Appendix \ref{sec_appendix} gathers technical results used throughout the paper.

\section{Notation and definitions}

\label{sec_presentation}

\paragraph{Notation.} 
Set $\RR_+ = [0,\infty)$. Given $p\in\mathbb{N}$, we will denote $\Vert \cdot\Vert$ the norm and $\langle \cdot,\cdot\rangle$ the scalar product in $\mathbb{R}^p$. We will denote by $\Vert \cdot \Vert_{op}$ the operator norm for matrices, i.e. if $A\in\mathbb{R}^{p\times p}$, then $\Vert A\Vert_{op}:= \sup_{\Vert v\Vert\leq 1} \Vert Av\Vert$. The Frobenius norm is defined and denoted by $\Vert A\Vert_F:=\sqrt{\mathrm{Tr}(A^\top A)}$, where $A\in\mathbb{R}^{p\times p}$, $\mathrm{Tr}$ is the trace operator and $A^\top$ is the transpose of $A$. The supremum norm is denoted by $\Vert \cdot \Vert_{\infty}$.

We recall that, due to Rademacher theorem \cite[Theorem 3.1]{evans2015measure}, any locally Lipschitz function is almost everywhere differentiable. \emph{Absolutely continuous} curves $\gamma:\mathbb{R}\rightarrow \mathbb{R}^p$ are functions admitting a Lebesgue integrable derivative (defined for almost all $t\in \mathbb{R}$), such that for any $t\geq 0$: 
$$
\gamma(t) - \gamma(0) = \int_0^t \dot{\gamma}(s) ds.
$$
Given three metric spaces $H$, $S$ and $Y$, a \emph{Carath\'edory function} $f \colon (x,t)\in H\times S \mapsto f(x,t) \in Y$ is a function such that $x\mapsto f(x,t)$ is Borel measurable for each $t\in S$ and such that $t\mapsto f(x,t)$ is continuous for each $x\in H$. We say that a function $f:\mathbb{R}^p\rightarrow \mathbb{R}$ is \emph{lower semi-continuous}, if for every every sequence $(x_k)_{k\in\mathbb{N}}\subset \mathbb{R}^p$ such that $\lim_{k\rightarrow \infty} x_k = \bar x$, one has $f(\bar{x}) \leq \liminf_{k\rightarrow + \infty} f(x_k)$.  

A set valued map $D:\mathbb{R}^p\rightrightarrows \mathbb{R}^q$ is a function from $\mathbb{R}^p$ to a subset of $\mathbb{R}^q$. We say that $D$ has a \emph{closed graph} if, for any convergent sequences $(x_k)_{k\in\mathbb{N}}\subset \mathbb{R}^p$ and $(v_k)_{k\in\mathbb{N}}\subset \mathbb{R}^q$, with $v_k\in D(x_k)$, one has $\lim_{k\rightarrow \infty} v_k \in D(\lim_{k\rightarrow + \infty} x_k)$.

\paragraph{Path differentiablity and conservative Jacobians.}

The notion of path differentiable functions has been introduced in \cite{bolte2020conservative}, this class of regularity allows to apply basic differential calculus rules such as the chain rule. As explained in \cite{bolte2020conservative}, the notion of \emph{conservativity}, defined just below, is crucial to define path differentiable functions.

\begin{definition}[Conservative Jacobian]
Let $D:\:\mathbb{R}^p\rightrightarrows \RR^{n \times p}$ be a locally bounded, graph closed, nonempty valued map and $f:\mathbb{R}^p\rightarrow \RR^n$ be a locally Lipschitz continuous function. Then, $D$ is said to be a conservative Jacobian of $f$ if and only if, for any absolutely continuous curve $\gamma: [0,1] \rightarrow \RR^p$, the function $t\mapsto f(\gamma(t))$ satisfies, for almost all $t\in [0,1]$
\begin{equation*}
\frac{d}{dt} f(\gamma(t)) = V\dot{\gamma}(t),\: \forall V\in D(\gamma(t)).
\end{equation*}

Equivalently, $D$ is a conservative Jacobian of $f$ if and only if, for any measurable selection $V(t) \in D(\gamma(t))$ for all $t\in [0,1]$,:
\begin{equation*}
    f(\gamma(1)) - f(\gamma(0)) = \int_0^1  V(t)\dot{\gamma}(t) dt.
\end{equation*}
When $n=1$, we say that $D$ is a conservative gradient.
\end{definition}

Conservatives gradients are defined in the same way for real valued functions (see \cite{bolte2020conservative}). Throughout the paper, we require conservative gradients and Jacobians to be convex. This is not too restrictive due to the following remark.
\begin{remark}
        It follows from the definition that if $D$ is conservative, then its pointwise convex hull $x \rightrightarrows \conv\{D(x)\}$ is also conservative \cite{bolte2020conservative}. 
\end{remark} 
Conservativity leads to the notion of path differentiability:

\begin{definition}[Path differentiable function]
We say that $f:\RR^p \rightarrow \RR^n$ is path differentiable if there exists a set valued map $D$ such that $D$ is a conservative Jacobian for $f$. 
\end{definition}

\begin{remark}
    If $J_f$ is a conservative Jacobian for $f$, then we have $J^c_f(x) \subset \conv\{J_f(x)\}$ for all $x$ \cite{bolte2020conservative}, in particular $J^c_f$ is conservative. Hence $J^c_f$ being conservative is a characterization of path differentiability of $f$ as stated in the intoduction.
\end{remark}

\paragraph{Dynamical systems.}
Consider $F \colon \RR^p \rightarrow \RR^p$ the Lipschitz function given in \eqref{eq:mainODE} that is assumed path differentiable. We denote by $J_F \colon \RR^p \rightrightarrows \RR^p$ a bounded convex valued conservative Jacobian for the vector field $F$ which appears in \eqref{eq:sensitivityDI}. Throughout the paper, we denote by $K> 0$ a bound on the operator norm of $J_F$, that is, 
\begin{align}
   \sup_{x \in \RR^p, J \in J_F(x)} \| J\|_{\mathrm{op}} \leq K.
   \label{eq:boundJacobian}
\end{align}

We introduce the following map
\begin{align}
				U \colon \RR^p \times [0,T] &\rightarrow \RR^{p \times p} \nonumber\\
				(x,t) &\rightrightarrows V(t) \qquad V \text{ solution to \eqref{eq:sensitivityDI}} 
				\label{eq:candidateConservativeField},
\end{align}
which is a candidate for being a conservative Jacobian for the flow $\phi$ of \eqref{eq:mainODE}.
The main result of this paper is to show that $(x,t) \rightrightarrows (U(x,t), F(\phi(x,t))$ is a conservative Jacobian for the flow $\phi$, where we have used matrix concatenation.

Since $F$ is assumed to be Lipschitz, the set of solutions to \eqref{eq:sensitivityDI} is composed by Lipschitz functions, as stated in the following lemma. 

\begin{lemma}
    For any $x\in\mathbb{R}^p$, $T>0$, the set of solutions to \eqref{eq:sensitivityDI} is non empty and only contains $L$-Lipschitz functions with $L = K\sqrt{p}\exp(KT)$.
    \label{lem:lipschitzSolution}
\end{lemma}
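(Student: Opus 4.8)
The plan is to prove two things: existence (non-emptiness) of solutions to the sensitivity inclusion \eqref{eq:sensitivityDI}, and the uniform Lipschitz bound $L = K\sqrt{p}\exp(KT)$ on every such solution. For existence, I would invoke a standard existence theorem for differential inclusions with right-hand side $G(t,V) := J^c_F(\phi(x,t))\,V$. The driving field $\phi(x,\cdot)$ is continuous (indeed Lipschitz) in $t$, the Clarke Jacobian $J^c_F$ is locally bounded, nonempty-valued, convex-valued, and has closed graph (these are standard properties recalled in the excerpt), so the composite map $(t,V)\mapsto J^c_F(\phi(x,t))V$ is a Carath\'eodory-type set-valued map that is upper semicontinuous in $V$, convex compact valued, and measurable in $t$; moreover it satisfies a linear growth bound $\|J^c_F(\phi(x,t))V\|_F \le K\sqrt{p}\,\|V\|_F$ via \eqref{eq:boundJacobian}. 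The classical existence result for such inclusions (e.g. via a measurable selection and Filippov-type or Aubin--Cellina existence theorems) then guarantees at least one absolutely continuous solution $V$ on all of $[0,T]$; the linear growth precludes finite-time blow-up, so the solution is global.

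For the Lipschitz bound, I would first control the magnitude $\|V(t)\|$ and then deduce the Lipschitz constant of $V$ itself. Let $V$ be any solution. For almost every $t$ we have $\dot V(t) = D(t)V(t)$ for some measurable selection $D(t)\in J^c_F(\phi(x,t))$, hence $\|\dot V(t)\|_F \le \|D(t)\|_{op}\,\|V(t)\|_F \le K\|V(t)\|_F$. Writing $g(t) := \|V(t)\|_F$, which is absolutely continuous, one gets $g'(t) \le \|\dot V(t)\|_F \le K g(t)$ almost everywhere, with $g(0)=\|I\|_F=\sqrt{p}$. Gr\"onwall's inequality then yields $g(t) \le \sqrt{p}\exp(Kt) \le \sqrt{p}\exp(KT)$ for all $t\in[0,T]$.

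Finally, to pass from the bound on $\|V(t)\|_F$ to a Lipschitz estimate, I would use the integral form. For $0\le s\le t\le T$,
\begin{equation*}
\|V(t)-V(s)\|_F = \Bigl\|\int_s^t \dot V(\tau)\,d\tau\Bigr\|_F \le \int_s^t \|D(\tau)\|_{op}\,\|V(\tau)\|_F\,d\tau \le \int_s^t K\sqrt{p}\exp(K\tau)\,d\tau \le K\sqrt{p}\exp(KT)\,(t-s),
\end{equation*}
which exhibits $V$ as $L$-Lipschitz with $L = K\sqrt{p}\exp(KT)$, matching the claim. The same estimate applies to the $s>t$ case by symmetry.

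The main obstacle is the existence part rather than the Lipschitz estimate. The Lipschitz bound is a routine Gr\"onwall-plus-integral-form computation once a solution is in hand. The delicate point is verifying that the hypotheses of a differential inclusion existence theorem genuinely apply to $(t,V)\mapsto J^c_F(\phi(x,t))V$: one must confirm measurability in $t$ (inherited from continuity of $\phi(x,\cdot)$ and closed-graph/upper-semicontinuity of $J^c_F$), upper semicontinuity and convex-compact-valuedness in $V$ (the product of a fixed convex compact set of matrices with the linear map $V\mapsto \,\cdot\,V$), and the linear growth bound ensuring global existence on $[0,T]$. I would cite the appropriate existence theorem for upper semicontinuous convex-valued inclusions (or Clarke's own treatment in \cite[Section 7.4]{clarke1983optimization}) to close this gap cleanly.
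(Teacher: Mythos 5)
Your proposal is correct and follows essentially the same route as the paper: existence via a standard existence theorem for convex-valued, upper semicontinuous differential inclusions with linear growth (the paper cites Aubin--Cellina), followed by a Gr\"onwall estimate on the norm of $V$ (the paper applies it to $\|V(t)\|_F^2$ rather than $\|V(t)\|_F$, an immaterial variation) and the a.e.\ bound $\|\dot V(t)\|_F \le K\|V(t)\|_F$ to conclude Lipschitzness. The only cosmetic difference is your extra $\sqrt{p}$ in the linear growth bound, which is harmless since any linear growth rate suffices to rule out blow-up.
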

\begin{proof}
    By hypotheses on $J_F$, the solution set of \eqref{eq:sensitivityDI} is nonempty and defined on maximal intervals invoking \cite[Theorem 4, p. 101]{aubin1984differential}. 
    
    It remains to show that solutions to \eqref{eq:sensitivityDI} are bounded. Indeed, once one has a bound on $V$, one deduces a bound on $\dot{V}$ through the following inequality, which holds for a.e. $t\in [0,T]$ and is sufficient to ensure Lipschicity,
    \begin{equation}
    \label{eq:bound-derivatives}
    \Vert \dot{V}(t)\Vert_F \leq K \Vert V(t)\Vert_F.
    \end{equation}
    
    Using \eqref{eq:sensitivityDI}, \eqref{eq:boundJacobian} and \eqref{eq:bound-derivatives}, for a.e. $t\in [0,T]$, one has:
    $$
    \frac{d}{dt} \Vert V(t)\Vert_F^2 = 2 \mathrm{Tr}(V(t)^\top \dot{V}(t)) \leq 2K \Vert V(t)\Vert_F^2.
    $$
    Thanks to Lemma \ref{lem:gronwall}, and using the fact that $V(0) = I$ (see \eqref{eq:sensitivityDI}), one deduces that, for all $t\in [0,T]$:
    $$
    \Vert V(t) \Vert_F^2 \leq \Vert V(0) \Vert_F^2 \exp(2K T) = p \exp(2K T)
    $$
    This latter equation together with \eqref{eq:bound-derivatives} shows that the solutions $V$ to \eqref{eq:sensitivityDI} are $L$-Lipschitz with $L=K \sqrt{p}\exp(KT)$. This concludes the proof of the Lemma.\end{proof}

\begin{remark}[On the Lipschitz assumption]
The vector field $F$ has been supposed to be Lipschitz with a uniform bound on a conservative Jacobian in \eqref{eq:boundJacobian}, which is a stronger assumption than the (classical) local Lipschitz assumption. Under local Lipschicity, solutions only exist in a time interval which could be bounded with endpoint depending on initial condition. The global Lispchicity assumption allows to avoid such discussions, but there exist possible extensions which would allow to relax it:
\begin{itemize}
    \item[1.] If we suppose the trajectories (and the initial condition) to belong to some compact set, then local and global Lipschicity will be essentially equivalent for our purpose. For example if $F$ maximal monotone \cite[Chapter 7]{brezis2010functional}, one can show that the trajectories of \eqref{eq:mainODE} are bounded, independently of the initial condition. 
    \item[2.] A different possibility is to assume that all solutions to \eqref{eq:mainODE} are well defined on $[0,T]$ for any initial condition. One can see our global Lipschicity assumption as a sufficient condition.
\end{itemize}
\end{remark}

\begin{remark}
It is worth mentioning that the unknown of \eqref{eq:sensitivityDI} is a matrix, and not a vector as it is commonly defined in textbooks such as \cite{aubin1984differential,filippov1988differential}. It is always possible to identify a $p \times p$ matrix with a vector of dimension $p \times p$, and the meaning of matrix differential inclusion follows by using this identification.
\end{remark}

\subsection{Failure of formal differentiation with Clarke Jacobian}
\label{sec:counterEx}
Following \cite[Example 3.8]{barton2018computationally}, consider an instance of \eqref{eq:mainODE} as follows
\begin{align*}
    \begin{pmatrix}
                    \dot{X}_1 \\ \dot{X}_2
    \end{pmatrix}
    =
    \begin{pmatrix}
                    (1-X_2) |X_1|\\ 1
    \end{pmatrix},
\end{align*}
it can be proved that for any initilization $X_1(0)$ and $X_2(0) = 0$, we have $X_1(2) = X_1(0)$ and $X_2(2) = X_2(0) + 2$, therefore, the flow is differentiable at $T=2$ and its Jacobian is the identity matrix. Furthermore, if $X_1(0) = 0$, then we actually have $X_1(t) = 0$ for all $t \in \RR$. However the variational inclusion \eqref{eq:sensitivityDI} for the particular initialization $X_1(0) = X_2(0) = 0$ reads
\begin{align*}
\dot{M}    \in 
    \begin{pmatrix}
                    [- |1-t|, |1-t|] & 0 \\ 0& 0
    \end{pmatrix}
    M
\end{align*}
with $M(0) = I$ where we chose the conservative derivative of absolute value to be the usual derivative everywhere except at $0$ where it is the segment $[-1,1]$. All entries of $M$ remain constant in time, except for the first one which we denote by $m$. The two extreme solutions for $m$ are given by $\dot{m} = |1-t|m$ and $\dot{m} = - |1-t| m$ which leads to $m(2) \in [1/e, e]$. Therefore the variational inclusion fails to provide the correct subgradient for $X_1(2)$ with respect to the initial condition $X_1(0)$ (this should be $1$). This example highlights the fact that it is not possible to prove that the sensitivity analysis differential inclusion \eqref{eq:sensitivityDI} provides subgradients in general. In this example, the discrepancy occurs at the origin only, and, as described in the forthcoming results, the solutions to \eqref{eq:sensitivityDI} actually provide a conservative Jacobian for the flow. 

\section{Preliminary results}
\label{sec_preliminary}

Fix any $T >0$, we define the following mapping:
\begin{align*}
				\mathcal{U}\colon \RR^p &\rightrightarrows \mathcal{C}([0,T], \RR^{p \times p}) \\
				x& \rightrightarrows \left\{ t \mapsto V(t), \, V \text{ solution to \eqref{eq:sensitivityDI} }\right\}.
\end{align*}
We call the mapping $\mathcal{U}$ the solution mapping of the differential inclusion \eqref{eq:sensitivityDI} whose values are Lipschitz functions from $[0,T]$ to $\RR^{p \times p}$ (see Lemma \ref{lem:lipschitzSolution}). Note that, for all $t \in [0,T]$, $U(x,t) = \{V(t), \ V \in \mathcal{U}(x) \}$, where $U$ is defined in \eqref{eq:candidateConservativeField}. We introduce a Castaing representation for functions with values in Lipschitz subsets of $\mathcal{C}([0,T])$ which will allow to specify some technical measurability issues for $\mathcal{U}$.

\begin{proposition}[Castaing representation of solution mappings]
    Given $T>0$, $L > 0$, denote by $\mathcal{L}$ the space of $L$-Lipschitz functions from $[0,T]$ to $\RR^q$, endowed with the supremum norm. Consider a set-valued map $\mathcal{V}:\mathbb{R}^m\rightrightarrows \mathcal{L}$ with closed graph and non empty values. Then $\mathcal{V}$ admits a \emph{Castaing representation}, that is, a sequence of Borel measurable functions from $\RR^m$ to $\mathcal{L}$, $(M_n)_{n\in\mathbb{N}}$, such that $\mathcal{V}=\overline{\lbrace M_1(x),M_2(x),\ldots\rbrace},$ for each $x\in \mathbb{R}^m$, where the closure and Borel measurability are induced by $L^\infty$ norm over continuous functions. Furthermore, for all $i \in \NN$, $M_i$ can be seen as a function $\RR^m \times [0,T] \to \RR^q$ and we have that $(x,t) \mapsto M_i(x,t)$ is a Carath\'eodory function, $L$ Lipschitz in $t$ for fixed $x$ and Borel measurable in $x$ for fixed $t$.
    \label{prop:castaingRepresentation}
\end{proposition}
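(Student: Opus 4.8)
The plan is to reduce the statement to the classical Castaing representation theorem for measurable, closed-valued multifunctions into a Polish space, and then to reinterpret the resulting selections as functions of the two variables $(x,t)$.

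First I would verify that the target space $\mathcal{L}$ is Polish, i.e. separable and completely metrizable for the supremum norm. Separability is inherited from $\mathcal{C}([0,T],\RR^q)$, which is separable, since any subspace of a separable metric space is separable. Completeness follows because $\mathcal{L}$ is a closed subset of the Banach space $\mathcal{C}([0,T],\RR^q)$: a uniform limit of $L$-Lipschitz functions is again $L$-Lipschitz, so $\mathcal{L}$ contains all its limit points. Hence $(\mathcal{L},\Vert\cdot\Vert_\infty)$ is Polish and the standard measurable selection machinery becomes available.

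Next I would upgrade the closed-graph hypothesis to measurability of $\mathcal{V}$. Note first that the closed-graph assumption forces each value $\mathcal{V}(x)$ to be closed: if $f_k\in\mathcal{V}(x)$ and $f_k\to f$ in $\mathcal{L}$, then $(x,f_k)$ converges to $(x,f)$ in the graph, which is closed, so $f\in\mathcal{V}(x)$. Since $\RR^m\times\mathcal{L}$ is Polish and the graph of $\mathcal{V}$ is closed, it is in particular Borel; together with nonemptiness and closedness of the values, this places $\mathcal{V}$ within the scope of the Kuratowski--Ryll-Nardzewski selection theorem and its Castaing-type refinement (see Castaing--Valadier or Aubin--Frankowska): a closed-valued multifunction with Borel graph into a Polish space is measurable and admits a countable family of Borel measurable selections $(M_n)_{n\in\NN}$, $M_n\colon\RR^m\to\mathcal{L}$, whose pointwise closure recovers $\mathcal{V}$, that is $\mathcal{V}(x)=\overline{\{M_1(x),M_2(x),\ldots\}}$ for every $x$. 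This measurability transfer from the closed graph is the main technical point, and the step I expect to require the most care.

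Finally I would establish the ``furthermore'' part by regarding each selection as a function of two variables, $M_n(x,t):=M_n(x)(t)$. For fixed $x$, the map $t\mapsto M_n(x,t)$ is $L$-Lipschitz, since $M_n(x)\in\mathcal{L}$, and in particular continuous. For fixed $t$, the evaluation map $\mathrm{ev}_t\colon\mathcal{L}\to\RR^q$, $f\mapsto f(t)$, is $1$-Lipschitz hence continuous, so $x\mapsto M_n(x,t)=\mathrm{ev}_t(M_n(x))$ is Borel measurable as the composition of the Borel measurable map $x\mapsto M_n(x)$ with a continuous map. This yields precisely the Carath\'eodory property, with the uniform Lipschitz constant $L$ in $t$ and Borel measurability in $x$, which completes the argument.
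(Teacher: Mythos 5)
Your reduction to selection machinery in a Polish space has a genuine gap at exactly the step you flagged: the passage from a closed graph to measurability of $\mathcal{V}$. The Kuratowski--Ryll-Nardzewski theorem and its Castaing refinement require \emph{weak measurability} of the multifunction, i.e.\ that $\mathcal{V}^{-}(U)=\{x\in\RR^m:\mathcal{V}(x)\cap U\neq\emptyset\}$ be Borel for every open $U\subset\mathcal{L}$, and this does not follow from the graph being closed (or Borel) when the codomain is merely Polish. For a general Polish target, $\mathcal{V}^{-}(U)$ is only the projection of a Borel set, hence analytic and possibly non-Borel; the Jankov--von Neumann theorem then yields only universally measurable selections, not the Borel measurable ones the proposition asserts. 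In fact the lemma you invoke is false as stated: taking the codomain $Y=\NN^{\NN}$ and a closed set $B\subset\RR^m\times Y$ whose projection onto $\RR^m$ is analytic non-Borel, the map $x\rightrightarrows B_x\cup\{y_0\}$ (with $B_x$ the section of $B$ at $x$ and $y_0$ a fixed point) has closed graph and nonempty closed values, yet admits no Borel Castaing representation, since such a representation would force $\mathcal{V}^{-}(U)=\bigcup_n M_n^{-1}(U)$ to be Borel for all open $U$.

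What rescues the argument---and what the paper's proof actually uses---is the compactness structure of $\mathcal{L}$ that your proposal leaves untouched: by Arzel\`a--Ascoli, the set of $L$-Lipschitz functions with supremum norm at most $n$ is compact, so $\mathcal{L}$ is $\sigma$-compact and closed balls in $\mathcal{L}$ are compact (this is Lemma \ref{lem:lipschitzSigmaCompact} of the paper). For compact $K\subset\mathcal{L}$, the set $\mathcal{V}^{-}(K)$ is the projection of $\mathrm{Gr}(\mathcal{V})\cap(\RR^m\times K)$, a closed set projected along a compact factor, hence closed; writing an open $U\subset\mathcal{L}$ as a countable union of compact sets then shows $\mathcal{V}^{-}(U)$ is $F_\sigma$, so $\mathcal{V}$ is Borel measurable. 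This is the content of the closed graph theorem for correspondences into $\sigma$-compact metrizable spaces \cite[Theorem 18.20]{aliprantis2005infinite}, which is the route the paper takes; after that, the Castaing representation \cite[Corollary 18.14]{aliprantis2005infinite} applies, and your final step---the evaluation-map argument giving the Carath\'eodory property, $L$-Lipschitz continuity in $t$, and Borel measurability in $x$---is correct and matches the paper's appeal to \cite[Theorem 4.55]{aliprantis2005infinite}.
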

\begin{proof}
    Recall that $\sigma$-compact sets are sets defined as the union of countably many compact subspaces. Since the domain of the $\mathcal{V}$ is obviously $\sigma$-compact and takes values in the space $\mathcal{L}$ which is, by Lemma \ref{lem:lipschitzSigmaCompact}, also $\sigma$-compact, then on can invoke \cite[Theorem 18.20]{aliprantis2005infinite} to deduce that $\mathcal{V}$ is Borel measurable. Finally, using \cite[Corollary 18.14]{aliprantis2005infinite}, there exists a Castaing representation for $\mathcal{V}$. Moreover, by \cite[Theorem 4.55]{aliprantis2005infinite}, this representation is actually a sequence of Carath\'eodory functions, which is our desired result.
\end{proof}

We are now in position to state a technical representation result for the set of solutions of \eqref{eq:sensitivityDI}. Fix any $T>0$ and any absolutely continuous function $\gamma:[0,1]\rightarrow \mathbb{R}^p$. We define the following mapping:
\begin{align*}
  \mathcal{U}_\gamma\colon [0,1]&\rightrightarrows \mathcal{C}([0,T],\mathbb{R}^{p\times p})\\
  r &\rightrightarrows V\in \mathcal{U}(\gamma(r)),
\end{align*}
which corresponds to the set of solutions to \eqref{eq:sensitivityDI} with the initial condition given by $x = \gamma(r)$ in \eqref{eq:mainODE}. 

\begin{lemma}
			The map	$\mathcal{U}_\gamma$ is locally bounded, has a closed graph, is Borel measurable and admits a countable collection of dense Carath\'eodory selections $M \colon [0,1] \times [0,T] \mapsto \RR^{p \times p}$ which are absolutely continuous in time and Borel measurable in $r$. For each such $M$, there is a Lebesgue measurable selection $S(r,t) \in  J_F(\phi(\gamma(r),t)$ for all $(r,t) \in  [0,1] \times \RR$, such that, for all $r\in [0,1]$ and for almost all $t \in [0,T]$,
				\begin{align*}
								\frac{\partial }{\partial t}M(r,t) = S(r,t) M(r,t).
				\end{align*}
        \label{lem:caratheodoryCastaingGamma}
\end{lemma}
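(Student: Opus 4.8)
The plan is to verify that $\mathcal{U}_\gamma$ meets the hypotheses of Proposition~\ref{prop:castaingRepresentation}, apply that proposition to extract the dense family of Carath\'eodory selections, and finally build the driving selection $S$ pointwise from a constrained set-valued map using a measurable selection theorem.

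First I would record the structural properties of $\mathcal{U}_\gamma$. By Lemma~\ref{lem:lipschitzSolution}, for every $r\in[0,1]$ the set $\mathcal{U}(\gamma(r))$ is nonempty and consists of $L$-Lipschitz functions with common initial value $V(0)=I$ and $L=K\sqrt{p}\exp(KT)$; hence $\mathcal{U}_\gamma$ takes nonempty values in the fixed space $\mathcal{L}$ of $L$-Lipschitz maps $[0,T]\to\RR^{p\times p}$ and is uniformly bounded, so locally bounded. For the closed graph I would take $r_k\to r$ and $V_k\in\mathcal{U}_\gamma(r_k)$ with $\Vert V_k-V\Vert_\infty\to 0$. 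Since $\gamma$ and $\phi$ are continuous, $\phi(\gamma(r_k),\cdot)\to\phi(\gamma(r),\cdot)$ uniformly; the derivatives $\dot V_k$ are uniformly bounded, hence (up to a subsequence) converge weakly in $L^2([0,T])$, necessarily to $\dot V$ because $V_k(t)=I+\int_0^t\dot V_k$. Convexity and graph-closedness of $J_F$ together with the closure theorem for differential inclusions \cite{aubin1984differential,filippov1988differential} then yield $\dot V(t)\in J_F(\phi(\gamma(r),t))V(t)$ for almost all $t$, so $V\in\mathcal{U}_\gamma(r)$.

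With these properties, Proposition~\ref{prop:castaingRepresentation} (applied with range dimension $q=p\times p$ and the $\sigma$-compact domain $[0,1]$) furnishes the Borel measurability of $\mathcal{U}_\gamma$ and a countable dense family of Carath\'eodory selections $M_n\colon[0,1]\times[0,T]\to\RR^{p\times p}$, each $L$-Lipschitz (hence absolutely continuous) in $t$ and Borel measurable in $r$. It remains to produce $S$ for one fixed such $M=M_n$. Because $M_n$ is a selection, $M(r,\cdot)\in\mathcal{U}(\gamma(r))$ solves \eqref{eq:sensitivityDI}, so $\partial_t M(r,t)\in J_F(\phi(\gamma(r),t))M(r,t)$ for almost all $t$. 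The derivative $\partial_t M$ is an almost-everywhere pointwise limit of difference quotients of the jointly measurable $M$, hence jointly measurable, while $(r,t)\mapsto J_F(\phi(\gamma(r),t))$ is a measurable, compact convex valued map, being the composition of the continuous map $(r,t)\mapsto\phi(\gamma(r),t)$ with the locally bounded, graph-closed $J_F$. I would then set
$$
\Phi(r,t):=\left\{S\in J_F(\phi(\gamma(r),t)):\ S\,M(r,t)=\partial_t M(r,t)\right\},
$$
the intersection of $J_F(\phi(\gamma(r),t))$ with an affine subspace depending measurably on $(r,t)$, so $\Phi$ is closed valued with measurable graph. Writing $N=\{(r,t):\Phi(r,t)=\emptyset\}$, which is measurable, every slice $N_r$ is null by the solution property above, hence $N$ is Lebesgue null by Fubini. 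A measurable selection theorem \cite[Theorem 18.13]{aliprantis2005infinite} provides a measurable $S_1(r,t)\in\Phi(r,t)$ on $N^c$ and a measurable $S_2(r,t)\in J_F(\phi(\gamma(r),t))$ on $N$; setting $S=S_1$ on $N^c$ and $S=S_2$ on $N$ gives a Lebesgue measurable $S$ with $S(r,t)\in J_F(\phi(\gamma(r),t))$ everywhere and $\partial_t M(r,t)=S(r,t)M(r,t)$ for every $r$ and almost all $t$.

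The main obstacle is the closure theorem underpinning the closed graph property: one must pass to the limit in the differential inclusion with only weak convergence of the derivatives $\dot V_k$, and this is exactly where convexity of the values of $J_F$ and its upper semicontinuity are indispensable, via Mazur's lemma to convert weak into strong convergence of convex combinations. The subsequent construction of $S$ is conceptually routine once the joint measurability of $\partial_t M$ and of the graph of $\Phi$ is in place; the one point needing care is that the exceptional $t$-null sets, one for each $r$, must assemble into a jointly Lebesgue-null set $N$, which is precisely what Fubini delivers given that every $M_n$ is a genuine selection of $\mathcal{U}_\gamma$ and hence solves \eqref{eq:sensitivityDI} for each $r$.
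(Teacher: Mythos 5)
Your proposal is correct and follows essentially the same route as the paper: local boundedness from the uniform Lipschitz bound, closed graph via the closure theorem for convex-valued differential inclusions (which you re-derive via weak $L^2$ compactness and Mazur, where the paper simply cites Filippov), the Castaing representation from Proposition~\ref{prop:castaingRepresentation}, and then the construction of $S$ as a measurable selection of the intersection of $J_F(\phi(\gamma(\cdot),\cdot))$ with the matrices solving the pointwise equation, with the exceptional set shown Lebesgue-null by Fubini and patched by an arbitrary measurable selection of $J_F$. The only differences are presentational (your $\Phi$ is the paper's $\mathcal{S}_1\cap\mathcal{S}_2$, and your $N^c$ is the paper's $\tilde{E}$), not mathematical.
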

\begin{proof}
	Since $J_F$ is bounded, $\phi$ is Lipschitz and $\gamma$ is absolutely continuous, one can deduce that $\mathcal{U}_\gamma$ is locally bounded. Using the fact that $\gamma$ is absolutely continuous (and therefore has a closed graph) and invoking \cite[Corollary 1 and Theorem 3, \S 7, Chapter 2]{filippov1988differential}, one can deduce that $\mathcal{U}_\gamma$ has a closed graph.
		
		Using Proposition \ref{prop:castaingRepresentation}, $\mathcal{U}_\gamma$ admits a Castaing representation as the closure of a countable dense set of Carath\'eodory selections.
		It remains to show that the functions composing this representation satisfy the claimed differential equation and to construct the proposed $S$.
		
		Fix $M$ an element of this Castaing representation. With a slight abuse of notation, for the rest of this proof, we will see $M$ as a function of $(r,t)$ by identifying $M$ with $(r,t) \mapsto M(r,t)$. As a Carath\'eodory function, $M$ is jointly Borel measurable in $r$ and $t$ as stated in \cite[Lemma 4.51]{aliprantis2005infinite}. Denote by $\frac{\partial}{\partial t}M$ the partial derivative of $M$ with respect to $t$ when it exists. Since $M$ is Lipschitz (hence absolutely continuous) with respect to $t$, for any $r\in [0,1]$, $\frac{\partial }{\partial t}M(r,t)$ is defined for almost all $t\in[0,T]$. 
		
		Consider the set $E\subset [0,1]\times [0,T]$ the set where $\frac{\partial}{\partial t} M(r,t)$ exists. By Lemma \ref{lemma-direct}, $E$ has full Lebesgue measure and $(r,t) \mapsto \frac{\partial}{\partial t} M(r,t)$ is Lebesgue measurable. Furthermore, for all $r\in [0,1]$, $\lbrace t\in [0,T], (r,t)\in E\rbrace$ has full measure by Lipschicity of $M$ in the variable $t$ for fixed $r$. The set $E$ is a measure space with the induced subspace measure. 
		
		Consider the function 
		\begin{align*}
						f \colon \RR^{p \times p} \times E &\rightarrow \RR_+ \\
						(S,r,t) &\mapsto \left\Vert \frac{\partial}{\partial t} M(r,t) - SM(r,t)\right\Vert^2 ,
		\end{align*}
		which is jointly Lebesgue measurable in $(r,t)$  for a fixed $S\in\mathbb{R}^{p\times p}$ since the sum of the Lebesgue measurable functions $\frac{\partial}{\partial t}M$ and $-SM(r,t)$ is Lebesgue measurable, and because the composition of this sum with the norm function (which is continuous) is also Lebesgue measurable. This function is also continuous in $S$ for fixed $(r,t)\in [0,1] \times [0,T]$, implying then that $f$ is a Carath\'edory function. Consider $K > 0$ the global upper bound on $\|J_F\|_{op}$ as in \eqref{eq:boundJacobian}. By \cite[Corollary 18.8]{aliprantis2005infinite}, the set valued map
		\begin{align*}
						\mathcal{S}_1 \colon E &\rightrightarrows \RR^{p \times p} \\
						(r,t) &\rightrightarrows \left\{ S \in \RR^{p\times p},\, \|S\| \leq K,\, f(S,r,t) = 0 \right\}
		\end{align*}
		is measurable since $S$ belongs to a compact set.  
		We extend $\mathcal{S}_1$ to $[0,1] \times [0,T]$ by setting $\mathcal{S}_1(r,t) = \emptyset$ if $(r,t) \not \in E$. Measurability of $\mathcal{S}_1$ is preserved applying \cite[Definition 18.1]{aliprantis2005infinite}. Now consider the set valued function
		\begin{align*}
						\mathcal{S}_2 \colon [0,1] \times [0,T] &\rightrightarrows \RR^p \\
						(r,t) & \rightrightarrows J_F(\phi(\gamma(r),t)).
		\end{align*}
		Since the graph of $J_F$ is closed, and using moreover the continuity of the functions $\phi$ and $\gamma$, the function $S \mapsto \mathrm{dist}(S, J_F(\phi(\gamma(r),t)))$ is lower semicontinuous, hence Borel measurable by Lemma \ref{lem:lower-measurable}. It implies that it is a Carath\'eodory function,  proving that $\mathcal{S}_2$ is Borel measurable \cite[Theorem 18.5]{aliprantis2005infinite}. Now consider the intersection set valued map:
		\begin{align*}
						\mathcal{S} \colon [0,1] \times [0,T] &\rightrightarrows \RR^p \\
						(r,t) & \mapsto \mathcal{S}_1(r,t) \cap \mathcal{S}_2(r,t).
		\end{align*}
		It is measurable \cite[Lemma 18.4, Item 3]{aliprantis2005infinite} and compact valued.  Consider the set $\tilde{E} = \{(r,t) \in [0,1] \times [0,T],\, \mathcal{S}(x,t) \neq \emptyset \}$, which is measurable (see discussion after Definition 18.1 in \cite{aliprantis2005infinite}). We have that $\tilde{E} \subset E$ because $\mathcal{S}$ is empty valued outside of $E$ and $\tilde{E} = \{(r,t) \in E, \frac{d}{dt} M(r,t) \in J_F(\phi(\gamma(r),t)) M(r,t) \}$ by construction.
		
		Since for any $r\in [0,1]$, $M(r) \in \mathcal{U}_\gamma(r)$, it holds for almost all $t \in [0,T]$ that $\frac{d}{d t}M(r,t)\in J_F(\phi(\gamma(r),t)) M(r,t)$. In other words, for all $r \in [0,1]$, $\{t \in [0,T],\, (r,t) \in \tilde{E}\}$ has full measure. This is by definition of $\mathcal{U}_\gamma$. Therefore, by Fubini's Theorem \cite[Theorem 16 Section 20.2]{royden2010real}, $\tilde{E}$ has full measure.  
		
		Set for all $(r,t) \in [0,1] \times [0,T]$, $\tilde{\mathcal{S}}(r,t) = \mathcal{S}(r,t)$ if $\mathcal{S}(r,t) \neq \emptyset$ (that is $(r,t) \in \tilde{E}$), and $J_F(\phi(\gamma(r)),t)$ otherwise, it satisfies $\tilde{\mathcal{S}}(r,t) \subset J_F(\phi(\gamma(r),t))$ for all $(r,t) \in [0,1] \times [0,T]$ and has nonempty values. The mapping $\tilde{\mathcal{S}}$ is measurable and has non empty closed values \cite[Theorem 18.13]{aliprantis2005infinite}. Therefore it admits a measurable  selection $S \colon [0,1] \times [0,T] \mapsto \RR^{p \times p}$, which is the desired function. This achieves the proof. \end{proof}

\begin{remark}
				Given $M$ and $S$ as in Lemma \ref{lem:caratheodoryCastaingGamma}, we have by \cite[Theorem 2, \S 1, Chapter1]{filippov1988differential} that, for all $r\in [0,1]$, $t \mapsto M(r,t)$ is the unique absolutely continuous solution to
				\begin{align*}
								\frac{\partial }{\partial t} M(r,t) = S(r,t) M(r,t).
				\end{align*}
				\label{rem:unique}
\end{remark}

\begin{remark}
\label{rem:Uclosedgraph}
Note that, using the same arguments, the solution mapping $\mathcal{U}$ defined at the beginning of the section is also locally bounded and has a closed graph. Indeed, since $J_F$ is bounded and $\phi$ is Lipschitz, it is clear that $\mathcal{U}$ is locally bounded. Then using \cite[Corally 1 and Theorem 3, \S 7,Chapter 2]{filippov1988differential}, one deduces that $\mathcal{U}$ has a closed graph.
\end{remark}

\section{Path differentiability of the flow}
\label{sec_flow}
This section is devoted to the proof of our main result, conservativity of the mapping defined in \eqref{eq:sensitivityDI} for the flow of \eqref{eq:mainODE}. We first prove that, for any $T\geq 0$, the mapping $U$ evaluated at $t=T$ is conservative for the flow evaluated at $t=T$. 
\begin{theorem}
				For all $T\geq 0$, the mapping $x \rightrightarrows U(x,T)$ is conservative for $x \mapsto \phi(x,T)$.
				\label{th:flowPathDifferentiable}
\end{theorem}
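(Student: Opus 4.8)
The plan is to verify the structural requirements of a conservative Jacobian and then establish the chain rule along an arbitrary absolutely continuous curve, the latter being the real content. First I would record that $x \rightrightarrows U(x,T)$ is nonempty valued and locally bounded: by Lemma \ref{lem:lipschitzSolution} every solution $V$ of \eqref{eq:sensitivityDI} satisfies $\|V(T)\|_F \le \sqrt{p}\exp(KT)$. Graph closedness of $x \rightrightarrows U(x,T)$ follows from the closed graph of the solution mapping $\mathcal{U}$ (Remark \ref{rem:Uclosedgraph}) together with continuity of the evaluation $V \mapsto V(T)$ on $\mathcal{C}([0,T],\RR^{p\times p})$. It then remains to prove the chain rule: for every absolutely continuous $\gamma\colon[0,1]\to\RR^p$ and almost every $r$, one has $\frac{d}{dr}\phi(\gamma(r),T) = V\dot\gamma(r)$ for all $V \in U(\gamma(r),T)$.

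I would reduce this to a single integral identity for each member of a Castaing representation. Apply Lemma \ref{lem:caratheodoryCastaingGamma} to obtain a countable dense family of Carath\'eodory selections $(M_n)$ of $\mathcal{U}_\gamma$, each absolutely continuous in $t$ with an associated measurable selection $S_n(r,t)\in J_F(\phi(\gamma(r),t))$ such that $\frac{\partial}{\partial t}M_n(r,t)=S_n(r,t)M_n(r,t)$. The goal becomes, for each fixed $n$ and all $r\in[0,1]$,
\begin{align*}
\phi(\gamma(r),T)-\phi(\gamma(0),T) = \int_0^r M_n(\rho,T)\dot\gamma(\rho)\,d\rho .
\end{align*}
Once this holds, $r\mapsto\phi(\gamma(r),T)$ is absolutely continuous with derivative $M_n(r,T)\dot\gamma(r)$ for a.e.\ $r$; intersecting the corresponding full-measure sets over $n$ and using density of $\{M_n(r,\cdot)\}$ in $\mathcal{U}_\gamma(r)$ for the supremum norm, continuity of $V\mapsto V(T)$, and continuity of $V\mapsto V\dot\gamma(r)$, the single identity $\frac{d}{dr}\phi(\gamma(r),T)=V\dot\gamma(r)$ extends from the dense set $\{M_n(r,T)\}$ to all $V\in U(\gamma(r),T)$, which is conservativity.

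The heart of the argument is the integral identity, which I would prove by a Gr\"onwall closure in the style of Lemma \ref{lem:lipschitzSolution}. Fix $n$, write $M=M_n$, $S=S_n$, and set $\Phi(r,t)=\phi(\gamma(r),t)-\phi(\gamma(0),t)-\int_0^r M(\rho,t)\dot\gamma(\rho)\,d\rho$. Using $\phi(x,0)=x$, $M(\rho,0)=I$ and absolute continuity of $\gamma$ one gets $\Phi(r,0)=0$ and $\Phi(0,t)=0$. Differentiating in $t$, the flow contributes $F(\phi(\gamma(r),t))-F(\phi(\gamma(0),t))$ and the integral term contributes $\int_0^r S(\rho,t)M(\rho,t)\dot\gamma(\rho)\,d\rho$. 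The key step is to apply conservativity of $J_F$ along the absolutely continuous curve $\rho\mapsto\phi(\gamma(\rho),t)$ (absolutely continuous since $\phi(\cdot,t)$ is Lipschitz and $\gamma$ is absolutely continuous), choosing the selection $A(\rho,t)=S(\rho,t)\in J_F(\phi(\gamma(\rho),t))$, to obtain $F(\phi(\gamma(r),t))-F(\phi(\gamma(0),t))=\int_0^r S(\rho,t)\frac{\partial}{\partial\rho}\phi(\gamma(\rho),t)\,d\rho$. This yields $\frac{\partial}{\partial t}\Phi(r,t)=\int_0^r S(\rho,t)\frac{\partial}{\partial\rho}\Phi(\rho,t)\,d\rho$; integrating in $t$ from the initial value $\Phi(r,0)=0$ and then differentiating in $r$ gives, for $w(r,t):=\frac{\partial}{\partial r}\Phi(r,t)$, the homogeneous linear Volterra equation $w(r,t)=\int_0^t S(r,s)w(r,s)\,ds$. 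Since $\|S\|_{op}\le K$, Lemma \ref{lem:gronwall} forces $w(r,\cdot)\equiv 0$, hence $\Phi(r,t)=\Phi(0,t)+\int_0^r w(\rho,t)\,d\rho=0$, which is the desired identity at $t=T$.

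The main obstacle is the measure-theoretic bookkeeping justifying the interchanges above: that $\frac{\partial}{\partial r}\phi(\gamma(r),t)$ exists almost everywhere and is jointly measurable, that differentiation passes under the integral sign in $\frac{\partial}{\partial t}\int_0^r M(\rho,t)\dot\gamma(\rho)\,d\rho$ (using uniform Lipschitzness of $M$ in $t$ from Lemma \ref{lem:lipschitzSolution}) and in the iterated integral when forming $w$, and that Fubini applies so that the pointwise conservativity of $J_F$, valid for a.e.\ $\rho$ at each fixed $t$, can be used jointly in $(\rho,t)$. These are exactly the points for which the joint-measurability machinery of Proposition \ref{prop:castaingRepresentation} and Lemma \ref{lem:caratheodoryCastaingGamma} has been set up, so I would invoke those selections rather than argue measurability from scratch; the crucial conceptual ingredient is that the \emph{same} selection $S$ simultaneously generates $M$ through \eqref{eq:sensitivityDI} and serves as an admissible element of $J_F$ for the conservative chain rule of $F$ along $\rho\mapsto\phi(\gamma(\rho),t)$.
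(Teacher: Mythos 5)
Your proposal is, at its core, the paper's own argument reorganized: the same matched pair $(M,S)$ from Lemma \ref{lem:caratheodoryCastaingGamma} (the decisive observation that the \emph{same} selection $S$ both generates $M$ via \eqref{eq:sensitivityDI} and serves in the chain rule for $F$ along $r \mapsto \phi(\gamma(r),t)$), the same Volterra structure for the defect $w(r,t) = \frac{\partial}{\partial r}\phi(\gamma(r),t) - M(r,t)\dot\gamma(r)$, a Gr\"onwall closure, and the density step of Lemma \ref{lem:dense} to pass from the countable Castaing family to all of $U(\gamma(r),T)$. The structural checks (Lemma \ref{lem:lipschitzSolution}, Remark \ref{rem:Uclosedgraph}, Arzel\`a--Ascoli for the evaluation map) and the measurability apparatus (Proposition \ref{prop:castaingRepresentation}, Lemma \ref{lemma-direct}, Lemma \ref{lem:leibniz}) are invoked exactly as the paper does.

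The one substantive deviation is \emph{where} Gr\"onwall is applied, and there your write-up has a genuine, though repairable, null-set gap. All the chain-rule identities hold, for each fixed $t$, only for almost every $r$, with a $t$-dependent exceptional set; after the Fubini shuffle you therefore get, for almost every $r$, the Volterra equation for almost every $t$, and Gr\"onwall (here Lemma \ref{lem:gronwall_int} rather than Lemma \ref{lem:gronwall}, since $w(r,\cdot)$ is a priori only integrable) yields $w(r,t)=0$ for almost every $t$, with the exceptional $t$-set depending on $r$. Your concluding step ``$w(r,\cdot)\equiv 0$, hence $\Phi(r,T)=\int_0^r w(\rho,T)\,d\rho=0$'' evaluates $w(\cdot,T)$ at the single fixed time $T$, which is precisely what the almost-every-$t$ statement does not deliver. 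The paper sidesteps this by integrating in $r$ \emph{before} applying Gr\"onwall: the inequality $\int_0^1\|w(r,t)\|\,dr \le K\int_0^t\int_0^1\|w(r,s)\|\,dr\,ds$ holds for \emph{all} $t\in[0,T]$ (null sets in $r$ are harmless under the integral), so Lemma \ref{lem:gronwall_int} annihilates the $r$-integral at every $t$, including $t=T$. Your version can be patched inside your own framework: you already derived $\Phi(r,t)=\int_0^t\int_0^r S(\rho,s)\,w(\rho,s)\,d\rho\,ds$ for all $(r,t)$ (both sides continuous in $t$), and once $w=0$ almost everywhere on $[0,1]\times[0,T]$ this double integral vanishes identically, giving $\Phi(r,T)=0$ without ever evaluating $w$ at $t=T$. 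With that repair the proof is correct and essentially equivalent to the paper's; the paper's integrated Gr\"onwall simply buys the conclusion at the fixed terminal time in one stroke.
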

\begin{proof}
         If $T = 0$, then the statement is obvious. Then, we restrict our analysis to the case where $ T > 0$.

				Consider an absolutely continuous path $\gamma \colon [0,1] \rightarrow  \RR^p$. Let $M\colon [0,1] \times [0,T] \rightarrow \RR^{p \times p}$ be a Carath\'eodory function as in Proposition \ref{prop:castaingRepresentation} such that, for all $(r,t)\in [0,1]\times [0,T]$, $M(r,t) \in U(\gamma(r),t)$. Consider the Lebesgue measurable selection $S(r,t) \in J_F(\phi(\gamma(r),t))$ for all $(r,t)\in [0,1]\times [0,T]$ as given by Lemma \ref{lem:caratheodoryCastaingGamma}, such that, for all $r\in [0,1]$ and almost all $t\in [0,T]$
				\begin{align}
								\frac{\partial}{\partial t} M(r,t) = S(r,t) M(r,t).
								\label{eq:selectionM}
				\end{align}
				In addition, we have, for all $r \in [0,1]$ and all $t \in [0,T]$,
				\begin{align}
								\phi(\gamma(r), t) - \gamma(r) = \int_{s=0}^{s=t} F(\phi(\gamma(r), s)) ds,
								\label{eq:integralEquation}
				\end{align}
				since $\phi(\gamma(r),0) = \gamma(r)$.
				
				Since $\phi$ is Lipschitz, for each $s \in [0,T]$, $r \mapsto \phi(\gamma(r),s)$ is an absolutely continuous loop. Therefore, it is differentiable at almost all $r\in [0,1]$. Applying Lemma \ref{lemma-direct} shows that the function
				\begin{align*}
								g \colon (r,s) \mapsto \frac{d}{dr} \phi(\gamma(r), s),
				\end{align*}
				is well defined for all $s\in [0,t]$ and almost all $r\in [0,1]$, and Lebesgue measurable in $(r,s)$. Therefore, for all $t\in [0,T]$, for almost all $r\in [0,1]$, it follows from \eqref{eq:integralEquation} that 
				\begin{align*}
							g(r,t) - \dot{\gamma}(r) = \frac{d}{dr} \int_{s=0}^{s=t} F(\phi(\gamma(r), s)) ds. 
				\end{align*}
				The integrand is jointly integrable in $(r,s)$, and absolutely continuous in $r$ for each $s$. It follows by Lemma \ref{lem:leibniz} that, for all $t\geq 0$ and for almost all $r\in [0,1]$
				\begin{align*}
						g(r,t) - \dot{\gamma}(r) =  \int_{s=0}^{s=t} \frac{\partial}{\partial r} F(\phi(\gamma(r), s)) ds.
				\end{align*}
				Since $F$ is path differentiable, we have, for all $s\in [0,t]$, for almost all $r\in [0,1]$
				\begin{align*}
					\frac{\partial }{\partial r} F(\phi(\gamma(r), s)) & = J \times g(r,s) \qquad \forall J \in J_F(\phi(\gamma(r), s))\\
								&=S(r,s)g(r,s),
				\end{align*}
				where $S$ is the Lebesgue measurable selection defined in \eqref{eq:selectionM}. 
	     	    Therefore, by integration, we have, for all $t \in [0,T]$, for almost all $r \in [0,1]$
				\begin{align}
								g(r,t) - \dot{\gamma}(r) =  \int_{s=0}^{s=t} S(r, s) g(r,s) ds,
								\label{eq:temp1}
				\end{align}

				Now, we rewrite \eqref{eq:selectionM} by integration, for all $(r,t)\in [0,1]\times [0,T]$, using $M(r,0) = I$
				\begin{align*}
					M(r,t)- I = \int_{s = 0}^{s = t} S(r, s) M(r,s) ds. 
				\end{align*}
				Multiplying both sides of the latter equation by $\dot{\gamma}(r)$, that is defined for almost all $r\in [0,1]$, one has, for all $t\geq 0$, for almost all $r\in [0,1]$ 
				\begin{align}
								M(r, t) \dot{\gamma}(r)	- \dot{\gamma}(r) = \int_{s=0}^{s=t} S(r, s) M(r, t) \dot{\gamma}(r) ds.
								\label{eq:temp2}
				\end{align}
				Combining both \eqref{eq:temp1} and \eqref{eq:temp2}, we have, for all $t\geq 0$, for almost all $r\in [0,1]$
				\begin{align*}
								\|M(r, t) \dot{\gamma}(r) -  g(r,t)\| &= \left\|\int_{s=0}^{s=t} S(r, s) (M(r, s) \dot{\gamma}(r) - g(r,s)) ds  \right\| \\
								& \leq \int_{s=0}^{s=t} \left\|S(r, s) (M(r, s) \dot{\gamma}(r) - g(r,s))   \right\| ds\\
								& \leq K\int_{s=0}^{s=t} \left\|(M(r, s) \dot{\gamma}(r) - g(r,s))   \right\| ds
				\end{align*}
				where $K$ is a bound on $J_F$ given in \eqref{eq:boundJacobian}. Integrating with respect to $r$ and using Fubini's theorem, we have, for all $t\in [0,T]$
				\begin{align*}
								\int_{r=0}^{r=1} \|M(r, t) \dot{\gamma}(r) -  g(r,t)\| dr & \leq K\int_{s=0}^{s=t}\int_{r=0}^{r=1}  \left\|(M(r, s) \dot{\gamma}(r) - g(r,s))   \right\| dr ds
				\end{align*}

				By Lemma \ref{lem:gronwall_int}, one obtains that, for all $t\geq 0$
				\begin{align*}
								\int_{r=0}^{r=1} \|M(r, t) \dot{\gamma}(r) -  g(r,t)\| dr = 0. 
				\end{align*}
				Therefore, we have, for all $t\geq 0$ and all $r \in [0,1]$ 
				\begin{align*}
								\phi(\gamma(r),t) - \phi(\gamma(0),t) = \int_{u=0}^{u=r} g(u,t) du =  \int_{u=0}^{u=r} M(u, t) \dot{\gamma}(u)du.
				\end{align*}
				Since $M$ was an arbitrary Carath\'eodory function in a countable dense subset of such selections, one can apply Lemma \ref{lem:dense}. This shows that $x \rightrightarrows U(x,t)$ is conservative for $x \mapsto \phi(x,t)$. This concludes the proof.\end{proof}
				
From the latter result, one can deduce that the flow $\phi$ is path differentiable for all $t\geq 0$. It is stated in the following corollary. 

\begin{corollary}
                The mapping $(x,t) \rightrightarrows (U(x,t), F( \phi(x,t)))$ is conservative for $\phi$ and in particular, $\phi$ is path differentiable.
				\label{cor:flowPathDifferentiable}
\end{corollary}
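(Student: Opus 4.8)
The plan is to deduce the corollary from Theorem \ref{th:flowPathDifferentiable} (which controls the dependence on the initial condition) together with the elementary time derivative $\tfrac{\partial}{\partial t}\phi(x,t)=F(\phi(x,t))$ coming directly from \eqref{eq:mainODE}. First I would check that the proposed map is an admissible candidate: it is nonempty valued, locally bounded and graph closed. Indeed $x\rightrightarrows U(x,t)$ is locally bounded and graph closed by Remark \ref{rem:Uclosedgraph}, while the block $F(\phi(x,t))$ is single valued and continuous (as a composition of the continuous maps $F$ and $\phi$), so the concatenation $(x,t)\rightrightarrows(U(x,t),F(\phi(x,t)))$ inherits all three properties and has the right dimensions ($p\times(p+1)$) to be a conservative Jacobian of $\phi\colon\RR^p\times[0,T]\to\RR^p$. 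Next, since the integral identity defining conservativity is invariant under monotone absolutely continuous reparametrizations of the curve, I may replace an arbitrary absolutely continuous curve $(\gamma,\tau)\colon[0,1]\to\RR^p\times[0,T]$ by an arclength-type reparametrization and assume, without loss of generality, that both $\gamma$ and $\tau$ are Lipschitz. Then $W(s):=\phi(\gamma(s),\tau(s))$ is Lipschitz and the map $(s,u)\mapsto F(\phi(\gamma(s),u))$ is jointly continuous and Lipschitz in $s$, uniformly in $u\in[0,T]$.

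The core computation starts from the integral form \eqref{eq:integralEquation}, namely $W(s)=\gamma(s)+\int_{0}^{\tau(s)}F(\phi(\gamma(s),u))\,du$. Differentiating with a Leibniz rule (the extension of Lemma \ref{lem:leibniz} allowing a variable upper limit) yields, for almost all $s\in[0,1]$,
\begin{align*}
\dot W(s)=\dot\gamma(s)+F(\phi(\gamma(s),\tau(s)))\dot\tau(s)+\int_{0}^{\tau(s)}\frac{\partial}{\partial s}F(\phi(\gamma(s),u))\,du.
\end{align*}
The boundary term already produces the time component $F(\phi(\gamma(s),\tau(s)))\dot\tau(s)$ of the candidate Jacobian. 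For the remaining integral, path differentiability of $F$ together with the Lebesgue measurable selection $S$ furnished by Lemma \ref{lem:caratheodoryCastaingGamma} gives $\tfrac{\partial}{\partial s}F(\phi(\gamma(s),u))=S(s,u)\,g(s,u)$, where $g(s,u):=\tfrac{\partial}{\partial s}\phi(\gamma(s),u)$ and $S(s,u)\in J_F(\phi(\gamma(s),u))$.

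It then remains to recognize this integral as an element of $U$. The proof of Theorem \ref{th:flowPathDifferentiable} establishes that, for the Carath\'eodory selection $M$ of $\mathcal{U}_\gamma$ associated with $S$, one has $g(s,u)=M(s,u)\dot\gamma(s)$ for almost every $(s,u)\in[0,1]\times[0,T]$; by Fubini's theorem, for almost every $s$ this identity holds for almost every $u$. Using $\tfrac{\partial}{\partial u}M(s,u)=S(s,u)M(s,u)$ with $M(s,0)=I$, I substitute and integrate:
\begin{align*}
\int_{0}^{\tau(s)}S(s,u)\,g(s,u)\,du=\left(\int_{0}^{\tau(s)}S(s,u)M(s,u)\,du\right)\dot\gamma(s)=(M(s,\tau(s))-I)\,\dot\gamma(s).
\end{align*}
Combining the two displays gives $\dot W(s)=M(s,\tau(s))\dot\gamma(s)+F(\phi(\gamma(s),\tau(s)))\dot\tau(s)$ for almost all $s$, which is exactly the chain rule for the selection $M(s,\tau(s))\in U(\gamma(s),\tau(s))$. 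Since $M$ was an arbitrary element of a countable dense family of Castaing selections, Lemma \ref{lem:dense} upgrades this to all measurable selections of $(x,t)\rightrightarrows(U(x,t),F(\phi(x,t)))$, proving conservativity; path differentiability of $\phi$ is then immediate from the definition.

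The main obstacle is the Leibniz differentiation with the variable upper limit $\tau(s)$, where the $s$-partial derivative of the integrand exists only almost everywhere. A naive splitting of the increment of $W$ into a time part and a space part reintroduces a diagonal evaluation of the form $\tfrac{\partial}{\partial s}\big[\int_0^{t}F(\phi(\gamma(\cdot),u))\,du\big]\big|_{t=\tau(s)}$, which is not automatically licit since the curve $s\mapsto(s,\tau(s))$ traverses a Lebesgue null set where the partial may fail to exist. I would circumvent this by mollifying $u\mapsto F(\phi(\gamma(\cdot),u))$ in the variable $s$ to obtain an approximation that is genuinely $C^1$ in $s$ (hence to which the classical Leibniz and chain rules apply with no diagonal difficulty), and then passing to the limit using uniform convergence of the mollified functions together with $L^1$ convergence of their $s$-derivatives, which are uniformly bounded by $K$ times the Lipschitz constants at play. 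This single analytic point is the only ingredient genuinely beyond the machinery already developed for Theorem \ref{th:flowPathDifferentiable}; everything else reuses the identity $g=M\dot\gamma$ and the density of Castaing selections.
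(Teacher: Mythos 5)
Your proposal is correct, but it takes a genuinely different route from the paper. The paper never differentiates along a space-time curve at all: it lifts the dynamics to $\RR^{p+1}$ by introducing a frozen time-scale variable, $\dot Y(s)=\alpha F(Y(s))$, $\dot\alpha(s)=0$, so that $\phi(x,\alpha s)=\tilde\phi(x,\alpha,s)$ and the time variable becomes an \emph{initial condition} of an autonomous system; Theorem \ref{th:flowPathDifferentiable} then applies as a black box, jointly in $(x,\alpha)$, at the fixed terminal time $s=1$, and the only remaining work is to solve the lifted variational inclusion \eqref{eq:MflowPrime} blockwise, where the $V_1$ block reproduces $U(x,\alpha s)$ after time rescaling and the $V_2$ block is shown, by uniqueness for linear Carath\'eodory ODEs, to equal $sF(X(\alpha_0 s))$ --- which at $s=1$ is exactly the pair $(U(x,t),F(\phi(x,t)))$. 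This completely sidesteps the diagonal/Leibniz difficulty that is the crux of your route. You instead rerun the computation of Theorem \ref{th:flowPathDifferentiable} along a curve $(\gamma,\tau)$ moving in time as well, extract the time component $F(\phi(\gamma(s),\tau(s)))\dot\tau(s)$ as a boundary term of a variable-upper-limit Leibniz rule, and correctly identify that the a.e.-defined partial derivative cannot naively be evaluated along the null set traced by $s\mapsto(s,\tau(s))$; your mollification fix does work (mollify $h(s,u)=F(\phi(\gamma(s),u))$ in $s$, apply the $C^1$ chain rule along the absolutely continuous curve, then pass to the limit using uniform convergence of $h_\epsilon$ and $L^1$ convergence of $\partial_s h_\epsilon(\cdot,u)$ with domination in $u$, Fubini over the region $\{(s,u):u\le\tau(s)\}$ being licit by the joint measurability of Lemma \ref{lemma-direct}). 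The remainder of your argument --- the identity $g=M\dot\gamma$ a.e.\ inherited from the proof of Theorem \ref{th:flowPathDifferentiable}, the integration $\int_0^{\tau(s)}S(s,u)M(s,u)\,du\,\dot\gamma(s)=(M(s,\tau(s))-I)\dot\gamma(s)$, and the density upgrade via Lemma \ref{lem:dense} (valid since evaluation at $\tau(s)$ is continuous from the sup norm, so $\{M_i(s,\tau(s))\}_i$ is dense in $U(\gamma(s),\tau(s))$, the second block being single-valued) --- is sound. What the paper's lift buys is economy: no new measure theory beyond Theorem \ref{th:flowPathDifferentiable}, used only through its statement. What your route buys is a self-contained, direct chain rule along arbitrary space-time curves, at the price of one extra analytic lemma (the mollified Leibniz rule), which your write-up only sketches, and of relying on internal objects of Theorem \ref{th:flowPathDifferentiable}'s proof ($S$, $M$, $g$) rather than on its conclusion.
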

\begin{proof}
                Consider the following dynamical system on $\RR^{p+1}$
                \begin{align}
                    \label{eq:flowPrime}
                    \dot{Y}(s) &= \alpha(s) F(Y(s))\\
                    \dot{\alpha}(s) &= 0 \nonumber
                \end{align}
							 Consider $\tilde{F} \colon \RR^{p+1} \rightarrow \RR^{p+1}$ the vector field associated to the ODE in \eqref{eq:flowPrime} with the state $(Y,\alpha)$. It is given by
					$$
				\tilde{F}(Y(s),\alpha) = \begin{pmatrix}
					 \alpha F(Y(s))\\ 0
					\end{pmatrix}
					$$
					We can compute a conservative Jacobian for $\tilde{F}$ using the product rule of differential calculus and component-wise aggregation, both valid for conservative Jacobians  \cite[Lemmas 3 and 5]{bolte2020conservative}.
				    We obtain a conservative Jacobian for $\tilde{F}$ as follows:
					
								\begin{align}
												(x, \alpha) \rightrightarrows 
												\begin{pmatrix}
																\alpha J_F(x) & F(x)\\
																0&0
												\end{pmatrix}
												\label{eq:conservativeTildeF}
								\end{align}

								Denote by $\tilde{\phi} \colon \RR^{p+1} \rightarrow \RR^{p+1}$ the flow associated to \eqref{eq:flowPrime}, and recall that $\phi$ is the flow of the system \eqref{eq:mainODE}. We have, by a simple rescaling of time, for any $x \in \RR^p$, $\alpha \in \RR$ and any $s \in [0,1]$
								\begin{align}
												\phi(x,\alpha s) = \tilde{\phi}(x,\alpha,s).
												\label{eq:flowIdentity}
								\end{align}
								Setting $\alpha=t$ and $s=1$, by Theorem \ref{th:flowPathDifferentiable}, the mapping $(x,t) \mapsto \tilde{\phi}(x,t,1) = \phi(x,t)$ is path differentiable jointly in $(x,t)$. Let us compute a conservative Jacobian from Theorem \ref{th:flowPathDifferentiable}. The differential inclusion in \eqref{eq:sensitivityDI} can be expressed blockwise. Fix $x_0 \in \RR^p$ and $\alpha_0 \in \RR$ initial conditions for \eqref{eq:flowPrime} and denote by $Y \colon [0,1] \rightarrow \RR^p$ the solution to \eqref{eq:flowPrime}, note that $\alpha(t) = \alpha_0$ for all $t\in [0,1]$. Then, one has, for all $t \in [0,1]$, $Y(s) = X(\alpha_0 s)$ where $X$ is the solution to \eqref{eq:mainODE} starting at $x_0$. Moreover, one has
								\begin{align}
												\begin{pmatrix}
																\dot{V_1}(s)& \dot{V_2}(s) \\
																\dot{V_3}(s)& \dot{V_4}(s)
												\end{pmatrix}
												\in  
												\begin{pmatrix}
																\alpha_0 J_F(Y(s))V_1(s) + F(Y(s)) V_3(s)& \alpha_0 J_F(Y(s))V_2(s) + F(Y(s)) V_4(s) \\
																0 & 0
												\end{pmatrix},
												\label{eq:MflowPrime}
								\end{align}
								where $V_1 \in \RR^{p \times p}$ and $V_1(0)$ is the identity, $V_2 \in \RR^{p \times 1}$ and $V_2(0) = 0$, $V_3 \in \RR^{1 \times p}$ and $V_3(0)=0$, $V_4 \in \RR$ and $V_4(0) = 1$. It follows that $V_3 =0$ and $V_4 =1$ for all $t$ and
								\begin{align}
												\dot{V_1}(s) &\in \alpha_0 J_F(Y(s))V_1(s) = \alpha_0 J_F(X(\alpha_{0} s))V_1(s) \nonumber\\
												\dot{V_2}(s) &\in \alpha_0 J_F(Y(s))V_2(s) + F(Y(s)). 
												\label{eq:MflowPrime2}
								\end{align}
								The two dynamics are independant. Furthermore, solutions of the first line are also solutions of \eqref{eq:sensitivityDI} modulo a simple time rescaling by a factor $\alpha_0$. This is more explicitly written $V_1(s) \in U(x_0, \alpha s)$ for all $s \in [0,1]$, where $U$ is given in \eqref{eq:candidateConservativeField}. Conversely, any $V \in U(x_0, \alpha s)$ is related to a solution of the first line of \eqref{eq:MflowPrime2}.  Let us show that $V_2 \colon t \mapsto sF(X(\alpha_0 s))$ is the unique solution to the second line. By path differentiability of $F$, the function $s \mapsto F(X(\alpha_0 s))$ is differentiable for almost all $t$, such that
								\begin{align*}
												\frac{d}{ds} F(X(\alpha_0 s)) &= J(X(\alpha_0 s)) \frac{d}{ds} X(\alpha_0 s)) & \forall J \in J_F(X(\alpha_0 s))\\
												& = \alpha_0 J(X(\alpha_0 s))F(X(\alpha_0 s)) & \forall J \in J_F(X(\alpha_0 s))
								\end{align*}
								The function $s \mapsto s F(X(\alpha_0 s))$ is absolutely continuous and multiplication by $s$ is a differentiable operation. Then, for almost all $s \in [0,1]$, substituting $Y$ for $X$
								\begin{align}
												\frac{d}{ds} [sF(Y(s))] & = \alpha_0 J(Y(s)) \left[ sF(Y(s))\right] + F(Y(s)) & \forall J \in J_F(Y(s))
												\label{eq:explicitSolutionM2}
								\end{align}
								Now, given a measurable selection in $s \mapsto S(s) \in J_F(Y(s))$, the function $(s, V_2) \mapsto S(s) V_2$ is Lipschitz in its second argument, so that the corresponding solution $V_2$ in \eqref{eq:MflowPrime2} is unique \cite[Theorem 2, \S 1, Chapter 1]{filippov1988differential}. Moreover, by \eqref{eq:explicitSolutionM2}, since $0F(Y(0)) = 0$, we have $V_2(s) = sF(Y(s))$ for all $s \in [0,1]$. This shows that any solution to \eqref{eq:MflowPrime} is given by $V_3 = 0$, $V_4 = 1$, and for all $s \in [0,1]$,
								\begin{align*}
												V_1(s) &\in U(x_0, \alpha_0 s)\\
												V_2(s) &= sF(X(\alpha_0 s)).
								\end{align*}
								Thanks to Theorem \ref{th:flowPathDifferentiable}, we have
								\begin{align*}
												(x,\alpha) \rightrightarrows (U(x, \alpha), F(\phi(x,\alpha))),
								\end{align*}
								is conservative for the mapping $(x,\alpha) \mapsto \tilde{\phi}(x,\alpha,1)$.
								Using the fact that $\phi(x,\alpha) = \tilde{\phi}(x,\alpha,1)$ for all $x \in \RR^p$, $\alpha \in \RR$, this proves the desired result using $\alpha=t$.
\end{proof}

\section{Consequences: backward and forward derivatives}

\label{sec_adjoint}

In this section, we focus on an optimization of integral costs under ODE constraint and prove that, as soon as the ODE vector field and the integrand are path differentiable, then the integral cost is itself path differentiable. One should see these results as consecutive results of Sections \ref{sec_preliminary} and \ref{sec_flow}. We provide further results about forward and backward derivatives propagation with a nonsmooth adjoint system.

\subsection{Differentiation of a terminal cost}
The following result is a direct consequence of Theorem \ref{th:flowPathDifferentiable} and the fact that product of conservative Jacobian is a also conservative Jacobian, as stated in \cite[Lemma 5]{bolte2020conservative}.
\begin{corollary}
				Let $\delta_T \colon \RR^p \rightarrow \RR$ be locally Lipschitz and path differentiable. Let $D_{\delta_T} \colon \RR^p \rightrightarrows \RR^p$ be conservative gradient for $\delta_T$.
				Then the following set
				\begin{align}
								D_T \colon x \rightrightarrows  \left\{ V^\top u ,\, V \in U(x,T),\, u \in D_{\delta_T}(\phi(x,T))  \right\}
								\label{eq:conservativeIntegralLoss1}
				\end{align}
				is a conservative gradient for $x \mapsto \delta_T(\phi(x,T))$.
				\label{lem:conservativeIntegralLoss1}
\end{corollary}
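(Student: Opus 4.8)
The plan is to recognize the target map $x \mapsto \delta_T(\phi(x,T))$ as a composition of the flow endpoint map $\psi := \phi(\cdot,T)\colon \RR^p \to \RR^p$ with the scalar function $\delta_T\colon\RR^p\to\RR$, and then to invoke the chain rule for conservative Jacobians. First I would record the two ingredients. By Theorem \ref{th:flowPathDifferentiable}, the map $x \rightrightarrows U(x,T)$ is a conservative Jacobian of $\psi$, taking values in $\RR^{p\times p}$. By assumption $D_{\delta_T}$ is a conservative gradient of $\delta_T$, so that $x \rightrightarrows \{u^\top \colon u \in D_{\delta_T}(x)\}$ is a conservative Jacobian of $\delta_T$ with values in $\RR^{1\times p}$ (the usual transpose relating the gradient convention to the Jacobian convention).

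Next I would apply the composition rule for conservative Jacobians \cite[Lemma 5]{bolte2020conservative}: if $D_g$ is conservative for $g$ and $D_f$ is conservative for $f$, then $x \rightrightarrows \{AB \colon A \in D_f(g(x)),\, B \in D_g(x)\}$ is conservative for $f\circ g$. Taking $g=\psi$ with $D_g = U(\cdot,T)$, and $f=\delta_T$ with $D_f(y) = \{u^\top \colon u\in D_{\delta_T}(y)\}$, the composition $\delta_T\circ\psi$ admits the conservative Jacobian $x \rightrightarrows \{u^\top V \colon u\in D_{\delta_T}(\phi(x,T)),\, V \in U(x,T)\}$, valued in $\RR^{1\times p}$. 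Passing from the Jacobian (row) to the gradient (column) convention, $u^\top V$ becomes $V^\top u$, which is precisely the defining expression of $D_T$ in \eqref{eq:conservativeIntegralLoss1}. Since transposition leaves the chain-rule identity $\frac{d}{dt}(\delta_T\circ\psi)(\gamma(t)) = \langle V^\top u,\, \dot{\gamma}(t)\rangle$ unchanged for every absolutely continuous $\gamma$ and almost every $t$, this shows $D_T$ is a conservative gradient of $x\mapsto \delta_T(\phi(x,T))$.

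To be fully rigorous I would also check that $D_T$ meets the structural requirements of a conservative gradient, namely nonempty values, local boundedness and graph closedness. Nonemptiness and local boundedness are immediate from the corresponding properties of $U(\cdot,T)$ (Lemma \ref{lem:lipschitzSolution} and Remark \ref{rem:Uclosedgraph}) and of $D_{\delta_T}$, together with the Lipschitz continuity of $x\mapsto\phi(x,T)$. Graph closedness of the product map follows from graph closedness of $U(\cdot,T)$ and of $D_{\delta_T}\circ\phi(\cdot,T)$, using local boundedness to pass to the limit in the products $V^\top u$ (and one may always replace $D_T$ by its pointwise convex hull, still conservative, if a convex-valued map is desired). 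I expect this bookkeeping, keeping the transpose conventions consistent and verifying that the product map inherits graph closedness from its factors, to be the only delicate point; all the genuine analytic content is carried by Theorem \ref{th:flowPathDifferentiable} and the established product rule, so no new estimate is required.
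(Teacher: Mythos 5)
Your proof is correct and follows essentially the same route as the paper, which treats the corollary as a direct consequence of Theorem \ref{th:flowPathDifferentiable} combined with the composition rule for conservative Jacobians \cite[Lemma 5]{bolte2020conservative}. Your extra bookkeeping on the transpose (gradient versus Jacobian) convention and on graph closedness and local boundedness of $D_T$ simply makes explicit what the paper leaves implicit.
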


\subsection{Forward propagation of derivatives of integral costs}

In this subsection, we show how our framework allows to compute forward derivatives of integral costs. Such results already exist in a nonsmooth context with other classes of functions such as the functions admitting lexicographic derivatives (see e.g., \cite{barton2018computationally}). Note that this result will be instrumental in deriving a backward derivative propagation in the form of an adjoint system. 

\begin{theorem}
				Let $\delta \colon \RR^p \rightarrow \RR$ be locally Lipschitz and path differentiable. Let $D_\delta \colon \RR^p \rightrightarrows \RR^p$ be a conservative Jacobian for $\delta$, with convex values. For $T >  0$, set
				\begin{align}
								\Delta(x) =  \int_{t=0}^{t=T} \delta(\phi(x,t))dt.
								\label{eq:integralLoss1}
				\end{align}
				Then the following set valued field is a conservative gradient for $\Delta$,
				\begin{align}
								D_\Delta \colon x \rightrightarrows  \left\{ \int_{t=0}^{t=T} V(t)^\top w(t) dt,\, V \in \mathcal{U}(x),\ w \in \mathcal{W}(x)  \right\}
								\label{eq:conservativeIntegralLoss2}
				\end{align}
				where $\mathcal{W}(x)$ is the set of measurable selections $w(t) \in D_\delta(\phi(x,t))$ for all $t \in [0,T]$ and $x \in \RR^p$. In particular $V$ could be any solution of $\eqref{eq:sensitivityDI}$.
				\label{th:conservativeIntegralLoss2}
\end{theorem}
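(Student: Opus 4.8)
The plan is to reduce the integral cost to the already-established slice-wise conservativity of the composition $x \mapsto \delta(\phi(x,t))$, and then push the $t$-integration through by Fubini together with differentiation under the integral sign. First I would check that $D_\Delta$ is a legitimate candidate conservative gradient in the sense of the Definition, namely nonempty, locally bounded and graph closed. Nonemptiness follows because $\mathcal{U}(x) \neq \emptyset$ by Lemma \ref{lem:lipschitzSolution} and because a measurable selection $t \mapsto w(t) \in D_\delta(\phi(x,t))$ exists (the map $t \rightrightarrows D_\delta(\phi(x,t))$ has closed graph by graph closedness of $D_\delta$ and continuity of $t \mapsto \phi(x,t)$, hence admits a measurable selection). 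Local boundedness follows since the integrand $V(t)^\top w(t)$ is uniformly bounded on $[0,T]$, using the explicit bound $\|V(t)\| \le \sqrt{p}\exp(KT)$ from Lemma \ref{lem:lipschitzSolution} and local boundedness of $D_\delta$; and graph closedness follows from a limiting argument combining the closed graph of $\mathcal{U}$ (Remark \ref{rem:Uclosedgraph}), the closed graph of $D_\delta$, the uniform bounds, and dominated convergence on $[0,T]$.

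Next I would fix an absolutely continuous curve $\gamma \colon [0,1] \to \RR^p$ and reuse the measurable-selection machinery behind Theorem \ref{th:flowPathDifferentiable}. Invoking Lemma \ref{lem:caratheodoryCastaingGamma} produces the countable family of Carath\'eodory selections $M(r,t) \in U(\gamma(r),t)$ together with $S(r,t) \in J_F(\phi(\gamma(r),t))$ satisfying $\partial_t M = SM$, and by the same kind of selection argument a jointly measurable $w(r,t) \in D_\delta(\phi(\gamma(r),t))$. From the proof of Theorem \ref{th:flowPathDifferentiable} I already have that $g(r,t) := \tfrac{d}{dr}\phi(\gamma(r),t)$ equals $M(r,t)\dot{\gamma}(r)$ for almost all $(r,t)$. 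For each fixed $t$, Corollary \ref{lem:conservativeIntegralLoss1}, read with terminal time $t$ in place of $T$, tells me that $x \rightrightarrows \{V(t)^\top u : V \in U(x,t),\, u \in D_\delta(\phi(x,t))\}$ is conservative for $x \mapsto \delta(\phi(x,t))$; applying this along $\gamma$ gives $\tfrac{d}{dr}\delta(\phi(\gamma(r),t)) = w(r,t)^\top g(r,t) = w(r,t)^\top M(r,t)\dot{\gamma}(r)$ for almost all $r$, hence, by Fubini, for almost all $(r,t)$.

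The core computation then integrates this identity twice. Writing $\Delta(\gamma(1)) - \Delta(\gamma(0)) = \int_0^T [\delta(\phi(\gamma(1),t)) - \delta(\phi(\gamma(0),t))]\,dt$ and using the integral form of slice-wise conservativity in the inner variable, I obtain $\Delta(\gamma(1)) - \Delta(\gamma(0)) = \int_0^T \int_0^1 w(r,t)^\top M(r,t)\dot{\gamma}(r)\,dr\,dt$. A second application of Fubini, justified because the integrand is jointly measurable, uniformly bounded on $[0,1]\times[0,T]$, and $\dot{\gamma} \in L^1$, lets me interchange the two integrals and regroup into $\int_0^1 \big\langle \int_0^T M(r,t)^\top w(r,t)\,dt,\ \dot{\gamma}(r)\big\rangle\,dr$, where for each $r$ the inner integral lies in $D_\Delta(\gamma(r))$ with the choice $V = M(r,\cdot) \in \mathcal{U}(\gamma(r))$ and $w = w(r,\cdot) \in \mathcal{W}(\gamma(r))$. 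Since $M$ ranges over a dense family of selections and $w$ is an arbitrary measurable selection, the density argument of Lemma \ref{lem:dense} upgrades this to the assertion that the identity holds for every measurable selection of $D_\Delta(\gamma(\cdot))$, which is precisely the integral characterization of conservativity; the final sentence of the statement (that $V$ may be any solution of \eqref{eq:sensitivityDI}) is immediate from $\mathcal{U}(x)$ being exactly that solution set.

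The main obstacle I anticipate is the bookkeeping around the interchange of the $r$-derivative with the $t$-integral, i.e. differentiation under the integral sign via Lemma \ref{lem:leibniz}, carried out consistently with the passage from the pointwise-in-$(r,t)$ conservativity identity to a single common full-measure set in $r$ through Fubini, so that the ``for all $V$, for all $w$'' quantifiers survive. The verification that $D_\Delta$ has closed graph, while conceptually routine, is the other place where the uniform bounds and a dominated-convergence argument over $[0,T]$ must be handled carefully.
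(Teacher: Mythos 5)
Your overall strategy matches the paper's: verify that $D_\Delta$ is nonempty, locally bounded and graph closed, use slice-wise conservativity of $x \mapsto \delta(\phi(x,t))$ (Corollary \ref{lem:conservativeIntegralLoss1} with horizon $t$), push the $t$-integration through by Fubini/Leibniz, and conclude by a density argument. However, your final step has a genuine gap. Conservativity of $D_\Delta$ requires the integral identity for \emph{every} measurable selection $d(r)\in D_\Delta(\gamma(r))$, whereas your computation yields it only for selections of the special product form $d(r)=\int_0^T M(r,t)^\top w(r,t)\,dt$ with $M$ and $w$ \emph{jointly} measurable in $(r,t)$. The appeal to Lemma \ref{lem:dense} does not close this: that lemma needs a \emph{countable} family of selections of $D_\Delta$ that is pointwise dense in $D_\Delta(\gamma(r))$, and your family is indexed by all measurable $w$'s (uncountable, so the per-$w$ full-measure sets in $r$ cannot be intersected), while replacing the $w$'s by a countable Castaing family of $t\rightrightarrows D_\delta(\phi(\gamma(r),t))$ does not give density of the resulting integrals: pointwise density of selections does not imply density of their integrals (for the constant map $[-1,1]$, selections oscillating in sign can be pointwise dense for every $t$ while all their integrals are close to zero). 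The paper avoids this issue with a different device: it fixes an \emph{arbitrary} measurable selection $d_\Delta$ of $D_\Delta$, realized at each point by some $V_x, w_x$ chosen with no measurability in $x$ whatsoever, and recovers the joint measurability needed for the Leibniz rule by observing that $(r,t)\mapsto \dot{\gamma}(r)^\top V_{\gamma(r)}(t)^\top w_{\gamma(r)}(t)$ coincides, on a full-measure set, with the Lebesgue measurable function $(r,t)\mapsto \frac{\partial}{\partial r}\delta(\phi(\gamma(r),t))$; Lemma \ref{lem:dense} is then applied to a Castaing representation of $D_\Delta$ itself. Without this trick, or some substitute (e.g.\ enlarging your countable family by patching Castaing selections of $D_\delta\circ\phi$ over rational subintervals and proving $L^1$ density of such patchings), your argument establishes the chain rule only for a strict subfamily of selections and hence does not prove conservativity of $D_\Delta$.

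A secondary issue: in the closed-graph step you invoke dominated convergence, but the selections $w_k$ only converge \emph{weakly} in $L^2$ after extraction; identifying the limit as a selection of $D_\delta(\phi(\bar{x},\cdot))$ requires Mazur's lemma together with the \emph{convexity} of the values of $D_\delta$. This is precisely where the convexity hypothesis of the theorem is consumed, and your sketch does not account for it.
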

\begin{proof}
				We prove first that $D_\Delta$ has a closed graph, nonempty values and is locally bounded. Both $\mathcal{U}(x)$ and $\mathcal{W}(x)$ are nonempty valued and locally bounded thanks to the local boundedness of $D_\delta$, $\phi$ and $\mathcal{U}$ proved in Lemma \ref{lem:lipschitzSolution}.
				Therefore $D_\Delta$ is locally bounded and nonempty valued. Second, we sketch the proof of graph closedness. Consider a sequence $(x_k)_{k \in \NN}$ converging to $\bar{x}$, and $(d_k)_{k \in \NN}$ converging to $\bar{d}$, such that, for each $k \in \NN$, the sequence $(d_k)_{k\in\mathbb{N}}$ is defined by
				\begin{align*}
								d_k = \left\{  \int_{t=0}^{t=T} V_k(t)^\top w_k(t) dt,\, V_k \in \mathcal{U}(x_k),\, w_k \in \mathcal{W}(x_k) \right\}.
				\end{align*}
				The sequence $(V_k)_{k\in \NN}$ is bounded and Lipschitz (as proven in Lemma \ref{lem:lipschitzSolution}) uniformly in $k$, therefore we can use the Arzel\'a-Ascoli's Theorem \cite[Theorem 4.25]{brezis2010functional}: up to a subsequence, $V_k$ converges to a given $\bar{V}$ uniformly on $[0,T]$. As detailed in Remark \ref{rem:Uclosedgraph}, it holds that $\bar{V} \in \mathcal{U}(\bar{x})$. The sequence $(w_k)_{k \in \NN}$ is bounded in $L^2([0,T])$ (and in $L^\infty([0,T])$) so it has a weakly convergent subsequence by \cite[Theorem 17, Section 14]{royden2010real} whose limit will be denoted by $\bar{w}\colon  [0,T] \rightarrow \RR^p$. Up to a convex combination, the convergence occurs strongly and therefore pointwise almost everywhere by invoking Mazur's Lemma \cite[Corollary 3.8]{brezis2010functional}. We deduce that $\bar{w}(t) \in D_\delta(\phi(\bar{x},t))$ for almost all $t \in [0,T]$ using the fact that $D_\delta$ has convex values. Combining uniform convergence of $V_k$ to $\bar{V} \in \mathcal{U}(\bar{x})$ and weak convergence of $w_k$ to $\bar{w}$, we have that $d_k \to \int_{t=0}^{t=T} \bar{V}(t)^\top \bar{w}(t) dt \in D_\Delta(\bar{x})$ and $\bar{d} \in D_\Delta(\bar{x})$ by uniqueness of the limit. 

                From now on, we fix a Borel measurable selection $d_\Delta$ such that $d_\Delta(x) \in D_\Delta(x)$ for all $x \in \RR^p$. This means that, for all $x\in\RR^p$, there is a continuous function $V_x \in \mathcal{U}(x)$ and a measurable function $w_x \in \mathcal{W}(x)$ such that 
                \begin{align*}
                    d_\Delta(x) = \int_{t=0}^{t=T} V_x(t)^\top w_x(t) dt.
                \end{align*}

				Now, fix an absolutely continuous path $\gamma \colon [0,1] \rightarrow \RR^p$. Since $\delta$ and $\phi$ are Lipschitz functions, and since $\gamma$ is absolutely continuous, we have that
				\begin{align*}
								r \mapsto \Delta(\gamma(r)) := \int_{t=0}^{t=T} \delta(\phi(\gamma(r),t))dt,
				\end{align*}
				is absolutely continuous. 
				By Corollary \ref{lem:conservativeIntegralLoss1}, for all $t\in [0,T]$, for a.e. $r \in [0,1]$,
				\begin{align}
								\frac{\partial }{\partial r}\delta(\phi(\gamma(r),t)) &= \dot{\gamma}(r) ^\top M^\top  v,  &\forall v \in D_\delta(\phi(\gamma(r),t)),\, \forall M \in U(\phi(\gamma(r),t)). 
					    \label{eq:almostAllMv}
				\end{align}
				Denote by $E \subset [0,1] \times [0,T]$ the set where \eqref{eq:almostAllMv} holds. Let us show that this set is Lebesgue measurable.
				
				Consider the function
                \begin{align*}
                    f \colon (M,v,r,t) \mapsto \left\|\frac{\partial }{\partial r}\delta(\phi(\gamma(r),t)) - \dot{\gamma}(r)^TM^Tv \right\|^2
                \end{align*}
                if $\frac{\partial }{\partial r}\delta(\phi(\gamma(r),t)) $ and $\dot{\gamma}(r)$ are well defined, and $1$ otherwise.
                The function $f$ is jointly Lebesgue measurable  in $(r,t)$ for fixed $M$ and $v$ and jointly continuous in $(M,v)$ for fixed $(r,t)\in [0,1]\times \mathbb{R}_+$. Then, it is a Carath\'eodory function. Therefore, the function
                \begin{align*}
                    \tilde{f} \colon (r,t) \mapsto \max\quad & \left\|\frac{\partial }{\partial r}\delta(\phi(\gamma(r),t)) - \dot{\gamma}(r)^TM^Tv \right\|^2 \\
                    \mathrm{s.t.} \quad & v \in D_\delta(\phi(\gamma(t),t)) \\
                    &M \in U(\phi(\gamma(r),t)) 
                \end{align*}
                is Lebesgue measurable thanks to \cite[Theorem 18.19]{aliprantis2005infinite}.
                The set $\{(r,t)\in [0,1]\times \mathbb{R}_+, \tilde{f}(r,t)~= 0\}$ is Lebesgue measurable and corresponds to the set where \eqref{eq:almostAllMv} holds. Therefore, \eqref{eq:almostAllMv} holds on a jointly measurable set. Furthermore, since \eqref{eq:almostAllMv} holds for all $t\in [0,T]$ for almost all $r\in [0,1]$, $E$ has actually full measure.

				Now consider the set
				\begin{align*}
				    S = \left\{ (r,t) \in [0,1] \times [0,T],\,  \frac{\partial }{\partial r}\delta(\phi(\gamma(r),t)) = \dot{\gamma}(r) ^\top V_{\gamma(r)}(t) ^\top  w_{\gamma(r)}(t)\right\}.
				\end{align*}
				Clearly, $E \subset S$ so that $S^c \subset E^c$. Moreover, since $E^c$ has zero measure we deduce that $S^c$ has zero (Lebesgue) measure. Therefore $S$ is measurable jointly in $(r,t)$ and the function $(r,t) \to \dot{\gamma}(r) V_{\gamma(r)}(t) w_{\gamma(r)}(t)$ is also (Lebesgue) measurable \cite[Proposition 3, Section 18.1]{royden2010real}. From Lemma \ref{lem:leibniz}, we have that $r \mapsto \Delta(\gamma(r))$ is absolutely continuous and for almost all $r\in [0,1]$, 
				\begin{align*}
				    \frac{d}{dr} \Delta(\gamma(r)) &= \int_{t=0}^{t=T} \frac{\partial }{\partial r}\delta(\phi(\gamma(r),t)) dt \\
				    &= \int_{t=0}^{t=T} \dot{\gamma}(r) ^\top V_{\gamma(r)}(t) ^\top  w_{\gamma(r)}(t)dt \\
				    &= \dot{\gamma}(r) ^\top \int_{t=0}^{t=T}  V_{\gamma(r)}(t) ^\top  w_{\gamma(r)}(t)dt \\
				    &= \dot{\gamma}(r) ^\top d_\Delta(r).
				\end{align*}
				Note that $d_\Delta$ was an arbitrary measurable selection in $D_\Delta$. Since $D_\Delta$ has a closed graph, it admits a countable Castaing representation (\cite[Corollary 18.14]{aliprantis2005infinite} and \cite[Theorem 18.20]{aliprantis2005infinite}). Then Lemma \ref{lem:dense} applies and conservativity is proved, which leads to the desired result.
\end{proof}

\subsection{Path differentiable adjoint method for integral costs}

We describe a path differentiable version of the adjoint method for integral costs under ODE constraints. 
\begin{corollary}
				Let $\delta,\delta_T \colon \RR^p \mapsto \RR$ be locally Lipschitz and path differentiable functions. Let $D_\delta \colon \RR^p \rightrightarrows \RR^p$ and $D_{\delta_T}\colon\mathbb{R}^p \rightrightarrows \RR^p$ be conservative Jacobians for $\delta$ and $\delta_T$, respectively where $D_\delta$ has convex values.  
				
				For any $x \in \RR^p$, any $w \colon [0,T] \to \RR^p$ measurable such that $w(t) \in D_\delta(\phi(x,t))$ for all $t \in [0,T]$, any $J \colon [0,T] \to \RR^{p\times p}$ measurable such that $J(t) \in J_F(\phi(x,t))$ for all $t \in [0,T]$ and any $u \in D_{\delta_T}(\phi(x,T))$, the unique absolutely continuous solution $\lambda \colon [0,T] \to \RR^p$ to the system
				\begin{align}
					\dot{\lambda}(t) &= -w(t) - J(t)^\top \lambda(t), \nonumber\\
					\lambda(T) &= u 
					\label{eq:conservativeIntegralLossBackward}
				\end{align}
			  satisfies $\lambda(0) \in D_\Delta(x) + D_{T}(x)$ where $D_\Delta$ and $D_T$ are defined in Corollary \ref{lem:conservativeIntegralLoss1} and Theorem \ref{th:conservativeIntegralLoss2}.
				\label{cor:conservativeIntegralLossBackward}
\end{corollary}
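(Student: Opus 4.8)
The plan is to reduce the statement to the classical linear-ODE adjoint identity, now read inside the conservative calculus. First I would fix $x$, $w$, $J$, and $u$ as in the statement and introduce the companion forward sensitivity matrix $V \colon [0,T] \to \RR^{p\times p}$, defined as the unique absolutely continuous solution of the linear system $\dot V(t) = J(t)V(t)$ with $V(0) = I$. Since $J(t) \in J_F(\phi(x,t))$ for all $t$, this $V$ automatically satisfies $\dot V(t) \in J_F(\phi(x,t))V(t)$ for almost all $t$, hence $V$ is a solution of \eqref{eq:sensitivityDI}, so $V \in \mathcal{U}(x)$ and in particular $V(T) \in U(x,T)$ by \eqref{eq:candidateConservativeField}. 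Well-posedness of both $V$ and $\lambda$ follows from the boundedness of $J$ in \eqref{eq:boundJacobian} together with the linear (hence Lipschitz) dependence on the state, invoking \cite[Theorem 2, \S 1, Chapter 1]{filippov1988differential}.

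The core of the argument is a single differentiation. The map $t \mapsto V(t)^\top \lambda(t)$ is absolutely continuous as a product of two Lipschitz functions, so the product rule holds for almost all $t$:
\begin{align*}
\frac{d}{dt}\left[V(t)^\top \lambda(t)\right] = \dot V(t)^\top \lambda(t) + V(t)^\top \dot\lambda(t).
\end{align*}
Substituting $\dot V(t)^\top = V(t)^\top J(t)^\top$ and $\dot\lambda(t) = -w(t) - J(t)^\top \lambda(t)$, the two terms carrying $J(t)^\top$ cancel, leaving $\frac{d}{dt}[V(t)^\top\lambda(t)] = -V(t)^\top w(t)$. Integrating over $[0,T]$ and using $V(0) = I$ together with $\lambda(T) = u$ yields
\begin{align*}
\lambda(0) = V(T)^\top u + \int_{t=0}^{t=T} V(t)^\top w(t)\, dt.
\end{align*}

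It then remains to match the two summands to the relevant conservative gradient sets. The integral term lies in $D_\Delta(x)$ from Theorem \ref{th:conservativeIntegralLoss2}, since $V \in \mathcal{U}(x)$ and $w \in \mathcal{W}(x)$ by construction; the boundary term $V(T)^\top u$ lies in $D_T(x)$ from Corollary \ref{lem:conservativeIntegralLoss1}, since $V(T) \in U(x,T)$ and $u \in D_{\delta_T}(\phi(x,T))$. Hence $\lambda(0) \in D_\Delta(x) + D_T(x)$, which is the claim.

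I do not expect a genuine obstacle: the result is essentially a verification that the standard adjoint identity survives the passage to measurable selections $J$ and $w$. The only points needing care are that the chosen selection $J$ produces a \emph{bona fide} solution of the sensitivity inclusion \eqref{eq:sensitivityDI} rather than an arbitrary auxiliary object (this is precisely where the membership $J(t) \in J_F(\phi(x,t))$ is used to place $V$ in $\mathcal{U}(x)$), and the measurability and integrability required to apply the product rule and to make sense of $\int_0^T V^\top w\, dt$, both of which follow from the Lipschitz regularity of $V$ granted by Lemma \ref{lem:lipschitzSolution} and the boundedness of $w$.
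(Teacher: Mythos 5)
Your proof is correct and takes essentially the same route as the paper's: both construct the forward sensitivity matrix (your $V$, the paper's $M$) as the unique solution of $\dot V = JV$, $V(0)=I$ via \cite[Theorem 2, \S 1, Chapter 1]{filippov1988differential}, both arrive at the identity $\lambda(0) = V(T)^\top u + \int_0^T V(t)^\top w(t)\,dt$ by integration by parts, and both conclude by recognizing the two summands as specific elements of $D_T(x)$ and $D_\Delta(x)$. Your direct differentiation of $t \mapsto V(t)^\top\lambda(t)$ with the cancellation of the $J^\top$ terms is just a streamlined presentation of the paper's add-and-subtract argument followed by specializing $\lambda$ to the adjoint solution.
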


\begin{proof} Fix $x\in\mathbb{R}^p$. Fix $w$ and $J$ as in the statement of the corollary. This defines a unique $M \in \mathcal{U}(x)$ by solving \eqref{eq:sensitivityDI} $\dot{M}(t) = J(t)M(t)$ with $M(0) = I$ \cite[Theorem 2, \S 1, Chapter 1]{filippov1988differential}.

For any absolutely continuous function $\lambda \colon [0,T] \rightarrow \RR^p$, we have
\begin{align*}
				\int_{t=0}^{t=T} &M(t)^\top  w(t) dt =\\
				& \int_{t=0}^{t=T}M(t)^\top  w(t) +\left( J(t) M(t) - J(t) M(t) \right)^\top\lambda(t)dt
\end{align*}
Using Lemma \ref{lem:ipp}, we have
\begin{align*}
				 \int_{t=0}^{t=T}  (J(t) M(t))^\top\lambda(t) dt  = \int_{t=0}^{t=T}  \dot{M}(t)^\top \lambda(t) dt = &-\lambda(0) + M(T)^\top \lambda(T)\\
				&-\int_{0}^\top M(t)^\top \dot{\lambda}(t)  dt.
\end{align*}
Hence, we have for any $u \in D_T(\phi(x,T))$,
\begin{align*}
&M(T)^\top u +  \int_{t=0}^{t=T} M(t)^\top w(t) dt \\
=\;&\int_{t=0}^{t=T} M(t)^\top \left(w(t) + J(t)^\top \lambda(t) + \dot{\lambda}(t)\right)  dt + M(0)\lambda(0) + M(T)^\top (u- \lambda(T))
\end{align*}
The latter holds for any absolutely continuous function $\lambda$, and in particular, using \cite[Theorem 2, \S 1, Chapter 1]{filippov1988differential}, one can choose $\lambda$ as the unique absolutely continuous solution to 
\begin{align}
				\dot{\lambda}(t) &= - w(t) - J(t)^\top \lambda(t)\\
				\lambda(T) &=  u.
				\label{eq:backwardODE}
\end{align}
Using the fact that $M(0)$ is the identity, one has finally
\begin{align*}
				M(T)^\top u + \int_{t=0}^{t=T} M(t)^\top w(t) dt =& M(0)\lambda(0) = \lambda(0).
\end{align*}
The term $\lambda(0)$ being defined as the sum of two specific elements in $D_\Delta$ and $D_T$ (see Corollary \ref{lem:conservativeIntegralLoss1} and Lemma \ref{th:conservativeIntegralLoss2}), this means that $\lambda(0)\in D_\Delta(x) + D_T(x)$, which concludes the proof.
\end{proof}
\begin{remark}
    The system \eqref{eq:conservativeIntegralLossBackward} is typically solved backward in time. Setting $g \colon s \mapsto \lambda(T - s)$, we have, for all $s \in [0,T]$,
    \begin{align*}
        g(0) &= u\\
        \dot{g}(s) &= - \dot{\lambda}(T-s) = w(T-s) + J(T-s)\lambda(T-s),
    \end{align*}
    which is the backpropagation equation.
\end{remark}

\section{Minimization of integral costs with parameterized ODEs constraints}
\label{sec_neural}
This section is centered around problem \eqref{eq:integralintro}. The results combine conservative calculus rules with the elements developed in Section \ref{sec_adjoint}. 

\subsection{Problem setting}
We consider the optimization problem described in \eqref{eq:integralintro} and introduce further notations. First the constraints in \eqref{eq:integralintro} relate to the following parametrized ODE, given $T>0$, for all $t\in [0,T]$
\begin{equation}
\label{eq:param-ODE}
\dot{Z}(t):=H(Z(t),\theta), Z(0) = z,
\end{equation}
where $\theta\in\mathbb{R}^m$ denotes a vector of parameters and $H \colon \RR^{p + m} \to \RR^p$ is a Lipschitz path differentiable function. We assume that $J_H \colon \RR^{p + m} \rightrightarrows \RR^p$ is a conservative Jacobian for $H$ and that it is bounded. We denote $\psi(z,\theta,t) \in \RR^p$ the flow associated to the ODE \eqref{eq:param-ODE}. Throughout this section, $z$ will be a fixed initial condition in $\RR^p$.

We denote by $L_I \colon \RR^m \to \RR$ the integral part of the loss in \eqref{eq:integralintro}, 
\begin{align*}
    L_I \colon \theta \mapsto \int_0^T \ell(\psi(z,\theta,t)) dt.
\end{align*}
Recall that $\ell\colon \RR^p \to \RR$ is a locally Lipschitz, path differentiable functions from $\RR^p$ to $\RR$, we further assume that it admits a conservative gradients, $D_\ell \colon \RR^p \rightrightarrows \RR^p$ with convex values. 

Furthermore, we denote by $L_T \colon \RR^m \to \RR$ the integral part of the loss in \eqref{eq:integralintro}, 
\begin{align*}
    L_T \colon \theta \mapsto \ell_T(\psi(z,\theta,T)).
\end{align*}
Recall again that $\ell_T\colon \RR^p \to \RR$ a locally Lipschitz, path differentiable function and assume that it admits a conservative gradients $D_{\ell_T} \colon \RR^p \rightrightarrows \RR^p$.

With a slight abuse of notations, problem \eqref{eq:integralintro} can be reformulated equivalently as an unconstrained minimization problem with cost $L:=L_I + L_T$ with respect to variable $\theta$, that is, 
\begin{align}
\inf_{\theta\in\mathbb{R}^m} L(\theta) = \inf_{\theta\in\mathbb{R}^m} \int_0^T \ell(\psi(z,\theta,t)) dt + \ell_T(\psi(z,\theta,T)).\label{eq:param-opti}
\end{align}

The purpose of this section is to specify the results presented in Section \ref{sec_adjoint} to the parametrized ODE \eqref{eq:param-ODE} and cost \eqref{eq:param-opti} in order to address problem \eqref{eq:integralintro}. This will result in expressions for conservative gradients, $D_I$ for $L_I$ and $D_T$ for $L_T$. Setting $D_L = D_I + D_T$, the sum of these conservative gradients, we obtain a conservative gradient for $L$ \cite[Corollary 4]{bolte2020conservative}. To this end, we see the parametrized flow of \eqref{eq:param-ODE} as an unparametrized flow in a lifted space and justify formal differentiation operations using conservative calculus rules \cite{bolte2020conservative} to make connections with results presented in Section \ref{sec_adjoint}. This results in an adjoint method which can in turn be used as an oracle for $D_L$. This allows to provide a convergence result for the corresponding small step first order optimization method to seek solutions of problem \eqref{eq:param-opti}.

\subsection{Conservative Jacobian of the flow}
The following corollary reformulates Theorem \ref{th:flowPathDifferentiable} in the context of system \eqref{eq:param-ODE}.
\begin{corollary}
    \label{cor:conservativeJacFlowParam}
    Let $\psi$ be defined as in equation \eqref{eq:param-ODE} with $H \colon \RR^{p + m} \to \RR^p$ a Lipschitz path-differentiable function and $J_H \colon \RR^{p + m} \rightrightarrows \RR^p$ a bounded conservative Jacobian. Consider the matrix differential inclusion with unknown $M \in \RR^{p \times m}$, for almost all $t \in [0,T]$
    \begin{align}
    \label{eq:DIparam}
    \dot{M}(t) &= J_z(t) M(t) + J_\theta(t)\\
    \begin{pmatrix}
        J_z(t) & J_\theta(t)
    \end{pmatrix} &\in J_H(\psi(z,\theta,t),\theta), \nonumber
    \end{align}
    with initial condition $M(0) = 0_{pm} \in \RR^{p \times m}$, where we used block matrix notations in the second line. Then the set $\{M(T),\, \text{ $M$ solution to \eqref{eq:DIparam}}\}$ forms a conservative Jacobian for $\theta \mapsto \psi(z,\theta,T)$.
    
    In particular, for any $\ell_T \colon \RR^p \to \RR$, path differentiable with conservative gradient $D_{\ell_T}$ and for any $z \in \RR^p$, the following set valued map
    \begin{align*}
        D_T \colon (z, \theta) \rightrightarrows \left\{M(T)^T u,\, \text{ $M$ solution to \eqref{eq:DIparam}},\, u \in D_{\ell_T}(\psi(z,\theta,T)) \right\}
    \end{align*}
    is a conservative gradient for the function $L_T \colon \theta \mapsto \ell_T(\psi(z,\theta,T))$.
\end{corollary}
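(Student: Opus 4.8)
The plan is to reduce the parametrized statement to the unparametrized flow result of Corollary~\ref{cor:flowPathDifferentiable} by lifting the parameter $\theta$ into the state. Introduce the augmented autonomous system on $\RR^{p+m}$ with state $(Z,\Theta)$,
\begin{align*}
\dot{Z}(t) &= H(Z(t),\Theta(t)), \\
\dot{\Theta}(t) &= 0,
\end{align*}
whose vector field $\tilde{F}(Z,\Theta) = (H(Z,\Theta),0)$ is Lipschitz, path differentiable, and admits a bounded conservative Jacobian since $J_H$ is bounded. As $\Theta$ is constant along trajectories, the associated flow satisfies $\tilde{\phi}((z,\theta),t) = (\psi(z,\theta,t),\theta)$ for all $t$. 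Using the component-wise aggregation and product rules for conservative Jacobians \cite[Lemmas 3 and 5]{bolte2020conservative}, a conservative Jacobian for $\tilde{F}$ is
\begin{align*}
(z,\theta) \rightrightarrows \begin{pmatrix} J_z & J_\theta \\ 0 & 0 \end{pmatrix}, \qquad (J_z,J_\theta) \in J_H(z,\theta).
\end{align*}

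First I would carry out the block analysis of the sensitivity inclusion \eqref{eq:sensitivityDI} for $\tilde{F}$. Writing the unknown $V \in \RR^{(p+m)\times(p+m)}$ in blocks $V_{11}\in\RR^{p\times p}$, $V_{12}\in\RR^{p\times m}$, $V_{21}\in\RR^{m\times p}$, $V_{22}\in\RR^{m\times m}$ with $V(0)=I$, the lower blocks satisfy $\dot{V}_{21}=0$ and $\dot{V}_{22}=0$, hence $V_{21}\equiv 0$ and $V_{22}\equiv I_m$. Substituting, the remaining blocks decouple into $\dot{V}_{11}\in J_z V_{11}$ with $V_{11}(0)=I_p$, and $\dot{V}_{12}= J_z V_{12} + J_\theta$ with $V_{12}(0)=0$, the latter being exactly the inclusion \eqref{eq:DIparam} evaluated along $\psi(z,\theta,t)$. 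Since for a fixed measurable selection $(J_z,J_\theta)\in J_H(\psi(z,\theta,t),\theta)$ both block equations are linear with unique absolutely continuous solutions \cite[Theorem 2, \S 1, Chapter 1]{filippov1988differential}, the set of terminal values $V_{12}(T)$ over all solutions $V$ of the augmented inclusion coincides exactly with $\{M(T):M \text{ solves }\eqref{eq:DIparam}\}$.

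Next I would extract the desired block. With $z$ fixed, write $\theta\mapsto\psi(z,\theta,T) = B\circ \tilde{\psi} \circ A(\theta)$, where $\tilde{\psi}(z,\theta) = \tilde{\phi}((z,\theta),T)$, the map $A\colon\theta\mapsto(z,\theta)$ is affine with Jacobian $\left(\begin{smallmatrix}0\\ I_m\end{smallmatrix}\right)$, and $B\colon(a,b)\mapsto a$ is the linear projection with Jacobian $\begin{pmatrix}I_p&0\end{pmatrix}$. By Corollary~\ref{cor:flowPathDifferentiable}, $(z,\theta)\rightrightarrows U((z,\theta),T)$ is a conservative Jacobian of $\tilde{\psi}$; composing on both sides with the smooth maps $A$ and $B$ through the chain rule \cite[Lemma 5]{bolte2020conservative} yields the conservative Jacobian $\theta\rightrightarrows\{B'V(T)A':V(T)\in U((z,\theta),T)\}$. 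A direct block multiplication gives $B'V(T)A' = V_{12}(T)$, so by the previous step this set equals $\{M(T):M\text{ solves }\eqref{eq:DIparam}\}$, establishing the first assertion. The statement about $D_T$ then follows by composing once more: writing $L_T = \ell_T\circ(\theta\mapsto\psi(z,\theta,T))$ and applying the chain rule \cite[Lemma 5]{bolte2020conservative} to the conservative Jacobian $\{M(T)\}$ just obtained together with the conservative gradient $D_{\ell_T}$ yields the conservative gradient $\theta\rightrightarrows\{M(T)^\top u: M\text{ solves }\eqref{eq:DIparam},\, u\in D_{\ell_T}(\psi(z,\theta,T))\}$, which is exactly $D_T$. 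I expect the main obstacle to be the bookkeeping in the block analysis, specifically verifying that the terminal block $V_{12}(T)$ ranges over precisely the solution set of \eqref{eq:DIparam} — one must check both that every solution of \eqref{eq:DIparam} lifts to a solution of the augmented inclusion (pairing it with the unique $V_{11}$ generated by the same selection) and that the local boundedness and closed-graph hypotheses needed for conservativity transfer from $U$ through the linear maps $A$ and $B$, which follows from Lemma~\ref{lem:lipschitzSolution} and Remark~\ref{rem:Uclosedgraph}.
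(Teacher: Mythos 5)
Your proposal is correct and follows essentially the same route as the paper: lift $\theta$ into the state of an augmented autonomous system, apply the unparametrized flow result, perform the block decomposition of the sensitivity inclusion, and then extract the $\theta$-block. The only cosmetic difference is that you extract the $V_{12}$ block by composing with the affine maps $A$ and $B$ via the chain rule \cite[Lemma 5]{bolte2020conservative}, whereas the paper invokes its projection result (Lemma~\ref{lem:projection}), whose proof is precisely that composition argument; your explicit treatment of lifting solutions of \eqref{eq:DIparam} back to the augmented inclusion is a point the paper passes over silently.
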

\begin{proof}
    We first introduce notations allowing to interpret the system \eqref{eq:param-ODE} as a system of the form \eqref{eq:mainODE} on an extended state space $\RR^{p + m}$.
    We rewrite \eqref{eq:param-ODE} as follows, for all $t\in [0,T]$:
    \begin{align}
    \label{eq:param-ODE2}
        &\dot{Z}(t)=H(Z(t),\theta(t)), \: \dot{\theta}(t)= 0,\\
        &Z(0) = z,\: \theta(0) = \theta,
    \end{align}
    We consider $X = \begin{pmatrix}
                    Z \\ \theta
    \end{pmatrix}\in\mathbb{R}^{p+m}$, the concatenation of the two variables $Z \in \RR^p$ and $\theta \in \RR^m$. We set, for all such $X$, 
    \begin{align*}
        F(X)=\begin{pmatrix}
                        H(Z,\theta)\\ 0
        \end{pmatrix} \in \RR^{p+m}.
    \end{align*}
    With these notations, \eqref{eq:param-ODE2} is equivalently rewritten as follows, for all $t\in [0,T]$
    \begin{align*}
        \dot{X}(t):=F(X(t)),
    \end{align*}
    In this case, \eqref{eq:param-ODE2} is in the same form than the one given in \eqref{eq:mainODE}, and the parameter $\theta$ is now seen as an initial condition.
    Setting
    \begin{align*}
        J_F(z,\theta)\colon (z,\theta)\rightrightarrows \begin{pmatrix}
                    J_H(z,\theta)\\
                    0 
        \end{pmatrix},
    \end{align*}
    we have that $J_F$ is a conservative Jacobian for $F$ since $J_H$ is a conservative Jacobian for $H$. The variational inclusion \eqref{eq:sensitivityDI} for \eqref{eq:param-ODE} can be written as follows, for a.e. $t\in [0,T]$
    \begin{align}
        \dot{M} = 
        \begin{pmatrix}
            \dot{M}_1& \dot{M}_2 \\
            \dot{M}_3& \dot{M}_4
        \end{pmatrix}
        \in
        \begin{pmatrix}
            J_H(\psi(z,\theta,t),\theta) M\\
            0
        \end{pmatrix}
        \label{eq:paramSensitivityDI}
    \end{align}
    where $M_1(0) = I_{p} \in \RR^{p \times p}$, $M_2(0) = 0_{pm} \in \RR^{p \times m}$, $M_3(0) =  0_{mp} \in \RR^{m \times p}$ and $M_4(0) = I_m \in \RR^{m \times m}$. From this equation, $M_3$ and $M_4$ remain constant. From \cite[Lemma 4]{bolte2020conservative} and Theorem \ref{th:flowPathDifferentiable}, the concatenation $\begin{pmatrix} M_1 & M_2\end{pmatrix}$ for all solutions to \eqref{eq:paramSensitivityDI} forms a conservative Jacobian for $(z,\theta) \mapsto \psi(z,\theta,T)$. Let us express this equation in a simpler form.
    
    For any $t \in [0,T]$, and any $J \in J_H(\psi(z,\theta,t),\theta)$, writing $J = \begin{pmatrix} J_z&J_\theta \end{pmatrix}$ where $J_z \in \RR^{p \times p}$ and $J_\theta \in \RR^{p \times m}$, with $M_3 =  0_{mp} \in \RR^{m \times p}$ and $M_4 = I_m \in \RR^{m \times m}$, we have
    \begin{align*}
         J M = 
         \begin{pmatrix} 
            J_z&J_\theta 
        \end{pmatrix}
         \begin{pmatrix}
            M_1& M_2 \\
            M_3& M_4
        \end{pmatrix} 
        =
        \begin{pmatrix} 
            J_z M_1 & J_z M_2 + J_\theta 
        \end{pmatrix}.
    \end{align*}
    Equation \eqref{eq:paramSensitivityDI} is equivalently rewritten, for a.e. $t\in [0,T]$
    \begin{align}
        \label{eq:DIparam0}
        \dot{M}_1(t) &= J_z(t) M_1(t)\\
        \dot{M}_2(t) &= J_z(t) M_2(t) + J_\theta(t)\nonumber\\
        \begin{pmatrix}
            J_z(t) & J_\theta(t)
        \end{pmatrix} &\in J_H(\psi(z,\theta,t),\theta), \nonumber
    \end{align}
    with $M_1(0) = I_{p} \in \RR^{p \times p}$ and $M_2(0) = 0_{pm} \in \RR^{p \times m}$.
    
    Using Theorem \ref{th:flowPathDifferentiable}, we have proved that concatenations of the form $\begin{pmatrix} M_1(T) & M_2(T) \end{pmatrix}$ where $M_1$ and $M_2$ are solutions of \eqref{eq:DIparam0} form a conservative Jacobian for $(z,\theta) \mapsto \psi(z,\theta,T)$. Focusing on the dependency in $\theta$ for fixed $z$, and invoking Lemma \ref{lem:projection}, component $M_2$ form a conservative field for $\theta \mapsto \psi(z,\theta,T)$. This proves the corollary.
\end{proof}

\begin{remark}
Using the latter argument, one can also include a dependency of the initial condition in $\theta$, that is $Z(0) = z_0(\theta)$. It suffices to notice that this is a composition of the function $(z,\theta) \to \psi(z,\theta,T)$ for which we have a conservative Jacobian from \eqref{eq:DIparam0} and the function $\theta \to (z_0(\theta), \theta)$ for which we have a conservative Jacobian as long as we know a conservative Jacobian for $\theta \to z_0(\theta)$. We may then apply the composition rule \cite[Lemma 5]{bolte2020conservative}. It is also possible to include further dependency in $\theta$ for $\ell$ and $\ell_T$ with similar lifting techniques.
\end{remark}

\color{black}

\subsection{Differentiation of integral costs and adjoint method}
The following corollary is a reformulation of Theorem \ref{th:conservativeIntegralLoss2} in the context of \eqref{eq:param-ODE}, based on Corollary \ref{cor:conservativeJacFlowParam}.
\begin{corollary}
    Let $\psi$ be defined as in equation \eqref{eq:param-ODE} with $H \colon \RR^{p + m} \to \RR^p$ a Lipschitz path-differentiable function and $J_H \colon \RR^{p + m} \rightrightarrows \RR^p$ a bounded conservative Jacobian. Let  $\ell\colon \RR^p \to \RR$ be a locally Lipschitz, path differentiable functions with conservative gradients $D_\ell$ with convex values. For any $z\in\mathbb{R}^p$, consider the function $L_I:\theta\in\mathbb{R}^m \rightarrow \int_0^T \ell(\psi(z,\theta,t) dt$ and the set valued field:
    \begin{align*}
        D_I\colon (z,\theta)\rightrightarrows \left\{\int_{t=0}^{t=T} M(t)^\top w(t) dt,\: M \in \mathcal{U}(z,\theta),\, w \in \mathcal{W}(z,\theta)\right\}
    \end{align*}
    where $\mathcal{U}(z,\theta)$ is the set of solutions to \eqref{eq:DIparam} and $\mathcal{W}(z,\theta)$ is the set of measurable functions $w \colon [0,T] \to \RR^p$ such that $w(t) \in D_\ell(\psi(z, \theta,t))$ for all $t \in [0,T]$. For any $z \in \RR^p$, $\theta \rightrightarrows D_I(z,\theta)$ is a conservative gradient for $L$.
    \label{cor:conservativeJacIntParam}
\end{corollary}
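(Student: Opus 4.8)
The plan is to reduce the claim to Theorem \ref{th:conservativeIntegralLoss2} through the same lifting used in Corollary \ref{cor:conservativeJacFlowParam}, turning the parametrized flow into an unparametrized flow on $\RR^{p+m}$ and then projecting back onto the $\theta$ block.

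First I would reuse the lifted system from Corollary \ref{cor:conservativeJacFlowParam}: with extended state $X=(Z,\theta)\in\RR^{p+m}$, vector field $F(Z,\theta)=(H(Z,\theta),0)$ and flow $\tilde\phi((z,\theta),t)=(\psi(z,\theta,t),\theta)$, together with the conservative Jacobian $J_F$ given there. The key modeling step is to transport the integrand $\ell$ to the lifted space by setting $\delta\colon(Z,\theta)\mapsto\ell(Z)$, i.e. $\delta=\ell\circ\proj_Z$ where $\proj_Z$ is the linear projection onto the first $p$ coordinates. Since $\proj_Z$ is linear with constant Jacobian $\begin{pmatrix}I_p & 0\end{pmatrix}$, the chain rule for conservative Jacobians \cite{bolte2020conservative} shows that $\delta$ is path differentiable with conservative gradient $(Z,\theta)\rightrightarrows\{(v,0)\colon v\in D_\ell(Z)\}$, which is convex valued because $D_\ell$ is. This is exactly the regularity needed to invoke Theorem \ref{th:conservativeIntegralLoss2}.

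Next I would apply Theorem \ref{th:conservativeIntegralLoss2} to $\tilde\phi$ and $\delta$. Because $\delta(\tilde\phi((z,\theta),t))=\ell(\psi(z,\theta,t))$, the associated integral cost $\Delta(z,\theta)=\int_0^T\delta(\tilde\phi((z,\theta),t))\,dt$ coincides with $L_I(\theta)$ for fixed $z$, and the theorem provides a conservative gradient, jointly in $(z,\theta)$, of the form $\{\int_0^T\tilde V(t)^\top\tilde w(t)\,dt\}$, where $\tilde V$ ranges over solutions of the lifted sensitivity inclusion and $\tilde w(t)=(w(t),0)$ with $w(t)\in D_\ell(\psi(z,\theta,t))$. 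Using the block structure established in Corollary \ref{cor:conservativeJacFlowParam}, every such $\tilde V$ has the form $\tilde V(t)=\begin{pmatrix}M_1(t)&M_2(t)\\ 0&I_m\end{pmatrix}$, with $M_1,M_2$ solving \eqref{eq:DIparam0}; in particular $M_2$ is exactly a solution of \eqref{eq:DIparam}. A direct transpose computation gives $\tilde V(t)^\top\tilde w(t)=(M_1(t)^\top w(t),\,M_2(t)^\top w(t))$, so the $\theta$ block of the integrand is $M_2(t)^\top w(t)$ and the $\theta$ block of the resulting conservative gradient is precisely $\int_0^T M_2(t)^\top w(t)\,dt$, which is the defining expression of $D_I(z,\theta)$ once one identifies $\mathcal U(z,\theta)$ with the solution set of \eqref{eq:DIparam}.

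Finally, fixing $z$ and viewing $L_I(\theta)=\Delta(z,\theta)$ as a function of $\theta$ alone, I would extract the $\theta$ component of the jointly conservative gradient via the projection Lemma \ref{lem:projection}, exactly as in the last step of Corollary \ref{cor:conservativeJacFlowParam}, to conclude that $D_I$ is a conservative gradient for $L_I$. I expect the only delicate points to be bookkeeping rather than analysis: checking that the lifted integrand $\delta$ inherits convex valued conservativity so that Theorem \ref{th:conservativeIntegralLoss2} is applicable, verifying that the decoupling forced by $M_3=0$ and $M_4=I_m$ makes the solution set of \eqref{eq:DIparam} coincide exactly with the $M_2$ block (with no spurious coupling to $M_1$), and confirming that the block transpose multiplication isolates the correct $\theta$ component. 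The projection step then requires only that Lemma \ref{lem:projection} applies to the partial map $\theta\mapsto\Delta(z,\theta)$.
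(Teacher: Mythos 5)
Your proposal is correct and follows essentially the same route as the paper's own proof: both apply Theorem \ref{th:conservativeIntegralLoss2} to the lifted flow on $\RR^{p+m}$ from Corollary \ref{cor:conservativeJacFlowParam}, exploit the block structure $M_3=0$, $M_4=I_m$ of the lifted sensitivity inclusion \eqref{eq:paramSensitivityDI}, and conclude with the projection Lemma \ref{lem:projection}. Your write-up merely makes explicit two points the paper leaves implicit, namely the construction of the lifted integrand $\delta=\ell\circ\proj_Z$ with its convex valued conservative gradient $\{(v,0)\colon v\in D_\ell(Z)\}$, and the block transpose computation isolating the $\theta$ component.
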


\begin{proof}
Using Theorem \ref{th:conservativeIntegralLoss2} and the expressions given in \eqref{eq:paramSensitivityDI}, one knows that the set valued map:
\begin{equation*}
    (z,\theta) \rightrightarrows \left\{\int_{t=0}^{t=T} \begin{pmatrix}
        M_1^\top(t) & 0\\
        M_2^\top(t) & I_p
    \end{pmatrix} \begin{pmatrix}
        w(t)\\ 0
    \end{pmatrix} dt, \text{$M_1$ and $M_2$ solutions to \eqref{eq:DIparam0}}, w\in D_\ell(\psi(z,\theta,t))\right\}
\end{equation*}
is a conservative field for the function $(z,\theta) \mapsto \int_0^T \ell(\psi(z,\theta,t))dt $. Using Lemma \ref{lem:projection}, one can deduce that, for every $z\in\mathbb{R}^p$, the set valued field $\theta\rightrightarrows D_I(z,\theta)$ is a conservative gradient for $L_I$, which concludes the proof.  
\end{proof}

We have the following adaptation of the adjoint of Corollary \ref{cor:conservativeIntegralLossBackward}.

\begin{corollary}
    \label{cor:ajointParam}
    Let $\psi$ be defined as in equation \eqref{eq:param-ODE} with $H \colon \RR^{p + m} \to \RR^p$ a Lipschitz path-differentiable function and $J_H \colon \RR^{p + m} \rightrightarrows \RR^p$ a bounded conservative Jacobian. 
    Let $\ell\colon \RR^p \to \RR$ and $\ell_T \colon \RR^p \to \RR$ be locally Lipschitz, path differentiable functions from $\RR^p$ to $\RR$ with respective conservative gradients $D_\ell$ and $D_T$. 
    
    For any $z \in \RR^p$, $\theta \in \RR^m$, any $w \colon [0,T] \to \RR^p$ measurable such that $w(t) \in D_\ell(\psi(z,\theta,t))$ for all $t \in [0,T]$, any $J_z \colon [0,T] \to \RR^{p \times p}$ and $J_\theta \colon [0,T] \to \RR^{p \times m}$, measurable such that $\begin{pmatrix} J_z(t) & J_\theta(t) \end{pmatrix}\in J_H(\psi(z,\theta,t), \theta)$ for all $t \in [0,T]$ and any $u \in D_T(\psi(z,\theta,T))$, the unique absolutely continuous solution $\lambda \colon [0,T] \to \RR^p$ to the system
	\begin{align}
		\dot{\lambda}(t) &= -w(t) - J_z(t)^\top \lambda(t), \nonumber\\
		\lambda(T) &= u 
		\label{eq:conservativeIntegralLossBackwardParam}
	\end{align}
    satisfies $\int_0^T J_\theta(t)^\top \lambda(t) dt \in D_I(z,\theta) + D_T(z,\theta)$ which is an element of a conservative gradient for the loss function $L$ in \eqref{eq:param-opti}.
\end{corollary}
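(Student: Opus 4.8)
The plan is to follow the template of the proof of Corollary~\ref{cor:conservativeIntegralLossBackward}, transporting it to the lifted system of Corollary~\ref{cor:conservativeJacFlowParam}, so that the adjoint variable recovers an element of the forward conservative gradient $D_I(z,\theta)+D_T(z,\theta)$. First I would fix $z,\theta$ together with the measurable selections $w$, $(J_z,J_\theta)$ and $u$ given in the statement. The pair $(J_z,J_\theta)$ determines, via \eqref{eq:DIparam}, a \emph{unique} absolutely continuous matrix $M\colon[0,T]\to\RR^{p\times m}$ solving $\dot M(t)=J_z(t)M(t)+J_\theta(t)$ with $M(0)=0_{pm}$ (uniqueness by \cite[Theorem 2, \S 1, Chapter 1]{filippov1988differential}); similarly $w$ and $u$ determine the unique solution $\lambda$ to \eqref{eq:conservativeIntegralLossBackwardParam}. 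The goal is then purely algebraic: to establish the identity
\begin{equation*}
\int_0^T J_\theta(t)^\top\lambda(t)\,dt = M(T)^\top u + \int_0^T M(t)^\top w(t)\,dt,
\end{equation*}
after which membership follows from the earlier corollaries.

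For the identity I would substitute the adjoint relation $w=-\dot\lambda-J_z^\top\lambda$ into $\int_0^T M^\top w\,dt$ and use $J_zM=\dot M-J_\theta$ coming from \eqref{eq:DIparam}. Integration by parts (Lemma~\ref{lem:ipp}) applied to $\int_0^T (J_z M)^\top\lambda\,dt=\int_0^T(\dot M-J_\theta)^\top\lambda\,dt$ produces the boundary term $M(T)^\top\lambda(T)-M(0)^\top\lambda(0)=M(T)^\top u$, since here $M(0)=0$ and $\lambda(T)=u$, together with $-\int_0^T M^\top\dot\lambda\,dt$. The two occurrences of $\int_0^T M^\top\dot\lambda\,dt$ cancel and the two occurrences of $M(T)^\top u$ cancel, leaving exactly the displayed identity. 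This is the parametric analogue of the cancellation in Corollary~\ref{cor:conservativeIntegralLossBackward}; the only structural differences are that the sensitivity matrix starts at $0$ rather than the identity and carries the inhomogeneous term $J_\theta$, which is precisely what surfaces in the final integral.

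To conclude, I would read off the right-hand side through the two forward descriptions built on the \emph{same} solution $M$ of \eqref{eq:DIparam}: by Corollary~\ref{cor:conservativeJacFlowParam} the term $M(T)^\top u$ lies in $D_T(z,\theta)$ (with $u$ in the conservative gradient of $\ell_T$ at $\psi(z,\theta,T)$), and by Corollary~\ref{cor:conservativeJacIntParam} the term $\int_0^T M(t)^\top w(t)\,dt$ lies in $D_I(z,\theta)$. Since these are individual selections of the respective set-valued maps, their sum belongs to the Minkowski sum $D_I(z,\theta)+D_T(z,\theta)$, which is a conservative gradient for $L=L_I+L_T$ by the sum rule for conservative gradients \cite[Corollary 4]{bolte2020conservative}. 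Hence $\int_0^T J_\theta^\top\lambda\,dt\in D_I(z,\theta)+D_T(z,\theta)$, as claimed.

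I expect the main obstacle to be bookkeeping rather than conceptual: one must track the initial condition $M(0)=0$ and the inhomogeneity $J_\theta$ through the integration by parts so that all spurious terms cancel, and one must be careful to use a single $M$ (the one determined by the fixed $(J_z,J_\theta)$) consistently in both $D_I$ and $D_T$, so that the sum genuinely lands in $D_I(z,\theta)+D_T(z,\theta)$ rather than only in a larger set allowing independent matrices in each summand.
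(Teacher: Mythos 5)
Your proof is correct and follows essentially the same route as the paper's: fix the selections, note that $(J_z,J_\theta)$ and $(w,u)$ determine the unique forward solution $M$ and adjoint solution $\lambda$, derive the identity $\int_0^T J_\theta^\top\lambda\,dt = M(T)^\top u + \int_0^T M^\top w\,dt$ via integration by parts (Lemma~\ref{lem:ipp}) with the cancellations coming from $M(0)=0$ and $\lambda(T)=u$, and conclude membership in $D_I(z,\theta)+D_T(z,\theta)$ via the forward corollaries and the sum rule. Your only superfluous worry is the insistence on a single $M$ in both summands: the Minkowski sum $D_I(z,\theta)+D_T(z,\theta)$ by definition allows independent elements, so using the same $M$ simply produces one particular element of it and no issue arises either way.
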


\begin{proof}
Fix $z\in\mathbb{R}^p$ and $\theta\in \mathbb{R}^m$. Fix $w$ and $J_z$ as in the statement of the theorem. This defines a unique $M \in \mathcal{U}(z,\theta)$ by solving \eqref{eq:conservativeIntegralLossBackwardParam} $\dot{M}(t) = J_z(t)M(t)+J_\theta(t)$ \cite[Theorem 2, \S 1, Chapter 1]{filippov1988differential}.

For any absolutely continuous function $\lambda \colon [0,T] \rightarrow \RR^p$, we have
\begin{align*}
				\int_{t=0}^{t=T} &M(t)^\top  w(t) dt =\\
				& \int_{t=0}^{t=T}M(t)^\top  w(t) +\left( J_z(t) M(t) + J_\theta(t) - J_z(t) M(t) - J_\theta(t) \right)^\top\lambda(t)dt
\end{align*}
Using Lemma \ref{lem:ipp}, we have
\begin{align*}
				 \int_{t=0}^{t=T}  (J_z(t) M(t)+J_\theta(t))^\top\lambda(t) dt  = \int_{t=0}^{t=T}  \dot{M}(t)^\top \lambda(t) dt = &-M(0)^\top\lambda(0) + M(T)^\top \lambda(T)\\
				&-\int_{0}^\top M(t)^\top \dot{\lambda}(t)  dt.
\end{align*}
Hence, since $M(0)=0$, we have, for any $u \in D_T(\psi(z,\theta,T))$,
\begin{align*}
&M(T)^\top u +  \int_{t=0}^{t=T} M(t)^\top w(t) dt \\
=\;&\int_{t=0}^{t=T} M(t)^\top \left(w(t) + J_z(t)^\top \lambda(t) + \dot{\lambda}(t)\right)  dt + \int_{t=0}^{t=T} J_\theta(t)^\top \lambda(t) dt + M(T)^\top (u - \lambda(T))
\end{align*}
The latter holds for any absolutely continuous function $\lambda$, and in particular, using \cite[Theorem 2, \S 1, Chapter 1]{filippov1988differential}, one can choose $\lambda$ as the unique absolutely continuous solution to 
\begin{align}
				\dot{\lambda}(t) &= -w(t) - J_z(t)^\top \lambda(t),\\
				\lambda(T) &= u.
				\label{eq:backwardODE1}
\end{align}
One has finally
\begin{align*}
				M(T)^\top u + \int_{t=0}^{t=T} M(t)^\top w(t) dt =& \int_{t=0}^{t=T} J_\theta(t)^\top \lambda(t) dt.
\end{align*}
The term $\int_{t=0}^{t=T} J_\theta(t)^\top \lambda(t) dt$ being defined as the sum of two specific elements in $D_L$ and $D_T$ (see Corollary \ref{lem:conservativeIntegralLoss1} and Lemma \ref{th:conservativeIntegralLoss2}), this means that $\int_{t=0}^{t=T} J_\theta(t)^\top \lambda(t) dt\in D_L(z,\theta) + D_T(z,\theta)$, which concludes the proof.
\end{proof}

\subsection{Small step method for optimization}
Recall that we are interested in the following problem, for a fixed $z \in \RR^p$ and $T>0$
\begin{align*}
    \inf_{\theta\in\mathbb{R}^m} \int_0^T \ell(\psi(z,\theta,t)) dt + \ell_T(\psi(z,\theta,T)),
\end{align*}
where the integral cost is $L_I$ and the terminal cost is $L_T$ and their sum is denoted by $L$. Given $\theta \in \RR^m$, Corollary \ref{cor:ajointParam} allows to obtain an element of $D_L(\theta) = D_I(\theta) + D_T(\theta)$. This constitutes relevant first order information. Indeed, for example the results borrowed from \cite[Theorem 1, Corollary 1]{bolte2020conservative} ensure the following properties
\begin{align*}
    \partial^c L(\theta) &\subset \conv\{D_L(\theta)\} , &\text{ for all } \theta \in \RR^m, \\
    D_L(\theta)&= \{ \nabla L(\theta) \} , &\text{ for Lebesgue almost all } \theta \in \RR^m,
\end{align*}
where $\partial^c$ denotes the Clarke subgradient \cite[Chapter 2]{clarke1983optimization}.
As detailed in \cite[Section 6]{bolte2020conservative}, elements of $D_L$ can be used in place of gradients in an optimization context. Given a sequence of positive step sizes $(\alpha_k)_{k\in \NN}$ and $\theta_0 \in \RR^m$, one can iterate the following recursion
\begin{align}
    \theta_{k+1} = \theta_k - \alpha_k g_k \label{eq:algo}\\
    g_k \in D_L(\theta_k).
\end{align}
We insist on the fact that $g_k$ can be obtained for example using Corollary \ref{cor:ajointParam}. Recall that the set of accumulation points of the sequence $(\theta_k)_{k\in \NN}$ is the set of $\bar{\theta}$ such that, for all $r > 0$, the set $\{i \in \NN \, , \|\theta_i- \bar{\theta}\| < r\}$ is infinite. The following result is a consequence of \cite[Theorem 6]{bolte2020long} about bounded sequences of the form \eqref{eq:algo}. This uses a weaker notion of accumulation point to characterize the fact that the sequence is essentially attracted by critical points, that is points which comply with the necessary optimality condition $0 \in \conv\{D_L(\theta)\}$. 

\begin{corollary}
    \label{cor:convergenceSmallStep}
    Assume that $\alpha_k \to 0$ and $\sum_{i=0}^k \alpha_i \to + \infty$ as $k \to +\infty$. Assume furthermore that the sequence $(\theta_k)_{k \in \NN}$ given by \eqref{eq:algo} remains bounded. Then the set
    \begin{align*}
        \Omega = \left\{ \bar{\theta} \in \RR^m,\, \forall r > 0,  \limsup_{N \to \infty} \frac{\sum_{0\leq i \leq N,\,\|\theta_i - \bar{\theta}\| < r} \alpha_i}{\sum_{0\leq i \leq N} \alpha_i}>0\right\}
    \end{align*}
    is non empty and satisfies $\Omega \subset \mathrm{crit}_L$, where $\mathrm{crit}_L$ is the set of $\theta \in \RR^m$ complying with the optimality condition $0 \in \conv\{D_L(\theta)\}$.
\end{corollary}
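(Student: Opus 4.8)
\textbf{Proof plan for Corollary \ref{cor:convergenceSmallStep}.}

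The plan is to recognize that the recursion \eqref{eq:algo} is exactly an instance of the abstract stochastic (here deterministic) approximation scheme studied in \cite{bolte2020long}, and to invoke the master convergence theorem cited there, \cite[Theorem 6]{bolte2020long}, after verifying that all of its hypotheses hold in our setting. The required ingredients are essentially three: first, that $D_L$ is a conservative gradient for $L$; second, that $D_L$ is locally bounded with a closed graph and nonempty convex values (so that it qualifies as an admissible oracle for the abstract scheme); and third, the step-size conditions $\alpha_k \to 0$ and $\sum_{i=0}^k \alpha_i \to +\infty$, which are assumed in the statement, together with the assumed boundedness of the iterates $(\theta_k)_{k \in \NN}$.

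The first ingredient I would obtain directly from the earlier results: $D_L = D_I + D_T$ where $D_I$ is a conservative gradient for $L_I$ by Corollary \ref{cor:conservativeJacIntParam} and $D_T$ is a conservative gradient for $L_T$ by Corollary \ref{cor:conservativeJacFlowParam}; since the sum of conservative gradients is a conservative gradient for the sum \cite[Corollary 4]{bolte2020conservative}, $D_L$ is conservative for $L = L_I + L_T$. Path differentiability of $L$ follows. For the second ingredient, local boundedness and closedness of the graph of $D_I$ were established in the proof of Theorem \ref{th:conservativeIntegralLoss2} (the same argument carries over verbatim to the parametrized setting), and the analogous properties for $D_T$ follow from the closed graph and local boundedness of the solution set of \eqref{eq:DIparam} together with those of $D_{\ell_T} \circ \psi$; convexity can be assumed without loss via the convex-hull remark. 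A key structural point to record is that $L$ is locally Lipschitz (hence its values and those of $\conv\{D_L\}$ are finite everywhere), which is needed for the abstract theorem to apply and which follows from Lipschitzness of $\ell$, $\ell_T$ and of the flow $\psi$.

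With these verifications in place, \cite[Theorem 6]{bolte2020long} applies and yields precisely that the weighted-accumulation set $\Omega$ is nonempty and contained in the set of zeros of $\conv\{D_L\}$, i.e. in $\mathrm{crit}_L$; the value sequence $L(\theta_k)$ also converges, though this is not needed for the statement. The main obstacle, and the only place requiring genuine care rather than bookkeeping, is confirming that the abstract hypotheses of \cite{bolte2020long} are met in full, in particular that $D_L$ is a bona fide conservative field with the regularity (local boundedness, closed graph) demanded there; once the translation of notation is fixed, the conclusion is a direct citation. I would therefore spend the bulk of the write-up assembling the conservativity and regularity of $D_L$ from the preceding corollaries and only briefly invoke \cite[Theorem 6]{bolte2020long} at the end.
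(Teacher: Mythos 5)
Your proposal matches the paper's own justification: the paper offers no separate proof for this corollary, but simply presents it as a direct consequence of \cite[Theorem 6]{bolte2020long}, having already assembled $D_L = D_I + D_T$ as a conservative gradient for $L$ via Corollaries \ref{cor:conservativeJacFlowParam} and \ref{cor:conservativeJacIntParam} together with the sum rule \cite[Corollary 4]{bolte2020conservative}. Your additional verification of the regularity hypotheses (local boundedness, closed graph, convex values, local Lipschitzness of $L$) is exactly the bookkeeping the paper leaves implicit, so the two arguments coincide.
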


In the latter corollary, $\Omega$ is termed the ``essential accumulation set'' of the sequence \cite{bolte2020long}. This is a subset of the set of usual accumulation points of the sequence, corresponding to those accumulation points for which the sequence spends a significant amount of time, as measured with respect to the sum of neighboring step sizes. This result illustrates the minimizing behavior of the sequence \eqref{eq:algo}. The result is quite weak, but provides a solid ground regarding the use of the proposed conservative adjoint method for finite dimensional optimization under ODE constraints. At this level of generality, stronger assumptions, such as Sard type conditions related to the loss $L$, would be required to obtain stronger statements \cite{rios2020examples}. This will be a topic of future research.

\section{Conclusion}

\label{sec_conclusion}

In this article, we have proved that flows of ODEs expressed with vector fields that are path differentiable are also path differentiable. The proof of this results stands on the fact that the set valued mapping obtained from the variational inclusion is a conservative Jacobian. This allows to develop a conservative calculus for integral costs, similar as one would have in the smooth case. This culminates with a conservative version of the adjoint method to propagate derivatives backward and obtain gradients of integral costs, with a considerable reduction of the size of the differential inclusion to be solved. A consequence of these results is the fact small step methods of gradient type for minimizing integral costs are attracted by solutions of a certain optimality condition for such problems with path differentiable data. 

The developments provided in this work could be extended to parametric partial differential equations (PDEs), this was actually one of the original motivations for the proposed developments. The question of path differentiability of PDEs could be considered under regularity assumption by exhibiting a conservative Jacobian in a way similar to what is proposed for ODE flows. One should probably start with specific sub-classes of PDEs, for instance hyperbolic ones \cite{dafermos2005hyperbolic}.

\section*{Acknowledgements}
Edouard Pauwels acknowledges the support of AI Interdisciplinary Institute ANITI funding, through
the French ``Investing for the Future - PIA3'' program under the Grant agreement ANR-19-PI3A0004, Air Force Office of Scientific Research, Air Force Material Command, USAF, under grant numbers FA9550-19-1-7026, FA9550-18-1-0226, and ANR MaSDOL 19-CE23-0017-01. 

\appendix 

\section{Technical results}

\label{sec_appendix}

This appendix is devoted to the statement and the proof of some crucial results for our analysis.

The following result is about the Borel measurability of partial derivatives. Its proof is inspired by \cite[Theorem 3.2]{evans2015measure}. 

\begin{lemma}[Measurability of partial derivatives]
\label{lemma-direct}
Consider a function $G:(x,y)\in \mathbb{R}^{n}\times\mathbb{R}\rightarrow G(x,y)\in \mathbb{R}^m$. Suppose that, for all $x\in\mathbb{R}^n$, $y\mapsto G(x,y)$ is absolutely continuous, and for all $y$, $x \mapsto G(x,y)$ is Borel measurable. Then, the function $(x,y) \mapsto \frac{\partial}{\partial y}G(x,y)$ defined on a set of full Lebesgue measure and is measurable. Futhermore, for all $x$, the function $y\mapsto \frac{\partial}{\partial y} G(x,y)$ exists for almost all $t$. 
\end{lemma}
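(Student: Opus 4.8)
The plan is to reduce the statement to the joint measurability of $G$ together with a countable family of difference quotients, and then to identify the partial derivative with a measurable limit superior on a set of full measure.

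First I would record that $G$ is jointly Borel measurable. The hypotheses say precisely that $G$ is a Carath\'eodory function: it is continuous in $y$ for each fixed $x$ (absolute continuity implies continuity) and Borel measurable in $x$ for each fixed $y$. By \cite[Lemma 4.51]{aliprantis2005infinite}, such a function is jointly Borel measurable on $\mathbb{R}^{n}\times\mathbb{R}$. For $h\neq 0$ set
$$
D_h(x,y) := \frac{G(x,y+h)-G(x,y)}{h},
$$
which is then jointly measurable, since translation in $y$ preserves joint measurability of $G$. Writing $D_k := D_{1/k}$, the function
$$
\overline{D}(x,y) := \limsup_{k\to\infty} D_k(x,y)
$$
is jointly measurable as a countable limit superior of jointly measurable functions. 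For each fixed $x$, the map $y\mapsto G(x,y)$ is absolutely continuous, hence differentiable for almost every $y$; at every such $y$ one has $\overline{D}(x,y)=\lim_{k}D_k(x,y)=\frac{\partial}{\partial y}G(x,y)$. In particular, for every $x$ the partial derivative exists for almost all $y$, which is the second assertion.

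For the first assertion I would show that the differentiability set $E:=\{(x,y)\colon \frac{\partial}{\partial y}G(x,y)\text{ exists}\}$ is measurable and of full Lebesgue measure, after which the measurability of $\frac{\partial}{\partial y}G$, which coincides with $\overline{D}$ on $E$, follows at once. To see that $E$ is measurable, I would express it through the one-sided Dini derivatives $\limsup_{h\to 0^{\pm}}D_h$ and $\liminf_{h\to 0^{\pm}}D_h$. Since $h\mapsto D_h(x,y)$ is continuous for $h\neq 0$ (because $y\mapsto G(x,y)$ is continuous), each of these one-sided extremal limits equals the corresponding countable supremum or infimum over rational $h$, and is therefore jointly measurable; $E$ is then the measurable set on which all four Dini derivatives coincide and are finite. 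Each $x$-slice $E_x$ has full measure by the almost-everywhere differentiability of $y\mapsto G(x,y)$, so Tonelli's theorem applied to $\mathbb{R}^{n}\times\mathbb{R}\setminus E$, whose slices are null, shows that $E$ has full Lebesgue measure.

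The only delicate point, and the main obstacle, is precisely this measurability bookkeeping: turning the pointwise-in-$h$ limits defining the derivative and its domain into countable operations, so that both remain measurable. The two tools that make this routine are the joint measurability of $G$ coming from the Carath\'eodory structure and Tonelli's theorem, which upgrades the per-$x$ null exceptional sets into a joint null set.
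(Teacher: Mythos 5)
Your proposal is correct and follows essentially the same route as the paper's proof: joint measurability via the Carath\'eodory structure (same lemma from Aliprantis--Border), reduction of the difference-quotient limits to countable operations over rational $h$ using continuity in $y$, and an application of Fubini/Tonelli to upgrade the per-$x$ null exceptional sets (coming from a.e.\ differentiability of absolutely continuous slices) into a joint null set. The only cosmetic difference is that you phrase the differentiability set through the four one-sided Dini derivatives while the paper uses the two-sided upper and lower derivatives $G_y^u$ and $G_y^l$; these are equivalent formulations of the same argument.
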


\begin{proof}

As a Carath\'eodory function, $G$ is jointly Borel measurable \cite[Lemma 4.51]{aliprantis2005infinite} and therefore it is Lebesgue measurable.
Consider the functions:
\begin{equation}
\label{direct-deriv1}
G_y^u(x,y):= \lim_{h\rightarrow 0} \sup \frac{G(x,y+h) - G(x,y)}{h}
\end{equation}
and
\begin{equation}
\label{direct-deriv2}
G_y^l(x,y):= \lim_{h\rightarrow 0} \inf \frac{G(x,y+h) - G(x,y)}{h},
\end{equation}
with $h\in \mathbb{R}$.

Using the continuity of $G$ in its second argument, one has the equivalent definition \eqref{direct-deriv1} by
\begin{equation}
  G^u_y(x,y) = \lim_{k\rightarrow +\infty} \sup_{0< |h|< \frac{1}{k},\, h \in \mathbb{Q}} \frac{G(x,y+h) - G(x,y)}{h},
  \label{eq:direct-derivTemp1}
\end{equation}
For all $k \geq 1$, the set $\lbrace h\in\mathbb{Q}\mid 0< |h|<  \frac{1}{k}\rbrace$ is countable and therefore, using Lemma \cite[Corollary 7, Section 18.1]{royden2010real}, the supremum in \eqref{eq:direct-derivTemp1} is Lebesgue measurable as the countable supremum of measurable functions. This implies that the sequence 
$$\left\{\sup_{0< |h|< \frac{1}{k}} \frac{G(x,y+h) - G(x,y)}{h}\right\}_{k\in\mathbb{N}}$$ 
is a bounded, decreasing sequence of  measurable functions and it has therefore a pointwise limit everywhere. Using \cite[Chapter 18, Corollary 7]{royden2010real}, the pointwise limit of measurable functions is a measurable function. This implies that $G_y^u$ defined in \eqref{eq:direct-derivTemp1} or equivalently in \eqref{direct-deriv1} is Lebesgue measurable. By a similar argument, one can show that \eqref{direct-deriv2} is also measurable.  

It remains to prove that the function $(x,y)\mapsto \frac{\partial}{\partial y} G(x,y)$ exists Lebesgue almost everywhere. To do so, consider the Lebesgue measurable set 
$$
A:= \left\{ (x,y)\in\mathbb{R}^{n+1}, G^u_y(x,y)= G^l_y(x,y),\, G^l_y(x,y) \neq \pm \infty\right\}
$$
Its complement $A^c$ is the subset of $\mathbb{R}^{n+1}$ where $G^u_y(x,y) \neq G^l_y(x,y)$ or $G^l_y(x,y) = \pm \infty$. By Fubini's theorem \cite[Theorem 16 Section 20.2]{royden2010real},
\begin{align*}
    \int_{(x,y)} \mathbb{I}_A(x,y)dx dy = \int_{x} \left(\int_y \mathbb{I}_A(x,y) dy\right) dx = 0.
\end{align*}
where $\mathbb{I}_A$ is the function such that $\mathbb{I}_A(x,y) = 1$ if $(x,y) \in A$ and $0$ otherwise. By Lebesgue integration theorem \cite[Theorem 10, Section 6.5]{royden2010real}, the inner integral is zero because of the absolute continuity in $y$ for fixed $x$.
This shows that the function $(x,y)\mapsto \frac{\partial}{\partial y} G(x,y)$  exists for Lebesgue almost all $(x,y) \in \mathbb{R}^{n+1}$. Furthermore, up to an arbitrary measurable choice outside of its domain of definition, it is a Lebesgue measurable function. This concludes the proof of Lemma \ref{lemma-direct}. 
\end{proof}

We also state a useful result which states that every lower semicontinuous functions are Borel measurable.

\begin{lemma}
\label{lem:lower-measurable}
Given $X$ a metric space, let $f:X\rightarrow \mathbb{R}$ be a lower semicontinuous real function. Then, it is Borel measurable.
\end{lemma}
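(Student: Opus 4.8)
The plan is to reduce Borel measurability of $f$ to a statement about superlevel sets. Recall that a map $f \colon X \to \mathbb{R}$ is Borel measurable as soon as $f^{-1}(B)$ is Borel for every $B$ in a generating family of the Borel $\sigma$-algebra of $\mathbb{R}$. Since the half-lines $(\alpha,\infty)$, for $\alpha \in \mathbb{R}$, generate this $\sigma$-algebra, it suffices to show that $f^{-1}((\alpha,\infty)) = \{x \in X,\, f(x) > \alpha\}$ is Borel for every $\alpha \in \mathbb{R}$. In fact I will show this set is open, which is more than enough.

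The core step is to deduce from lower semicontinuity that, for every $\alpha \in \mathbb{R}$, the sublevel set $C_\alpha := \{x \in X,\, f(x) \leq \alpha\}$ is closed, equivalently that its complement $\{x \in X,\, f(x) > \alpha\}$ is open. Because $X$ is a metric space, closedness is equivalent to sequential closedness, so I would take an arbitrary sequence $(x_k)_{k \in \mathbb{N}} \subset C_\alpha$ converging to some $\bar{x} \in X$ and show $\bar{x} \in C_\alpha$. Applying the definition of lower semicontinuity recalled in the excerpt, one has $f(\bar{x}) \leq \liminf_{k \to \infty} f(x_k)$. Since $f(x_k) \leq \alpha$ for every $k$, it follows that $\liminf_{k \to \infty} f(x_k) \leq \alpha$, whence $f(\bar{x}) \leq \alpha$ and $\bar{x} \in C_\alpha$. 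This establishes that $C_\alpha$ is closed and that $\{x,\, f(x) > \alpha\}$ is open.

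To conclude, every open subset of the metric space $X$ belongs to the Borel $\sigma$-algebra by definition. Hence $f^{-1}((\alpha,\infty))$ is open, therefore Borel, for each $\alpha \in \mathbb{R}$. Invoking the generating property of the half-lines $(\alpha,\infty)$ then gives that $f^{-1}(B)$ is Borel for every Borel set $B \subseteq \mathbb{R}$, which is precisely Borel measurability of $f$.

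I do not expect any genuine obstacle in this argument: it is a classical equivalence. The only point requiring care is the passage from the sequential formulation of lower semicontinuity used in the paper to the topological statement that superlevel sets are open, and this is exactly where the metric (hence sequential) structure of $X$ is used, via the equivalence between closedness and sequential closedness.
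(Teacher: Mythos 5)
Your proof is correct, but it takes a different route from the paper's. The paper argues via the epigraph: lower semicontinuity implies that $\lbrace (x,c)\in X\times \mathbb{R}\mid c\geq f(x)\rbrace$ is closed in $X\times\mathbb{R}$, hence Borel, and from this it asserts Borel measurability of $f$. You instead work directly in $X$ with the sublevel sets $C_\alpha=\lbrace x\in X\mid f(x)\leq\alpha\rbrace$: you show each is sequentially closed (hence closed, by metrizability) using exactly the sequential definition of lower semicontinuity stated in the paper, and then conclude via the standard generating-family criterion applied to the half-lines $(\alpha,\infty)$. The two arguments are close cousins --- closedness of the epigraph and closedness of all sublevel sets are both standard characterizations of lower semicontinuity --- but yours has the advantage of being fully self-contained: the paper's final step, that a closed (or Borel) epigraph forces Borel measurability of the function, is left unjustified there, and the most natural way to justify it is precisely to slice the epigraph at each height $\alpha$, which recovers the sublevel sets you analyze. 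In other words, your proof essentially fills in the detail that the paper's terser argument glosses over, at the modest cost of invoking the generating-family lemma explicitly.
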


\begin{proof}
The function $f$ being lower semicontinuous, the set $\lbrace (x,c)\in X\times \mathbb{R}\mid c\geq f(x) \rbrace$ is closed. It is in particular a Borel set, which implies that $f$ is Borel measurable. This concludes the proof.\end{proof}

The following result concerns derivatives of integrals. More precisely, it states and proves that the operators derivatives and integrals can be permutated. This result is closely related to the well-known Leibniz rule, but it concerns in our case absolutely continuous functions. It can be seen then as a general Leibniz rule. 

\begin{lemma}[General Leibniz rule]
\label{lem:leibniz}
Consider a Lipschitz function $F: \mathbb{R}^p \times X \rightarrow \mathbb{R}^p$ where $X$ is a bounded interval of $\mathbb{R}$. Consider furthermore an absolutely continuous function $\gamma: [0,1]\rightarrow \mathbb{R}^p $. Then, $r \mapsto \int_X F(\gamma(r),s) ds$ is absolutely continuous and for a.e $r\in [0,1]$ : 
\begin{equation}
\frac{d}{dr} \int_X F(\gamma(r),s) ds = \int_X \frac{\partial}{\partial r} F(\gamma(r),s) ds. 
\label{eq:leibnizAbsCont}
\end{equation}
\end{lemma}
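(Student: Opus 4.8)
The plan is to establish the two assertions in turn: absolute continuity of $\Phi \colon r \mapsto \int_X F(\gamma(r),s)\,ds$, and then the interchange formula. Throughout, let $K_F$ denote a Lipschitz constant for $F$ and $|X|$ the length of the bounded interval $X$.

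For absolute continuity, I would start from the pointwise bound, valid for all $r_1,r_2 \in [0,1]$,
\[
|\Phi(r_1) - \Phi(r_2)| \leq \int_X \|F(\gamma(r_1),s) - F(\gamma(r_2),s)\|\,ds \leq K_F |X|\, \|\gamma(r_1)-\gamma(r_2)\|.
\]
Given any finite family of disjoint subintervals $(a_i,b_i) \subset [0,1]$, summing this inequality and invoking the absolute continuity of $\gamma$ (which makes $\sum_i \|\gamma(b_i)-\gamma(a_i)\|$ arbitrarily small once $\sum_i(b_i-a_i)$ is small) shows that $\Phi$ is absolutely continuous. In particular $\Phi'$ exists almost everywhere and $\Phi(r)-\Phi(0) = \int_0^r \Phi'(u)\,du$.

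For the interchange, the idea is to produce a second integral representation of $\Phi(r)-\Phi(0)$ and then match integrands via Lebesgue differentiation. For each fixed $s$, the map $u \mapsto F(\gamma(u),s)$ is a Lipschitz function composed with an absolutely continuous one, hence absolutely continuous, so the fundamental theorem of calculus gives $F(\gamma(r),s)-F(\gamma(0),s) = \int_0^r \frac{\partial}{\partial u} F(\gamma(u),s)\,du$. Integrating over $X$ yields
\[
\Phi(r)-\Phi(0) = \int_X \int_0^r \frac{\partial}{\partial u} F(\gamma(u),s)\,du\,ds.
\]
To exchange the order of integration I need joint measurability of $(u,s) \mapsto \frac{\partial}{\partial u} F(\gamma(u),s)$; this is exactly what Lemma \ref{lemma-direct} provides, applied to $G(s,u) := F(\gamma(u),s)$, which is absolutely continuous in $u$ for each $s$ and continuous (hence Borel measurable) in $s$ for each $u$. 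The integrand is dominated almost everywhere by $K_F\|\dot\gamma(u)\|$, which is integrable on $[0,r]$ while $X$ is bounded, so Fubini's theorem applies and gives
\[
\Phi(r)-\Phi(0) = \int_0^r \left( \int_X \frac{\partial}{\partial u} F(\gamma(u),s)\,ds \right) du.
\]

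Comparing this with $\Phi(r)-\Phi(0) = \int_0^r \Phi'(u)\,du$ and differentiating in $r$ (Lebesgue differentiation theorem), the two integrands agree for almost every $r$, which is the claimed identity. The main obstacle is the joint measurability of the partial derivative of the composition $F \circ \gamma$ --- a genuinely delicate point since $F$ is merely Lipschitz and need not be differentiable --- but this is precisely handled by Lemma \ref{lemma-direct}; the only other care needed is the domination bound $\|\frac{\partial}{\partial u} F(\gamma(u),s)\| \leq K_F\|\dot\gamma(u)\|$ justifying Fubini, which follows directly from the Lipschitz property of $F$.
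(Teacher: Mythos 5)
Your proposal is correct and follows essentially the same route as the paper: fundamental theorem of calculus for each fixed $s$, joint measurability of the partial derivative via Lemma \ref{lemma-direct}, the domination bound $\|\frac{\partial}{\partial u}F(\gamma(u),s)\| \leq K_F\|\dot\gamma(u)\|$ to justify Fubini, and the Lebesgue differentiation theorem to conclude. The only cosmetic difference is that you establish absolute continuity of $r \mapsto \int_X F(\gamma(r),s)\,ds$ directly from the Lipschitz bound at the start, whereas the paper reads it off at the end from the integral representation; both are valid.
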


\begin{proof}
Since $F$ is Lipchitz continuous and $\gamma$ is absolutely continuous, then, for all $s\in X$, the function $r\mapsto F(\gamma(r),s)$ is absolutely continuous as the composition of a Lipschitz function with an absolutely continuous function. In particular, it is differentiable for a.e. $r\in [0,1]$.

Furthermore, since the function $(r,s)\mapsto F(\gamma(r),s)$ is continuous, one can prove that, due to Lemma \ref{lemma-direct}, the function:
\begin{equation}
\label{eq:partial-jointly}
(r,s)\mapsto \frac{\partial }{\partial r} F(\gamma(r),s)
\end{equation}
is well defined for all $s\in X$ and for a.e. $r\in [0,1]$. It is also jointly measurable in $(r,s)$ (up to arbitrary values outside of its full measure domain of definition). Denoting by $L$ a Lipschitz constant of $F$, we have we have for all $s$ and almost all $r$
\begin{equation}
\label{eq:partial-jointly-Modulus}
\left\|\frac{\partial }{\partial r} F(\gamma(r),s)\right\| \leq L \|\dot{\gamma}(r)\| 
\end{equation}
by definition of the derivative.

Then, using again the fact that, for all $s\in X$, $r\mapsto F(\gamma(r),s)$ is absolutely continuous, then one has, for all $s\in X$ and $r \in [0,1]$,
\begin{equation}
F(\gamma(r),s) - F(\gamma(0),s) = \int_0^r \frac{\partial}{\partial q} F(\gamma(q),s) dq. 
\end{equation}
Integrating the previous equation over the domain $X$ leads to
\begin{equation*}
\int_X [F(\gamma(r),s) - F(\gamma(0),s)] ds = \int_X \int_0^r \frac{\partial}{\partial q} F(\gamma(q),s) dq ds. 
\end{equation*}

Since the function given by \eqref{eq:partial-jointly} is jointly measurable, Fubini's theorem \cite[Theorem 11.27]{aliprantis2005infinite} applies and one has for all $r \in [0,1]$
\begin{equation*}
\int_X [F(\gamma(r),s) - F(\gamma(0),s)] ds = \int_0^r \int_X \frac{\partial}{\partial q} F(\gamma(q),s) ds dq. 
\end{equation*}
This proves the desired result because \eqref{eq:leibnizAbsCont} is a consequence of Lebesgue differentiation theorem \cite[Section 6.5]{royden2010real}, for all $q$,
\begin{align*}
    \left\|\int_X \frac{\partial}{\partial q} F(\gamma(q),s)  ds\right\| \leq L \|\dot{\gamma}(q)\| \times \int_X ds
\end{align*} 
where right hand side is integrable because $X$ is a bounded interval and $\gamma$ is absolutely continuous (its derivative is integrable). This concludes the proof.\end{proof}

Next, we provide a generalization of the Gr\"onwall's inequality for absolutely continuous functions. The proof is inspired by the proof of \cite[Theorem 2]{filippov1988differential}.

\begin{lemma}[Gr\"onwall's Lemma for absolutely continuous functions]
\label{lem:gronwall}
    Let $K$ be a constant and $f:\mathbb{R}_+\rightarrow \mathbb{R}_+$ be absolutely continuous on $[0,T]$ such that, for a.e. $t\in [0,T]$:
    \begin{equation}
    \label{gronwal-inequality}
    \frac{d}{dt} f(t) \leq K f(t).
    \end{equation}
    Then, $f(t) \leq \exp(Kt) f(0)$ for all $t\in [0,T]$.
\end{lemma}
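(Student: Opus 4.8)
The plan is to linearize the problem by introducing an integrating factor so that the differential inequality collapses into a simple monotonicity statement. First I would define
\begin{equation*}
g(t) := f(t)\exp(-Kt), \qquad t \in [0,T].
\end{equation*}
Since $f$ is absolutely continuous on $[0,T]$ and $t \mapsto \exp(-Kt)$ is Lipschitz (hence absolutely continuous) on the bounded interval $[0,T]$, the product $g$ is itself absolutely continuous. Consequently $g$ is differentiable for almost every $t \in [0,T]$, and at every such point the product rule for absolutely continuous functions yields
\begin{equation*}
\dot{g}(t) = \dot{f}(t)\exp(-Kt) - K f(t)\exp(-Kt) = \exp(-Kt)\bigl(\dot{f}(t) - K f(t)\bigr).
\end{equation*}

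The second step is to exploit the hypothesis \eqref{gronwal-inequality}. Since $\exp(-Kt) > 0$ and $\dot{f}(t) - K f(t) \leq 0$ for almost all $t$, we obtain $\dot{g}(t) \leq 0$ for almost all $t \in [0,T]$. Because $g$ is absolutely continuous, the fundamental theorem of calculus applies, so for any $t \in [0,T]$,
\begin{equation*}
g(t) - g(0) = \int_0^t \dot{g}(s)\, ds \leq 0,
\end{equation*}
which shows that $g$ is non-increasing on $[0,T]$. In particular $g(t) \leq g(0) = f(0)$, and multiplying through by $\exp(Kt)$ gives $f(t) \leq f(0)\exp(Kt)$ for all $t \in [0,T]$, as desired.

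The only delicate point, and the step I expect to require the most care, is the justification that the product of two absolutely continuous functions is absolutely continuous and that the classical product rule survives almost everywhere in this setting. This is a standard fact, but it relies on at least one of the factors being bounded on the interval (here $\exp(-Kt)$ is bounded on $[0,T]$, which is exactly where boundedness of the interval is used). An entirely equivalent route, avoiding the product rule, would be to integrate the inequality directly: writing $f(t) \leq f(0) + K\int_0^t f(s)\,ds$ from the fundamental theorem of calculus and then applying the integral form of Gr\"onwall's inequality. I would favor the integrating-factor argument for its transparency, but would keep the integral reformulation in mind as a backup if the measurability or product-rule bookkeeping becomes cumbersome.
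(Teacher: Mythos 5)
Your proof is correct and follows essentially the same route as the paper: both arguments multiply by the integrating factor $\exp(-Kt)$, observe that the product $t \mapsto f(t)\exp(-Kt)$ is absolutely continuous with almost-everywhere nonpositive derivative, and conclude monotonicity via the fundamental theorem of calculus for absolutely continuous functions. Your write-up is in fact slightly more careful than the paper's (which contains a small typo in the final inequality), so no changes are needed.
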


\begin{proof}
From the inequality, one has, for a.e. $t\in [0,T]$, 
$$
\left(\frac{d}{dt} f(t) - K f(t)  \right) \exp(-K t) = \frac{d}{dt}(f(t) \exp(-Kt)) \leq 0.
$$
The function $t\mapsto f(t) \exp(-Kt)$ is absolutely continuous as a product of absolutely continuous functions with bounded domain. One deduces therefore that, for all $t\in [0,T]$, $f(t)\exp(-Kt)f(t) - f(0) \leq 0$. Then, for all $t\in [0,T]$, $f(t)\leq f(0) \exp(Kt)$, which concludes the proof of the lemma.
\end{proof}

Associated to this inequality, one may deduce a Gr\"onwall inequality for integrable functions, as stated in the following lemma.

\begin{lemma}[Gr\"onwall's inequality for integrable functions]
\label{lem:gronwall_int}
Let $K$ be a positive constant and $f \colon \RR_+ \to \RR_+$ be integrable on $[0,T]$, such that for all $t \in [0,T]$
\begin{align*}
    f(t) \leq K \int_0^t f(s) ds.
\end{align*}
Then $f(t) = 0$ for all $t \in [0,T]$.
\end{lemma}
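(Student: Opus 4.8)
The plan is to reduce this integral inequality to the differential Gr\"onwall inequality already established in Lemma \ref{lem:gronwall}. First I would introduce the cumulative function
$$F(t) := \int_0^t f(s)\, ds, \qquad t \in [0,T].$$
Since $f$ is integrable on $[0,T]$, the function $F$ is absolutely continuous, and by the Lebesgue differentiation theorem \cite[Section 6.5]{royden2010real} we have $F'(t) = f(t)$ for almost all $t \in [0,T]$. Because $f$ takes values in $\RR_+$, the function $F$ is nonnegative, and moreover $F(0) = 0$.

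Next I would translate the hypothesis into a differential inequality for $F$. The assumption $f(t) \le K \int_0^t f(s)\, ds = K F(t)$, valid for all $t \in [0,T]$, combined with $F'(t) = f(t)$ a.e., yields
$$F'(t) \le K F(t) \qquad \text{for almost all } t \in [0,T].$$
Thus $F \colon [0,T] \to \RR_+$ is absolutely continuous, nonnegative, and satisfies exactly the hypotheses of Lemma \ref{lem:gronwall}.

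Applying Lemma \ref{lem:gronwall} then gives $F(t) \le \exp(Kt)\, F(0) = 0$ for all $t \in [0,T]$. Since $F \ge 0$, this forces $F(t) = 0$ for every $t$. Returning to the original inequality, $f(t) \le K F(t) = 0$, and nonnegativity of $f$ finally gives $f(t) = 0$ for all $t \in [0,T]$, which is the desired conclusion.

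I do not expect a serious obstacle here: the only point requiring genuine care is the justification that $F$ is absolutely continuous with $F' = f$ almost everywhere, which is a standard consequence of integrability and is already invoked elsewhere in the paper. Everything else is a direct appeal to the differential Gr\"onwall lemma, so the proof is essentially a two-line reduction once $F$ is set up.
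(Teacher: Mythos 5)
Your proof is correct and follows essentially the same route as the paper's: both introduce the cumulative function $t \mapsto \int_0^t f(s)\,ds$, observe it is absolutely continuous, nonnegative, and satisfies the differential inequality of Lemma \ref{lem:gronwall}, then conclude $f \le K \cdot 0 = 0$ from nonnegativity. The only difference is notational ($F$ versus the paper's $G$) plus your slightly more explicit justification of $F' = f$ a.e., which the paper leaves implicit.
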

\begin{proof}
Let $G\colon t\mapsto \int_0^t f(s) ds$. This function is absolutely continuous, nonnegative (since $f$ is nonnegative), and for almost all $t\in [0,T]$, one has $ \frac{d}{dt} G(t) \leq K G(t)$. Then, one can apply Lemma \ref{lem:gronwall} and deduce that, for all $t\in [0,T]$, $G(t)\leq \exp(Kt) G(0)$. Therefore, since $G(0) = 0$ one has $G(t) = 0$ for all $t\geq 0$, which implies that $f(t) = 0$ since $0 \leq f(t) \leq K G(t)$. This concludes the proof of the lemma.  \end{proof}

Finally, we provide a density result showing that for a set valued map $D$ to be conservative for a given function $f$, it is sufficient to prove a conservativity relation for each element of a Castaing representation of $D$.

\begin{lemma}[Measurable selections and density]
\label{lem:dense}
Consider a set-valued map $D:x\in \mathbb{R}^p \rightrightarrows D(x)\in \mathbb{R}^p$ that is locally bounded and has non empty values and a closed graph. Consider a sequence of measurable selections in $D$, denoted by $\lbrace M_i\rbrace_{i\in\mathbb{N}}$, such that, for all $x$
$$
D(x) = \overline{\lbrace M_i(x)\rbrace_{i\in\mathbb{N}}}.
$$
Then, for a given function $f;\: \mathbb{R}^p\rightarrow \mathbb{R}$, if, for all $i\in\mathbb{N}$ and for all absolutely continuous path $\gamma:\: [0,1]\rightarrow \mathbb{R}^p$, one has, for all $t\in [0,1]$
\begin{equation}
\label{eq:lem:integral}
f(\gamma(t)) - f(\gamma(0)) = \int_0^t M_i(\gamma(s))\dot{\gamma}(s) ds,\:
\end{equation}
then $D$ is a conservative Jacobian for $f$.
\end{lemma}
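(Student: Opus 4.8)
The plan is to deduce conservativity of the full set-valued map $D$ from the hypothesis, which only constrains the countably many selections $M_i$, by combining the pointwise density $D(x)=\overline{\{M_i(x)\}_i}$ with the continuity of the linear functional $W \mapsto W v$ for a fixed direction $v$. Fix an absolutely continuous $\gamma \colon [0,1] \to \mathbb{R}^p$. Applying \eqref{eq:lem:integral} to a single index $i$ already shows that $t \mapsto f(\gamma(t))$ is absolutely continuous, since the integrand $s \mapsto M_i(\gamma(s))\dot\gamma(s)$ is integrable: it is the product of the bounded map $s \mapsto M_i(\gamma(s))$, bounded by local boundedness of $D$ on the compact set $\gamma([0,1])$, with the integrable $\dot\gamma$. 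Hence $f \circ \gamma$ is differentiable almost everywhere, and differentiating \eqref{eq:lem:integral} gives, for each fixed $i$, $\frac{d}{dt} f(\gamma(t)) = M_i(\gamma(t))\dot\gamma(t)$ for almost all $t$.

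I would then intersect these full-measure sets over the countable family. Let $T_0 \subset [0,1]$ be the set of $t$ where $\dot\gamma(t)$ and $\frac{d}{dt}f(\gamma(t))$ exist and the identity $\frac{d}{dt}f(\gamma(t)) = M_i(\gamma(t))\dot\gamma(t)$ holds for every $i$ simultaneously. As a countable intersection of full-measure sets, $T_0$ has full measure, and on $T_0$ the scalar $M_i(\gamma(t))\dot\gamma(t)$ is independent of $i$, equal to the single value $\frac{d}{dt}f(\gamma(t))$.

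Now fix an arbitrary measurable selection $V(t) \in D(\gamma(t))$. For $t \in T_0$, the set $\{M_i(\gamma(t))\}_i$ is dense in $D(\gamma(t))$ and the map $W \mapsto W\dot\gamma(t)$ is continuous; since it takes the common value $\frac{d}{dt}f(\gamma(t))$ on this dense subset, it takes the same value on its closure, so in particular $V(t)\dot\gamma(t) = \frac{d}{dt}f(\gamma(t))$ for almost every $t$. The map $t \mapsto V(t)\dot\gamma(t)$ is measurable and integrable, so integrating the absolutely continuous function $f \circ \gamma$ yields $f(\gamma(1)) - f(\gamma(0)) = \int_0^1 V(t)\dot\gamma(t)\,dt$. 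Since $\gamma$ and the selection $V$ were arbitrary, and $D$ is assumed locally bounded, nonempty valued and graph closed, this is exactly the definition of a conservative Jacobian for $f$.

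I expect the main obstacle to be the passage from the countable dense family to an arbitrary measurable selection $V$. The key idea that makes it work is that conservativity only tests each matrix against the single direction $\dot\gamma(t)$, so the selection question collapses to the pointwise, one-dimensional statement that a continuous linear functional which is constant on a dense set is constant on its closure; the countability of $\{M_i\}$ is precisely what lets the derivative identity be arranged to hold simultaneously for all $i$ on one common full-measure set $T_0$.
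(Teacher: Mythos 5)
Your proof is correct and follows essentially the same route as the paper's: differentiate the integral identity for each fixed $i$, intersect the countably many full-measure sets, and use density of $\{M_i(\gamma(t))\}_i$ in $D(\gamma(t))$ together with continuity of $W \mapsto W\dot\gamma(t)$ to extend the derivative identity to all of $D(\gamma(t))$. The only cosmetic differences are that you verify the integral (measurable-selection) form of conservativity explicitly and obtain absolute continuity of $f\circ\gamma$ directly from the hypothesis, while the paper stops at the pointwise derivative form.
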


\begin{proof} Fix an absolutely continuous path $\gamma$. By assumption, since $D$ is locally bounded, we have that $f$ is locally Lipschitz and therefore $f \circ \gamma$ is absolutely continous.

For each $i \in \NN$, by Lebesgue integration theorem,
\begin{equation}
\label{eq:lem:diff}
    \frac{d}{dt} f(\gamma(t)) =  M_i(\gamma(t)) \dot{\gamma}(t),\: \text{ for a.e. $t\in [0,1]$.}
\end{equation}
For $i \in \NN$, consider the set defined by 
$$E_i:=\lbrace t\in [0,1]\text{ such that } \eqref{eq:lem:diff}\text{ holds}\rbrace \subset [0,1].$$ 
The set $E_i$ has full measure for each $i \in \NN$.
Then, we define $E:=\cap_{i\in \mathbb{N}} E_i$. Since $E$ is a countable intersection of full measure sets, its complement $E^c$ has a Lebesgue measure zero.

Therefore, for all $t\in E$, \eqref{eq:lem:diff} holds for any $i\in\mathbb{N}$. Since $\overline{\left\{M_i(\gamma(t))\right\}_{i\in\mathbb{N}}}=D(\gamma(t))$, we have
\begin{equation}
\frac{d}{dt} f(\gamma(t)) = W\gamma(t),\: \forall W\in D(\gamma(t)),
\end{equation}
for all $t \in E$. 
Since $E$ has full measure and $\gamma$ is an arbitrary absolutely continuous path, this shows that $D$ is a conservative Jacobian for $f$ and then concludes the proof of the Lemma. 
\end{proof}

Another important result is the integration by parts formula for absolutely continuous functions, that is stated as follows. 
\begin{lemma}[Integration by parts]
\label{lem:ipp}
Consider two absolutely continuous functions $f,g \colon [0,T]\rightarrow \mathbb{R}^p$. Then, the following integration by parts formula holds
\begin{equation}
\int_0^T f(t) \dot{g}(t) dt  = f(T)g(T) - f(0)g(0) - \int_0^T \dot{f}(t) g(t) dt. 
\end{equation}
\end{lemma}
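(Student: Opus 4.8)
The plan is to reduce the claim to the fundamental theorem of calculus applied to the product $h(t) = f(t)g(t)$, once this product is known to be absolutely continuous on the bounded interval $[0,T]$. Since the asserted identity is bilinear in $(f,g)$ and the product is understood coordinatewise (equivalently, as the inner product $\langle f(t),g(t)\rangle = \sum_{i=1}^p f_i(t) g_i(t)$), it suffices to prove the statement for scalar-valued absolutely continuous functions $f,g \colon [0,T] \to \RR$ and then sum over coordinates.

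First I would record that any absolutely continuous function on the compact interval $[0,T]$ is continuous, hence bounded. Then I would show that the product $h = fg$ of two bounded absolutely continuous scalar functions is absolutely continuous: for any finite collection of disjoint subintervals $(a_k,b_k)$, the pointwise estimate $|f(b_k)g(b_k) - f(a_k)g(a_k)| \leq \|g\|_\infty |f(b_k)-f(a_k)| + \|f\|_\infty |g(b_k)-g(a_k)|$ lets one bound the total increment of $h$ by $\|g\|_\infty$ times the increment sum of $f$ plus $\|f\|_\infty$ times that of $g$, both of which are small when $\sum_k (b_k - a_k)$ is small, by absolute continuity of $f$ and $g$.

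Next I would invoke the fundamental theorem of calculus for absolutely continuous functions to write $h(T) - h(0) = \int_0^T \dot h(t)\, dt$. Since $f$ and $g$ are each differentiable almost everywhere, on the full-measure set where both derivatives exist the ordinary product rule gives $\dot h(t) = \dot f(t) g(t) + f(t) \dot g(t)$; as $h$ is absolutely continuous, this pointwise a.e.\ derivative coincides with the density appearing in the integral. Substituting and rearranging then yields $\int_0^T f \dot g = f(T)g(T) - f(0)g(0) - \int_0^T \dot f g$, and summing over coordinates recovers the stated vector identity.

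The only genuine obstacle is the absolute continuity of the product together with the justification of the product rule almost everywhere; both are classical but deserve the short arguments above. The first point matters because absolute continuity need not be preserved under products in general, and it is precisely boundedness on the compact domain $[0,T]$ that rescues us; the second because one must check that the measure-theoretic derivative of $h$ agrees almost everywhere with the pointwise product-rule expression, which it does exactly on the full-measure set where $f$ and $g$ are simultaneously differentiable.
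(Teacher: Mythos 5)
Your proof is correct and follows essentially the same route as the paper: both arguments establish absolute continuity of the product $fg$ (the paper asserts it, you supply the standard boundedness estimate), then apply the fundamental theorem of calculus for absolutely continuous functions together with the almost-everywhere product rule and rearrange. The extra care you take in justifying the product's absolute continuity and the a.e.\ identification of its derivative is a welcome filling-in of details the paper leaves implicit.
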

\begin{proof}
Since $f$ and $g$ are absolutely continuous with bounded domains, then the product $fg$ is absolutely continuous. By definition of absolutely continuous functions, one has
$$
\int_0^T \frac{d}{dt}(fg)(t) dt = f(T)g(T) - f(0)g(0).
$$
Noticing that, for a.e. $t\in [0,T]$, $\frac{d}{dt}(fg)(t) = \dot{f}(t)g(t) + f(t)\dot{g}(t)$, one obtains the desired result. 
\end{proof}

We also state and prove a result stating that the space of Lipschitz functions whose domain is a given bounded interval is $\sigma$-compact.
\begin{lemma}
    The space $\mathcal{L}$ of $L$ Lipschitz functions from $[0,T]$ to $\RR^q$, equipped with the suppremum norm $\|\cdot\|_\infty$, is $\sigma$-compact and Hausdorff.
    \label{lem:lipschitzSigmaCompact}
\end{lemma}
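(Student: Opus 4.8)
The goal is to show that $\mathcal{L}$, the space of $L$-Lipschitz functions from $[0,T]$ to $\RR^q$ equipped with $\|\cdot\|_\infty$, is $\sigma$-compact and Hausdorff. The Hausdorff property is immediate since $\mathcal{L}$ is a metric space (a subset of the normed space $\mathcal{C}([0,T],\RR^q)$ with the supremum norm), and every metric space is Hausdorff. So the substance of the proof is $\sigma$-compactness, namely exhibiting $\mathcal{L}$ as a countable union of compact subsets.

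The plan is to write $\mathcal{L} = \bigcup_{n \in \NN} \mathcal{L}_n$, where $\mathcal{L}_n$ is the set of those $f \in \mathcal{L}$ with $\|f(0)\| \leq n$. First I would check that this exhausts $\mathcal{L}$: any $f \in \mathcal{L}$ has a finite value $f(0)$, hence lies in $\mathcal{L}_n$ for $n$ large enough, so indeed $\mathcal{L} = \bigcup_n \mathcal{L}_n$. Then the main task is to prove that each $\mathcal{L}_n$ is compact. Since $\mathcal{L}_n \subset \mathcal{C}([0,T],\RR^q)$ with the supremum norm, I would invoke the Arzel\`a--Ascoli theorem \cite[Theorem 4.25]{brezis2010functional}: it suffices to show that $\mathcal{L}_n$ is closed, pointwise bounded (in fact uniformly bounded), and equicontinuous.

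Equicontinuity is immediate because every $f \in \mathcal{L}_n$ shares the common Lipschitz constant $L$, so $\|f(s) - f(t)\| \leq L|s-t|$ uniformly over $\mathcal{L}_n$. Uniform boundedness follows from the Lipschitz bound together with the constraint on $f(0)$: for any $t \in [0,T]$ and $f \in \mathcal{L}_n$,
\begin{equation*}
\|f(t)\| \leq \|f(0)\| + \|f(t) - f(0)\| \leq n + LT,
\end{equation*}
which is a bound independent of $f$ and $t$. For closedness, I would take a sequence $(f_k)$ in $\mathcal{L}_n$ converging uniformly to some $f \in \mathcal{C}([0,T],\RR^q)$ and verify that the limit retains both defining properties: passing to the limit in $\|f_k(s)-f_k(t)\| \leq L|s-t|$ gives $\|f(s)-f(t)\| \leq L|s-t|$, so $f$ is $L$-Lipschitz, and passing to the limit in $\|f_k(0)\| \leq n$ gives $\|f(0)\| \leq n$; hence $f \in \mathcal{L}_n$. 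By Arzel\`a--Ascoli, $\mathcal{L}_n$ is then relatively compact, and being closed it is compact. Consequently $\mathcal{L} = \bigcup_n \mathcal{L}_n$ is $\sigma$-compact.

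The only subtlety, and the step I expect to require the most care, is the closedness argument: one must ensure that uniform limits of $L$-Lipschitz functions remain $L$-Lipschitz with the \emph{same} constant (this is why fixing $L$ rather than allowing it to vary is essential) and that the condition $\|f(0)\| \le n$ is preserved under uniform convergence. Both are straightforward continuity-of-norm arguments, so no genuine obstacle arises; the proof is essentially a bookkeeping application of Arzel\`a--Ascoli once the right exhaustion by the sets $\mathcal{L}_n$ is chosen.
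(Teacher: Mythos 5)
Your proof is correct and follows essentially the same route as the paper: both exhaust $\mathcal{L}$ by countably many uniformly bounded, equicontinuous subfamilies and apply Arzel\`a--Ascoli, the only cosmetic difference being that the paper slices by sup-norm balls $\mathcal{L} \cap \mathcal{B}_\infty(n)$ while you slice by the condition $\|f(0)\| \leq n$, and these exhaustions are mutually cofinal since $\mathcal{L} \cap \mathcal{B}_\infty(n) \subset \mathcal{L}_n \subset \mathcal{L} \cap \mathcal{B}_\infty(n+LT)$. If anything, your write-up is slightly more careful, since you explicitly verify that each slice is closed (Arzel\`a--Ascoli alone only gives relative compactness), a point the paper's proof glosses over.
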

\begin{proof}
    For any $\bar{u} \in \mathcal{L}$ and $\bar{v} \in \mathcal{L}$, with $\bar{u} \neq \bar{v}$ we have $\|\bar{u} - \bar{v}\|_\infty > 0$ and therefore $U = \{u \in \mathcal{L},\, \|u - \bar{u}\|_\infty <  \|\bar{u} - \bar{v}\|_\infty / 4 \}$ and $V = \{v \in \mathcal{L},\, \|v - \bar{v}\|_\infty <  \|\bar{u} - \bar{v}\|_\infty / 4 \}$ form two disjoint neighborhoods of $\bar{u}$ and $\bar{v}$ and we have Hausdorff separation condition. Furthermore, denoting by $\mathcal{B}_\infty (s)$ the ball centered at $0$ of radius $s>0$ in $\mathcal{L}$, we have $\mathcal{L} = \cup_{i \in \NN} \mathcal{L} \cap \mathcal{B}_\infty (i)$. Since all the functions of this set are $L$-Lipschitz and bounded, each element of the union is sequentially compact using Arzel\`a-Ascoli Theorem \cite[Theorem 4.25]{brezis2010functional}. Thus $\mathcal{L}$ is a countable union of compact subspaces, meaning that the space $\mathcal{L}$ is $\sigma$-compact.
\end{proof}

We now state and prove a result dealing with the projection of conservative Jacobians. 
\begin{lemma}
\label{lem:projection}
Let $G(x,y):\mathbb{R}^{p+m}\rightarrow \mathbb{R}^n$ be a path-differentiable function whose conservative Jacobian is denoted by $J_G:\mathbb{R}^{p+m} \rightrightarrows \mathbb{R}^{n\times(p+m)}$. Consider
\[
\Pi_y J_G(x,y):= \lbrace M_2\in \mathbb{R}^{n\times m}, \exists M_1\in\mathbb{R}^{n\times p}, (M_1,M_2)\in J_G(x,y)\rbrace. 
\]
Then, for all $x\in\mathbb{R}^p$, $\Pi_y J_G(x,y)$ is conservative for the function $y\mapsto F(x,y)$.
\end{lemma}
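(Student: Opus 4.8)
The plan is to verify directly the two components of Definition 1 for the set-valued map $y \rightrightarrows \Pi_y J_G(x,y)$ at a fixed $x$: first its structural regularity (nonempty values, local boundedness, closed graph), and then the chain rule along absolutely continuous curves. The unifying idea is to lift any curve in $\mathbb{R}^m$ to a curve in $\mathbb{R}^{p+m}$ by freezing the first coordinate at $x$, and to observe that a velocity whose first block vanishes annihilates the $M_1$ block of the Jacobian, leaving exactly the projected block $M_2$.

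First I would fix $x\in\mathbb{R}^p$ and check that $y\rightrightarrows \Pi_y J_G(x,y)$ inherits the required regularity from $J_G$. Nonemptiness and local boundedness are immediate, since projecting a nonempty (respectively bounded) set onto its last $m$ columns preserves these properties. The closed graph is the only delicate point: given sequences $y_k\to y$ and $M_2^k\to M_2$ with $M_2^k\in \Pi_y J_G(x,y_k)$, by definition of the projection there exist blocks $M_1^k$ with $(M_1^k,M_2^k)\in J_G(x,y_k)$; local boundedness of $J_G$ forces $(M_1^k)$ to be bounded, so a subsequence converges to some $M_1$, and closedness of the graph of $J_G$ then yields $(M_1,M_2)\in J_G(x,y)$, hence $M_2\in\Pi_y J_G(x,y)$.

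Next I would establish the chain rule. Fix an absolutely continuous curve $\eta\colon [0,1]\to\mathbb{R}^m$ and set $\gamma\colon [0,1]\to\mathbb{R}^{p+m}$, $\gamma(t)=(x,\eta(t))$; this is absolutely continuous with $\dot\gamma(t)=(0,\dot\eta(t))$ for almost all $t$. Applying conservativity of $J_G$ along $\gamma$, for almost all $t$ and every $(M_1,M_2)\in J_G(\gamma(t))$,
\[
\frac{d}{dt}G(x,\eta(t)) = \begin{pmatrix} M_1 & M_2 \end{pmatrix}\begin{pmatrix} 0 \\ \dot\eta(t)\end{pmatrix} = M_2\,\dot\eta(t).
\]
As $(M_1,M_2)$ ranges over $J_G(x,\eta(t))$, the block $M_2$ ranges over exactly $\Pi_y J_G(x,\eta(t))$ by definition of the projection, so the identity holds for all $M_2\in \Pi_y J_G(x,\eta(t))$. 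This is precisely the conservativity relation for $y\mapsto G(x,y)$, and combined with the first step it gives the claim.

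The main (and essentially only) obstacle is the closed graph verification in the first step: projecting a set-valued map with closed graph does \emph{not} in general yield a closed graph, and it is exactly the uniform local boundedness of the discarded block $M_1$ that rescues the argument by supplying a convergent subsequence. Everything else follows mechanically from the lifting $\gamma(t)=(x,\eta(t))$ and the hypothesis that $J_G$ is a conservative Jacobian for $G$.
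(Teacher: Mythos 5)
Your proof is correct and follows essentially the same route as the paper: both arguments lift a curve $\eta$ in $\mathbb{R}^m$ to a curve in $\mathbb{R}^{p+m}$ with frozen first coordinate, so that the zero velocity block annihilates $M_1$ and the chain rule for $J_G$ reduces to the conservativity relation for $\Pi_y J_G$. Your treatment is in fact slightly more complete, since you verify the closed-graph property of the projected map (via local boundedness of the discarded block), a point the paper's proof passes over silently, and you correctly take the lift to be $(x,\eta(t))$ where the paper's text writes $(0,\gamma(t))$, an apparent typo.
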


\begin{proof}
Consider an absolute continuous function $\gamma: [0,1]\rightarrow \mathbb{R}^m$. Then, the function
\begin{align*}
\tilde \gamma: [0,1]&\rightarrow \mathbb{R}^{p+m}\\
t&\mapsto \begin{pmatrix}
    0 \\ \gamma(t)
\end{pmatrix}
\end{align*}
is also an absolute continuous function. Then, for every $x\in \RR^p$, it is clear that
$$
J_F(\tilde{\gamma}(t))\dot{\tilde{\gamma}}(t) = \Pi_y J_F(x,\gamma(t)) \dot{\gamma}(t).
$$
From this identity, and by definition of the conservativity, one can show that, for every $x\in \mathbb{R}^p$, $\Pi_y J_F(x,\gamma(t))$ is conservative for $y\mapsto G(x,y)$, which concludes the proof.\end{proof}

\end{document}